\theoremstyle{plain}
\newtheorem{theorem}{Theorem}[section]
\newtheorem{lemma}[theorem]{Lemma}
\theoremstyle{definition}
\newtheorem{definition}[theorem]{Definition}
\newtheorem{assumption}{Assumption}
\newtheorem{corollary}[theorem]{Corollary}
\theoremstyle{example}
\newtheorem{example}{Example}
\theoremstyle{remark}
\newtheorem*{remark}{Remark}
\theoremstyle{remark}
\DeclarePairedDelimiter\ceil{\lceil}{\rceil}
\DeclarePairedDelimiter\floor{\lfloor}{\rfloor}
\begin{document}
\begin{frontmatter}
\title{Early stopping and polynomial smoothing in regression with reproducing kernels}
\runtitle{Early stopping and polynomial smoothing}

\begin{aug}
\author[A]{\fnms{Yaroslav}~\snm{Averyanov}\ead[label=e1]{yaroslavmipt@gmail.com}}
\and
\author[B]{\fnms{Alain}~\snm{Celisse}\ead[label=e2]{alain.celisse@univ-paris1.fr}}
\address[A]{Inria MODAL project-team\printead[presep={,\ }]{e1}}

\address[B]{Laboratoire SAMM,
Paris 1 Panthéon-Sorbonne University\printead[presep={,\ }]{e2}}
\runauthor{Y. Averyanov and A. Celisse}
\end{aug}

\begin{abstract}
In this paper, we study the problem of early stopping for iterative learning algorithms in a reproducing kernel Hilbert space (RKHS) in the nonparametric regression framework. In particular, we work with the gradient descent and (iterative) kernel ridge regression algorithms. We present a \textit{data-driven} rule to perform early stopping without a validation set that is based on the so-called minimum discrepancy principle. This method enjoys only one assumption on the regression function: it belongs to a reproducing kernel Hilbert space (RKHS). The proposed rule is proved to be minimax-optimal over different types of kernel spaces, including finite-rank and Sobolev smoothness classes. The proof is derived from the fixed-point analysis of the localized Rademacher complexities, which is a standard technique for obtaining optimal rates in the nonparametric regression literature. In addition to that, we present simulation results on artificial datasets that show the comparable performance of the designed rule with respect to other stopping rules such as the one determined by $V-$fold cross-validation.
\end{abstract}

\begin{keyword}[class=MSC]
\kwd[Primary ]{62G05}
\kwd[; secondary ]{62G08}
\end{keyword}

\begin{keyword}
\kwd{Nonparametric regression}
\kwd{Reproducing kernels}
\kwd{Early stopping}
\kwd{Localized Rademacher complexities}
\end{keyword}

\end{frontmatter}

\section{Introduction}

\textit{Early stopping rule} (ESR) is a form of regularization based on choosing when to stop an iterative algorithm based on some design criterion. Its main idea is lowering the computational complexity of an iterative algorithm while preserving its statistical optimality. This approach is quite old and initially was developed for Landweber iterations to solve ill-posed matrix problems in the 1970s \cite{engl1996regularization, wahba1987three}. 
%
Recent papers provided some insights for the connection between early stopping and boosting methods \cite{bartlett2007adaboost, buhlmann2003boosting, wei2017early, zhang2005boosting}, gradient descent, and Tikhonov regularization in a reproducing kernel Hilbert space (RKHS) \cite{bauer2007regularization, raskutti2014early, Yao2007}. For instance, \cite{buhlmann2003boosting} established the first optimal in-sample convergence rate of $L^2$-boosting with early stopping. Raskutti et al. \cite{raskutti2014early} provided a result on a stopping rule that achieves the minimax-optimal rate for kernelized gradient descent and ridge regression over different smoothness classes. This work established an important connection between the localized Radamacher complexities \cite{bartlett2005local, koltchinskii2006local, wainwright2019high}, that characterizes the size of the explored function space, and early stopping. The main drawback of the result is that one needs to know the RKHS-norm of the regression function or its tight upper bound in order to apply this early stopping rule in practice. Besides that, this rule is design-dependent, which limits its practical application. In the subsequent work, \cite{wei2017early} showed how to control early stopping optimality via the localized Gaussian complexities in RKHS for different boosting algorithms ($L^2$-boosting, LogitBoost, and AdaBoost). Another theoretical result for a not data-driven ESR was built by \cite{blanchard2016convergence}, where the authors proved a minimax-optimal (in the ${L}_2(\mathbb{P}_X)$ out-of-sample norm) stopping rule for conjugate gradient descent in the nonparametric regression setting. \cite{2015arXiv151005684A} proposed a different approach, where the authors focused on both time/memory computational savings, combining early stopping with Nystrom subsampling technique. 

Some stopping rules, that (potentially) could be applied in practice, were provided by \cite{blanchard2018optimal, blanchard2018early} and \cite{stankewitz2019smoothed}, and were based on the so-called \textit{minimum discrepancy principle} \cite{blanchard2016convergence, blanchard2012discrepancy, engl1996regularization, hansen2010discrete}. This principle consists of monitoring the empirical risk and determining the first time at which a given learning algorithm starts to fit the noise. In the papers mentioned, the authors considered spectral filter estimators such as gradient descent, Tikhonov (ridge) regularization, and spectral cut-off regression for the linear Gaussian sequence model, and derived several oracle-type inequalities for the proposed ESR. The main deficiency of the works \cite{blanchard2018optimal, blanchard2018early, stankewitz2019smoothed} is that the authors dealt only with the linear Gaussian sequence model, and the minimax optimality result was restricted to the spectral cut-off estimator. It is worth mentioning that \cite{stankewitz2019smoothed} introduced the so-called \textit{polynomial smoothing} strategy to achieve the optimality of the minimum discrepancy principle ESR over Sobolev balls for the spectral cut-off estimator. More recently, \cite{celisse2021analyzing} studied a minimum discrepancy principle stopping rule and its modified (they called it smoothed as well) version, where they provided the range of values of the regression function regularity, for which these stopping rules are optimal for different spectral filter estimators in RKHS. 

\textbf{Contribution.} Hence, to the best of our knowledge, there is no \textit{fully data-driven} stopping rule for gradient descent or ridge regression in RKHS that does not use a validation set, does not depend on the parameters of the model such as the RKHS-norm of the regression function, and explains why it is statistically optimal. In our paper, we combine techniques from \cite{blanchard2018optimal}, \cite{raskutti2014early}, and \cite{stankewitz2019smoothed} to construct such an ESR. Our analysis is based on the bias and variance trade-off of an estimator, and we try to catch the iteration of their intersection by means of the \textit{minimum discrepancy principle} \cite{blanchard2018optimal, blanchard2012discrepancy, celisse2021analyzing} and the \textit{localized Rademacher complexities} \cite{bartlett2005local, koltchinskii2006local, mendelson2002geometric, wainwright2019high}. In particular, for the kernels with infinite rank, we propose to use a special technique \cite{blanchard2012discrepancy, stankewitz2019smoothed} for the empirical risk in order to reduce its variance. Further, we introduce new notions of \textit{smoothed empirical Rademacher complexity} and \textit{smoothed critical radius} to achieve minimax optimality bounds for the functional estimator based on the proposed rule. This can be done by solving the associated fixed-point equation. It implies that the bounds in our analysis cannot be improved (up to numeric constants). It is important to note that in the present paper, we establish an important connection between a smoothed version of the \textit{statistical dimension} of $n$-dimensional kernel matrix, introduced by \cite{yang2017randomized} for randomized projections in kernel ridge regression, with early stopping (see Section \ref{optimality_section} for more details). We also show how to estimate the variance $\sigma^2$ of the model, in particular, for the infinite-rank kernels. In the meanwhile, we provide experimental results on artificial data indicating the consistent performance of the proposed rules.

\textbf{Outline of the paper.} The organization of the paper is as follows. In Section \ref{sec:2}, we introduce the background on nonparametric regression and reproducing kernel Hilbert space. There, we explain the updates of two spectral filter iterative algorithms: gradient descent and (iterative) kernel ridge regression, that will be studied. In Section \ref{sec:3}, we clarify how to compute our first early stopping rule for finite-rank kernels and provide an oracle-type inequality (Theorem \ref{th:1}) and an upper bound for the risk error of this stopping rule with fixed covariates (Corollary \ref{corollary_empirical_norm}). After that, we present a similar upper bound for the risk error with random covariates (Theorem \ref{th:2}) that is proved to be minimax-rate optimal. By contrast, Section \ref{sec:4} is devoted to the development of a new stopping rule for infinite-rank kernels based on the \textit{polynomial smoothing} \cite{blanchard2012discrepancy, stankewitz2019smoothed} strategy. There, Theorem \ref{th:3} shows, under a quite general assumption on the eigenvalues of the kernel operator, a high probability upper bound for the performance of this stopping rule measured in the $L_2(\mathbb{P}_n)$ in-sample norm. In particular, this upper bound leads to minimax optimality over Sobolev smoothness classes. In Section \ref{sec:5}, we compare our stopping rules to other rules, such as methods using hold-out data and $V-$fold cross-validation. After that, we propose using a strategy for the estimation of the variance $\sigma^2$ of the regression model. Section \ref{sec:6} summarizes the content of the paper and describes some perspectives. Supplementary and more technical proofs are deferred to Appendix.

\section{Nonparametric regression and reproducing kernel framework} \label{sec:2}

\subsection{Probabilistic model and notation}

The context of the present work is that of nonparametric regression, where an i.i.d. sample $\{(x_i, y_i), \ i=1, \ldots, n \}$ of cardinality $n$ is given, with $x_i \in \mathcal{X} \ (\textnormal{feature space})$ and $\ y_i \in \mathbb{R}$. 
The goal is to estimate the regression function $f^*: \mathcal{X} \to \mathbb{R}$ from the model
\begin{equation}\label{main}
    y_i = f^*(x_i) + \overline{\varepsilon}_i, \qquad i = 1, \ldots, n,
\end{equation}
where the error variables $\overline{\varepsilon}_i$ are i.i.d. zero-mean Gaussian random variables $\mathcal{N}(0, \sigma^2)$, with $\sigma > 0$. In all what follows (except for Section \ref{sec:5}, where results of empirical experiments are reported), the values of $\sigma^2$ is assumed to be known as in \cite{raskutti2014early} and \cite{wei2017early}.

Along the paper, calculations are mainly derived in the \emph{fixed-design} context, where the $\{ x_i \}_{i=1}^n$ are assumed to be fixed, and only the error variables $\{ \overline{\varepsilon}_i \}_{i=1}^n$ are random. 
In this context, the performance of any estimator $\widehat{f}$ of the regression function $f^*$ is measured in terms of the so-called \emph{empirical norm}, that is, the $L_2(\mathbb{P}_n)$-norm defined by
\begin{equation*}
    \lVert \widehat{f} - f^* \rVert_n^2 \coloneqq \frac{1}{n}\sum_{i=1}^n \Big[ \widehat{f}(x_i) - f^*(x_i) \Big]^2 , 
\end{equation*}
where $\lVert h \rVert_n \coloneqq \sqrt{ 1/n \sum_{i=1}^n h(x_i)^{2} }$ for any bounded function $h$ over $\mathcal{X}$, and $\langle \cdot, \cdot \rangle_n$ denotes the related inner-product defined by $\langle h_1, h_2 \rangle_n \coloneqq 1/n \sum_{i=1}^n h_1(x_i) h_2(x_i)$ for any functions $h_1$ and $h_2$ bounded over $\mathcal{X}$.
In this context, $\mathbb{P}_{\varepsilon}$ and $\mathbb{E}_{\varepsilon}$ denote the probability and expectation, respectively, with respect to the $\{ \overline{\varepsilon}_i\}_{i=1}^n$.

By contrast, Section~\ref{sec.random.design} discusses some extensions of the previous results to the \emph{random design} context, where both the covariates $\{ x_i \}_{i=1}^n$ and the responses $\{ y_i \}_{i=1}^n$ are random variables. 
In this random design context, the performance of an estimator $\widehat{f}$ of $f^*$ is measured in terms of the $L_2(\mathbb{P}_X)$-norm defined by
\begin{equation*}
    \lVert \widehat{f} - f^* \rVert_2^2 \coloneqq \mathbb{E}_{X} \Big[ (\widehat{f}(X) - f^*(X))^2 \Big],
\end{equation*}
where $\mathbb{P}_X$ denotes the probability distribution of the $\{ x_i \}_{i=1}^n$.
In what follows,
$\mathbb{P}$ and $\mathbb{E}$, respectively, state for the probability and expectation with respect to the couples
$\{ (x_i,y_i) \}_{i=1}^n$.

\paragraph{Notation.} 
Throughout the paper, $\lVert \cdot \rVert$ and $\langle \cdot, \cdot \rangle$ are the usual Euclidean norm and inner product in $\mathbb{R}^n$. 
We shall write $a_n \lesssim b_n$ whenever $a_n \leq C b_n$ for some numeric constant $C > 0$ for all $n \geq 1$. $a_n \gtrsim b_n$ whenever $a_n \geq C b_n$ for some numeric constant $C > 0$ for all $n \geq 1$. Similarly, $a_n \asymp b_n$ means $a_n \lesssim b_n$ and $b_n \gtrsim a_n$. $\left[ M \right] \equiv \{1, \ldots, M \}$ for any $M \in \mathbb{N}$. For $a \geq 0$, we denote by $\floor*{a}$ the largest natural number that is smaller than or equal to $a$. We denote by $\ceil*{a}$ the smallest natural number that is greater than or equal to $a$. Throughout the paper, we use the notation $c, c_1, \widetilde{c}, C, \widetilde{C}, \ldots$ to show that numeric constants $c, c_1, \widetilde{c}, C, \widetilde{C}, \ldots$ do not depend on the parameters considered. Their values may change 
from line to line.

\subsection{Statistical model and assumptions}
\subsubsection{Reproducing Kernel Hilbert Space (RKHS)}
Let us start by introducing a reproducing kernel Hilbert space (RKHS) denoted by $\mathcal{H}$ \cite{aronszajn1950theory, berlinet2011reproducing, gu2013smoothing, wahba1990spline}.
Such a RKHS $\mathcal{H}$ is a class of functions associated with a \emph{reproducing kernel} $\mathbb{K}: \mathcal{X}^2 \to \mathbb{R}$ and endowed with an inner-product denoted by $\langle \cdot,\cdot \rangle_{\mathcal{H}}$, and satisfying $\langle \mathbb{K}(\cdot, x), \mathbb{K}(\cdot, y) \rangle_{\mathcal{H}} = \mathbb{K}(x,y)$ for all $x,y\in\mathcal{X}$.
Each function within $\mathcal{H}$ admits a representation as an element of $L_2(\mathbb{P}_{X})$, which justifies the slight abuse when writing $\mathcal{H} \subset L_2(\mathbb{P}_{X})$ (see \cite{cucker2002mathematical} and \cite[Assumption~3]{celisse2021analyzing}).

Assuming the RKHS $\mathcal{H}$ is separable, under suitable regularity conditions (e.g., a continuous positive-semidefinite kernel), Mercer's theorem \cite{scholkopf2001learning}  guarantees that the kernel can be expanded as
$$\mathbb{K}(x, x^\prime) = \sum_{k=1}^{+\infty} \mu_k \phi_k(x) \phi_k(x^\prime),\quad \forall  x,x^\prime \in\mathcal{X},$$ 
where $\mu_1 \geq \mu_2 \geq \ldots \geq 0$ and $\{ \phi_k \}_{k=1}^{+\infty}$ are, respectively, the eigenvalues and corresponding eigenfunctions of the kernel integral operator $T_{\mathbb{K}}$, given by
\begin{align}\label{kernel.integral.operator}
    T_{\mathbb{K}}(f)(x) = \int_{\mathcal{X}} \mathbb{K}(x,u) f(u) d\mathbb{P}_X(u),\quad \forall f \in L_2(\mathbb{P}_X), \ x \in \mathcal{X}.
\end{align}
It is then known that the family $\{ \phi_k \}_{k=1}^{+\infty}$ is an orthonormal basis of $L_2(\mathbb{P}_{X})$, while $\{ \sqrt{\mu_k} \phi_k \}_{k=1}^{+\infty}$ is an orthonormal basis of $\mathcal{H}$.
Then, any function $f \in \mathcal{H} \subset L_2(\mathbb{P}_{X})$ can be expanded as $f = \sum_{k=1}^{+\infty} \sqrt{\mu_k} \theta_k \phi_k$,
where for all $k$ such that $\mu_k > 0$, the coefficients $\{ \theta_k \}_{k=1}^{\infty}$ are 
\begin{equation} \label{coefficients}
    \theta_k = \langle f, \sqrt{\mu_k}\phi_k \rangle_{\mathcal{H}} = \frac{1}{\sqrt{\mu_k}} \langle f, \phi_k \rangle_{L_2(\mathbb{P}_X)} = \int_{\mathcal{X}} \frac{ f(x) \phi_k(x)}{\sqrt{\mu_k}} d \mathbb{P}_{X}(x).    
\end{equation}
Therefore, each functions $f,g \in \mathcal{H}$ can be represented by the respective sequences $\{ a_k \}_{k=1}^{+\infty}, \{ b_k \}_{k=1}^{+\infty} \in \ell_2(\mathbb{N})$ such that
\begin{equation*}
f = \sum_{k=1}^{+\infty} a_k \phi_k, \quad \mbox{and} \quad g = \sum_{k=1}^{+\infty} b_k \phi_k ,
\end{equation*}
with the inner-product in the Hilbert space $\mathcal{H}$ given by $\langle f, g \rangle_{\mathcal{H}} = \sum_{k=1}^{+\infty} \frac{a_k b_k}{\mu_k}.$ This leads to the following representation of $\mathcal{H}$ as an  ellipsoid
\begin{align*}
    \mathcal{H} = \left\{ f = \sum_{k=1}^{+\infty} a_k \phi_k,\quad \sum_{k=1}^{+\infty} a_k^2<+\infty, \mbox{ and } \sum_{k=1}^{+\infty} \frac{a_k^2}{\mu_k}<+\infty \right\} .
\end{align*}

\subsubsection{Main assumptions}

From the initial model given by Eq. \eqref{main}, we make the following assumption.
\begin{assumption}[Statistical model] \label{a1}
Let $\mathbb{K}(\cdot, \cdot)$ denote a reproducing kernel as defined above, and $\mathcal{H}$ is the induced separable RKHS. Then, there exists a constant $R > 0$ such that the $n$-sample $(x_1,y_1), \ldots,(x_n,y_n) \in\mathcal{X}^n \times \mathbb{R}^n$ satisfies the statistical model
\begin{align}\label{assum.a1}
    y_i = f^*(x_i) + \overline{\varepsilon}_i, \quad \mbox{with}\quad f^* \in \mathbb{B}_{\mathcal{H}}(R) = \{ f \in \mathcal{H}: \lVert f \rVert_{\mathcal{H}} \leq R \},
\end{align}
where the $\{ \overline{\varepsilon}_i \}_{i=1}^n$ are i.i.d. Gaussian random variables with $\mathbb{E}[\overline{\varepsilon}_i \mid x_i] = 0$ and $\mathbb{V}[\overline{\varepsilon}_i \mid x_i] = \sigma^2$.
\end{assumption}
The model from Assumption~\ref{a1} can be vectorized as
\begin{equation}\label{vector.model}
    Y = [y_1, ..., y_n]^\top = F^* + \overline{\varepsilon} \in \mathbb{R}^n,
\end{equation}
where $F^* = [f^*(x_1), \ldots, f^*(x_n)]^\top$ and $\overline{\varepsilon} = [\overline{\varepsilon}_1, \ldots, \overline{\varepsilon}_n]^\top$, which turns to be useful all along the paper. 
%
%
%
%
%
%

In the present paper, we make a boundness assumption on the reproducing kernel $\mathbb{K}(\cdot, \cdot)$.
\begin{assumption} \label{a2}
Let us assume that the measurable reproducing kernel $\mathbb{K}(\cdot, \cdot)$ is uniformly bounded on its support, meaning that there exists a constant $B > 0$ such that $$\underset{x \in \mathcal{X}}{\sup} \Big[ \mathbb{K}(x, x) \Big]  = \underset{x \in \mathcal{X}}{\sup} || \mathbb{K}(\cdot, x) ||_{\mathcal{H}}^2 \leq B .$$
Moreover in what follows, we assume that $B=1$ without loss of generality.
\end{assumption}

Assumption \ref{a2} holds for many kernels. On the one hand, it is fulfilled with an unbounded domain $\mathcal{X}$ with a bounded kernel (e.g., Gaussian, Laplace kernels). On the other hand, it amounts to assume the domain $\mathcal{X}$ is bounded with an unbounded kernel such as the polynomial or Sobolev kernels \cite{scholkopf2001learning}. 
Let us also mention that Assumptions~\ref{a1} and~\ref{a2} (combined with the reproducing property) imply that $f^*$ is uniformly bounded since 
\begin{equation} \label{h_norm_infty_norm}
    \lVert f^* \rVert_{\infty} = \underset{x \in \mathcal{X}}{\sup} \left| \langle f^*, \mathbb{K}(\cdot, x) \rangle_{\mathcal{H}} \right| \leq \lVert f^* \rVert_{\mathcal{H}} \underset{x \in \mathcal{X}}{\sup}\lVert \mathbb{K}(\cdot, x) \rVert_{\mathcal{H}} \leq R.    
\end{equation}

\medskip

Considering now the Gram matrix $K = \{\mathbb{K}(x_i, x_j)\}_{1\leq i,j\leq n}$, the related \emph{normalized Gram matrix} $K_n = \{ \mathbb{K}(x_i, x_j) / n\}_{1\leq i,j \leq n}$ turns out to be symmetric and positive semidefinite. This entails the existence of 
the empirical eigenvalues $\widehat{\mu}_1, \ldots, \widehat{\mu}_n$ (respectively, the eigenvectors $\widehat{u}_1, \ldots, \widehat{u}_n$) such that $K_n \widehat{u}_i = \widehat{\mu}_i \cdot \widehat{u}_i $ for all $i \in [n]$.
Remark that Assumption~\ref{a2} implies $0\leq \max( \widehat{\mu}_1,\mu_1) \leq 1$.

For technical convenience, it turns out to be useful rephrasing the model \eqref{vector.model} by using the SVD of the normalized Gram matrix $K_n$. This leads to the new (rotated) model
\begin{equation} \label{rotated_model}
    Z_i = \langle \widehat{u}_i, Y \rangle = G_i^* + \varepsilon_i, \quad  i = 1, \ldots, n,
\end{equation}
where $G_i^* = \langle \widehat{u}_i, F^* \rangle $, and $\varepsilon_i = \langle \widehat{u}_i, \overline{\varepsilon} \rangle$ is a zero-mean Gaussian random variable with the variance $\sigma^2$.

\subsection{Spectral filter algorithms}
\label{sec.spectral.filters}
Spectral filter algorithms were first introduced for solving ill-posed inverse problems with deterministic noise \cite{engl1996regularization}. Among others, one typical example of such an algorithm is the gradient descent algorithm (that is named as well as $L^2$-boosting \cite{buhlmann2003boosting}). They were more recently brought to the supervised learning community, for instance, by \cite{bauer2007regularization, article, gerfo2008spectral, Yao2007}. 
For estimating the vector $F^*$ from Eq. \eqref{vector.model} in the fixed-design context, such a spectral filter estimator is a linear estimator, which can be expressed as 
\begin{align}\label{spectral.estimator.vector}
    F^\lambda \coloneqq \left(f^\lambda(x_1), \ldots, f^\lambda(x_n)\right)^\top = K_n g_\lambda(K_n) Y,
\end{align}
where $g_{\lambda}:\ [0,1]\to \mathbb{R}$ is called the \emph{admissible spectral filter function} \cite{bauer2007regularization, gerfo2008spectral}. For example, the choice $g_{\lambda}(\xi) = \frac{1}{\xi + \lambda}$, corresponds to the kernel ridge estimator with regularization parameter $\lambda>0$ (see \cite{blanchard2018optimal,celisse2021analyzing} for other possible choices)
%
%
    
    
    
    %
%
%
From the model expressed in the empirical eigenvectors basis \eqref{rotated_model}, the resulting spectral filter estimator \eqref{spectral.estimator.vector} can be expressed as 
\begin{equation} \label{iterations}
    G^{\lambda (t)}_i = \langle \widehat{u}_i, F^{\lambda (t)} \rangle =  \gamma_i^{(t)} Z_i, \quad\forall i=1,\ldots,n,
\end{equation}
where $t \mapsto \lambda(t) > 0$ is a decreasing function mapping $t$ to a regularization parameter value at time $t$, and $t \mapsto \gamma_i^{(t)}$ is defined by
\begin{equation*}
    \gamma_i^{(t)} = \widehat{\mu}_i g_{\lambda(t)}(\widehat{\mu}_i), \quad \forall i = 1, \ldots, n.
\end{equation*}
Under the assumption that $\underset{t \to 0}{\lim}g_{\lambda(t)}(\mu) = 0, \ \mu \in (0, 1]$, it can be proved that $\gamma_i^{(t)}$ is a non-decreasing function of $t$, $\gamma_i^{(0)} = 0$, and $\underset{t \to \infty}{\lim} \gamma_i^{(t)} = 1$. Moreover, $\widehat{\mu}_i = 0$ implies $\gamma_i^{(t)} = 0$, as it is the case for the kernels with a finite rank, that is, when $\mathrm{rk}(K_n) \leq r$ almost surely. 

Thanks to the remark above, we define the following convenient notations $f^t \coloneqq f^{\lambda(t)}$ (for functions) and $F^t \coloneqq F^{\lambda(t)}$ (for vectors), with a continuous time $t \geq 0$, by
\begin{equation} \label{functional_iterations}
    f^t =  g_{\lambda(t)}(S_n^* S_n)S_n^* Y,
\end{equation}
where $S_n: \mathcal{H} \to \mathbb{R}^n$ is the sampling operator and $S_n^*$ is its adjoint, i.e. $( S_nf )_i = f(x_i)$ and $K_n = S_n S_n^*$.

In what follows, we introduce an assumption on a $\gamma_i^{(t)}$ function that will play a crucial role in our analysis. 
\begin{assumption} \label{additional_assumption_gd_krr}
    \begin{equation*}
        c \min \{1, \eta t \widehat{\mu}_i \} \leq \gamma_i^{(t)} \leq \min \{1, \eta t \widehat{\mu}_i \}, \quad i = 1, \ldots, n     
    \end{equation*}
    for some positive constants $c \in (0, 1)$ and $\eta > 0$.
\end{assumption}
Let us mention two famous examples of spectral filter estimators that satisfy Assumption \ref{additional_assumption_gd_krr} with $c=1/2$ (see Lemma \ref{gamma_bounds} in Appendix). These examples will be further studied in the present paper.
\begin{itemize}
  \item Gradient descent (GD) with a constant step-size $0<\eta<1/\widehat{\mu}_1$ and $\eta t \to +\infty$ as $t \to +\infty$: 
  \begin{equation}
    \gamma_i^{(t)} = 1 - (1 - \eta \widehat{\mu}_i)^t, \quad \forall t \geq 0, \ \forall i=1,\ldots,n.    
  \end{equation}
  %
%
The constant step-size $\eta$ can be replaced by any non-increasing sequence $\{\eta (t)\}_{t=0}^{+\infty}$ satisfying \cite{raskutti2014early}
\begin{itemize}
    \item $(\widehat{\mu}_1)^{-1} \geq \eta(t) \geq \eta(t+1) \geq \dots$, for $t = 0, 1, \ldots$,
    
    \item $\sum_{s = 0}^{t - 1} \eta(s) \to +\infty$ as $t \to + \infty$.
\end{itemize}
  \item Kernel ridge regression (KRR) with the regularization parameter $\lambda(t) = 1/(\eta t)$ with $\eta>0$: 
  \begin{equation}
      \gamma_i^{(t)} = \frac{\widehat{\mu}_i}{\widehat{\mu}_i + \lambda(t)}, \quad \forall t > 0, \ \forall i=1,\ldots,n.    
  \end{equation}
  The linear parameterization $\lambda(t) = 1/(\eta t)$ is chosen for theoretical convenience.   

\end{itemize} 
%
%
The examples of $\gamma_i^{(t)}$ above were derived for $F^0 = [f^0(x_1), \ldots, f^0(x_n)]^\top = [0, \ldots, 0]^\top$ as an initialization condition without loss of generality.

\subsection{Key quantities}

From a set of parameters (stopping times) $\mathcal{T} \coloneqq \{t \geq 0 \}$ for an iterative learning algorithm, the present goal is to design $\widehat{t} = \widehat{t}(\{x_i, y_i \}_{i=1}^n)$ from the data $\{ x_i , y_i \}_{i=1}^n$ such that the functional estimator $f^{\widehat{t}}$ is as close as possible to the optimal one among $\mathcal{T}.$ 

Numerous classical model selection procedures for choosing $\widehat{t}$ already exist, e.g. the (generalized) cross validation \cite{wahba1977practical}, AIC and BIC criteria \cite{akaike1998information, schwarz1978estimating}, the unbiased risk estimation \cite{cavalier2002oracle}, or Lepski's balancing principle \cite{mathe2003geometry}.
Their main drawback in the present context is that they require the practitioner to calculate all the estimators $\{f^t, \ t \in \mathcal{T}\}$ in the first step, and then choose the optimal estimator among the candidates in a second step, which can be computationally demanding. 

By contrast, early stopping is a less time-consuming approach. It is based on observing one estimator at each $t \in \mathcal{T}$ and deciding to stop the learning process according to some criterion. Its aim is to reduce the computational cost induced by this selection procedure while preserving the statistical optimality properties of the output estimator.

\medskip

The prediction error (risk) of an estimator $f^t$ at time $t$ is split into a bias and a variance term \cite{raskutti2014early} as
\begin{equation*}
R(t) = \mathbb{E}_{\varepsilon} \lVert f^t - f^* \rVert_n^2  = \lVert \mathbb{E}_{\varepsilon} f^t - f^* \rVert_n^2 + \mathbb{E}_{\varepsilon} \lVert f^t - \mathbb{E}_{\varepsilon}f^t\rVert_n^2 = B^2(t) + V(t) \end{equation*}
with
\begin{equation}
B^2(t)  = \frac{1}{n}\sum_{i=1}^n  (1 - \gamma_i^{(t)})^2 (G_i^*)^2, \ \ \ \ \ \ V(t) = \frac{\sigma^2}{n}\sum_{i=1}^n (\gamma_i^{(t)})^2.
\end{equation}
The bias term is a non-increasing function of $t$ converging to zero, while the variance term is a non-decreasing function of $t$. 
Assume further that $\textnormal{rk}(T_{\mathbb{K}}) \leq r$, which implies that $\textnormal{rk}(K_n) \leq r$ almost surely, then the empirical risk $R_t$ is introduced with the notation of Eq.~\eqref{rotated_model}.
\begin{equation} \label{empirical_risk}
    R_t 
    = \frac{1}{n}\sum_{i=1}^n (1 - \gamma_i^{(t)})^2 Z_i^2 = \frac{1}{n}\sum_{i=1}^r (1 - \gamma_i^{(t)})^2 Z_i^2 + \frac{1}{n}\sum_{i=r+1}^n Z_i^2,
\end{equation}
%
%
An illustration of the typical behavior of the risk, empirical risk, bias, and variance is displayed by Figure~\ref{fig:bvr}. 
\begin{figure}[!htb]
\centering
  \includegraphics[scale=0.49]{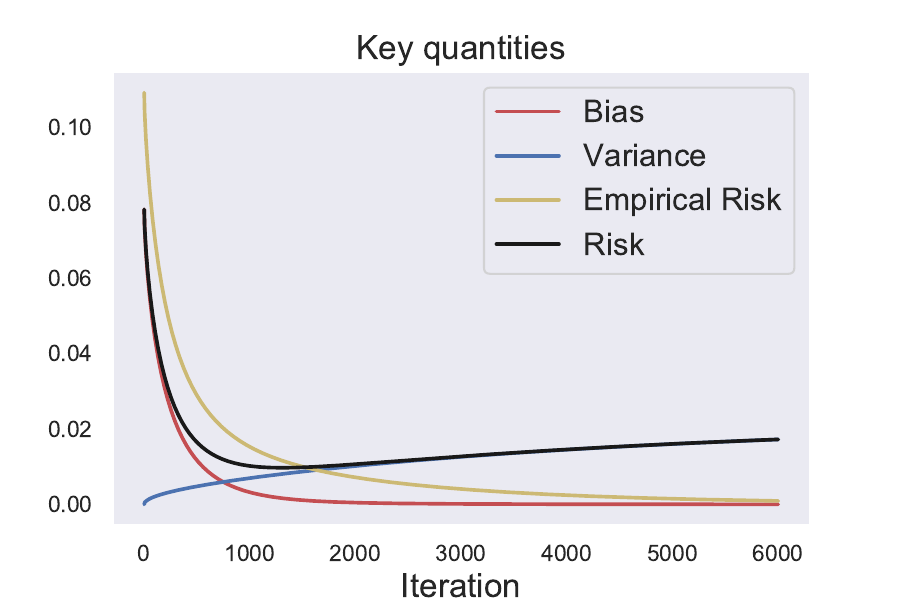}
  \caption{Bias, variance, risk, and empirical risk behavior.}
  \label{fig:bvr}
\end{figure}
Our main concern is formulating a data-driven stopping rule (a mapping from the data $\{(x_i, y_i)\}_{i=1}^n$ to a positive time $\widehat{t}$) so that the prediction errors $\mathbb{E}_{\varepsilon}\lVert f^{\widehat{t}} - f^* \rVert_n^2$ or, equivalently, $\mathbb{E}\lVert f^{\widehat{t}} - f^* \rVert_2^2$ are as small as possible. 
%
%
%
%
%
\vspace{0.1cm}

The analysis of the forthcoming early stopping rules involves the use of a model complexity measure known as the \emph{localized empirical Rademacher complexity} \cite{bartlett2005local, koltchinskii2006local, wainwright2019high} that we generalize to its $\alpha-$smoothed version, for $\alpha \in [0, 1]$.
\begin{definition}
    %
    For any $\epsilon > 0$, $\alpha \in [0, 1]$, consider the localized smoothed empirical Rademacher complexity of $\mathcal{H}$ as 
    \begin{equation} \label{empirical_rademacher_complexity_def}
    \widehat{\mathcal{R}}_{n,\alpha}(\epsilon, \mathcal{H}) = R \left[ \frac{1}{n}\sum_{j=1}^r \widehat{\mu}_j^{\alpha} \textnormal{min}\{ \epsilon^2, \widehat{\mu}_j \} \right]^{1/2}.
    \end{equation}
\end{definition}
%
%
%
It corresponds to a rescaled sum of the empirical eigenvalues truncated at $\epsilon^2$ and smoothed by $\{\widehat{\mu}_i^{\alpha} \}_{i=1}^r$. 

For a given RKHS $\mathcal{H}$ and noise level $\sigma$, let us finally define the \textit{empirical smoothed critical radius} $\widehat{\epsilon}_{n, \alpha}$ as the smallest positive value $\epsilon$ such that
\begin{equation} \label{RK_critical_radius_empirical}
    \frac{\widehat{\mathcal{R}}_{n, \alpha}(\epsilon, \mathcal{H})}{\epsilon R} \leq \frac{2R\epsilon^{1+\alpha}}{\sigma}.
\end{equation}
There is an extensive literature on the empirical critical equation and related empirical critical radius \cite{bartlett2005local, mendelson2002geometric, raskutti2014early}, and it is out of the scope of the present paper providing an exhaustive review on this topic.
Nevertheless, Appendix~\ref{auxiliary} establishes that the smoothed critical radius $\widehat{\epsilon}_{n, \alpha}$ does exist, is unique and achieves the equality in Ineq. (\ref{RK_critical_radius_empirical}).
Constant $2$ in Ineq. \eqref{RK_critical_radius_empirical} is for theoretical convenience only. If $\alpha = 0$, $\widehat{\mathcal{R}}_{n, \alpha}(\epsilon, \mathcal{H}) \equiv \widehat{\mathcal{R}}_n(\epsilon, \mathcal{H})$, and $\widehat{\epsilon}_{n, \alpha} \equiv \widehat{\epsilon}_n$.
\section{Data-driven early stopping rule and minimum discrepancy principle} \label{sec:3}
Let us start by recalling that the expression of the empirical risk in Eq.~\eqref{empirical_risk} gives that the empirical risk is a non-increasing function of $t$ (as illustrated by Fig.~\ref{fig:bvr} as well). This is consistent with the intuition that the amount of available information within the residuals decreases as $t$ grows. If there exists time $t$ such that $f^{t} \approx f^*$, then the empirical risk is approximately equal to $\sigma^2$ (level of noise), that is,
\begin{equation} \label{approximation}
  \mathbb{E}_{\varepsilon}R_t = \mathbb{E}_{\varepsilon}\Big[ \lVert F^t - Y \rVert_n^2 \Big] \approx \mathbb{E}_{\varepsilon} \Big[ \lVert F^* - Y \rVert_n^2 \Big] = \mathbb{E}_{\varepsilon} \Big[\lVert \varepsilon \rVert_n^2 \Big] = \sigma^2. 
\end{equation}
By introducing the reduced empirical risk $\widetilde{R}_t, \ t \geq 0,$ and recalling that $\textnormal{rk}(K_n) \leq r$, 
\begin{equation} \label{reduced_empir_risk_def}
\mathbb{E}_{\varepsilon}R_t = \mathbb{E}_{\varepsilon} \left[ \frac{1}{n}\sum_{i=1}^n (1 - \gamma_i^{(t)})^2 Z_i^2 \right] = \mathbb{E}_{\varepsilon} \underbrace{\left[ \frac{1}{n}\sum_{i=1}^r (1 - \gamma_i^{(t)})^2 Z_i^2 \right]}_{\coloneqq \widetilde{R}_t} + \frac{n - r}{n}\sigma^2 \overset{(\textnormal{i})}{\approx} \sigma^2,
\end{equation}
where $(\textnormal{i})$ is due to Eq. (\ref{approximation}). This heuristic argument gives rise to a first deterministic stopping rule $t^*$ involving the reduced empirical risk and given by
\begin{equation} \label{t_star}
    t^* = \inf \left\{ t > 0 \ | \ \mathbb{E}_{\varepsilon} \widetilde{R}_t \leq \frac{r \sigma^2}{n} \right\}.
\end{equation}
Since $t^*$ is \textit{not achievable} in practice, 
an estimator of $t^*$ is given by the data-driven stopping rule $\tau$ based on the so-called minimum discrepancy principle
\begin{equation}\label{tau}
    \tau = \inf \left\{ t > 0 \ | \ \widetilde{R}_t \leq \frac{r \sigma^2}{n}\right\}.
\end{equation}

The existing literature considering the MDP-based stopping rule usually defines $\tau$ by the event $\{ R_t \leq \sigma^2 \}$ \cite{blanchard2018optimal, blanchard2016convergence, blanchard2012discrepancy, engl1996regularization, hansen2010discrete, stankewitz2019smoothed}. 
Notice that with a full-rank kernel matrix, the reduced empirical risk $\widetilde{R}_t$ is equal to the classical empirical risk $R_t$, leading then to the same stopping rule.
%
From a practical perspective, the knowledge of the rank of the Gram matrix avoids estimating the last $n-r$ components of the vector $G^*$, which are already known to be zero (see \cite[Section 4.1]{raskutti2014early} for more details).

%
\subsection{Finite-rank kernels}
\subsubsection{Fixed-design framework}
Let us start by discussing our results with the case of RKHS of finite-rank kernels with rank $r < n: \mu_i = 0, \ i > r$, and $\widehat{\mu}_i = 0, \ i > r$. Examples that include these kernels are the linear kernel $\mathbb{K}(x_1, x_2) = x_1^\top x_2$ and the polynomial kernel of degree $d \in \mathbb{N} $ $\mathbb{K}(x_1, x_2) = (1 + x_1^\top x_2)^d$. 

The following theorem applies to any functional estimator $\{ f^t \}_{t \in [0, T]}$ generated by (\ref{functional_iterations}) and initialized at $f^0 = 0$. The main part of the proof of this result consists of properly upper bounding $\mathbb{E}_{\varepsilon}|\mathbb{E}_{\varepsilon}\widetilde{R}_{t^*} - \widetilde{R}_{t^*} |$ and follows the same trend of Proposition 3.1 in \cite{blanchard2018optimal}.
\begin{theorem}\label{th:1}
    Under Assumptions \ref{a1} and \ref{a2}, given the stopping rule (\ref{tau}),
    \begin{equation} \label{general_res}
        \mathbb{E}_{\varepsilon} \lVert f^{\tau} - f^* \rVert_n^2 \leq 2(1+\theta^{-1}) \mathbb{E}_{\varepsilon} \lVert f^{t^*} - f^* \rVert_n^2 + 2(\sqrt{3} + \theta) \frac{\sqrt{r} \sigma^2}{n}
    \end{equation}
    for any positive $\theta$.
\end{theorem}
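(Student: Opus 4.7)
The plan is to compare $f^{\tau}$ with its deterministic analog $f^{t^{*}}$ via a weighted triangle inequality, then to charge the cost of that comparison to the random deviation $|\widetilde{R}_{t^{*}} - \mathbb{E}_{\varepsilon}\widetilde{R}_{t^{*}}|$, whose expectation will be controlled through $\mathrm{Var}_{\varepsilon}(\widetilde{R}_{t^{*}})$. All manipulations take place in the eigenbasis of $K_n$, where $Z_i = G_i^{*} + \varepsilon_i$ and the $i$-th coordinate of $f^{t}$ is $\gamma_i^{(t)} Z_i$.

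First, for any $\theta > 0$, the elementary bound $(a+b)^{2} \leq (1+\theta^{-1}) a^{2} + (1+\theta) b^{2}$ applied to the triangle inequality in $\lVert\cdot\rVert_n$ yields
\[\lVert f^{\tau} - f^{*}\rVert_{n}^{2} \leq (1+\theta^{-1})\lVert f^{t^{*}} - f^{*}\rVert_{n}^{2} + (1+\theta)\lVert f^{\tau} - f^{t^{*}}\rVert_{n}^{2}.\]
The key deterministic step is to establish $\lVert f^{\tau} - f^{t^{*}}\rVert_{n}^{2} \leq |\widetilde{R}_{\tau} - \widetilde{R}_{t^{*}}|$. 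In the eigenbasis, $\lVert f^{\tau} - f^{t^{*}}\rVert_{n}^{2} = \frac{1}{n}\sum_{i=1}^{r} (\gamma_i^{(\tau)} - \gamma_i^{(t^{*})})^{2} Z_i^{2}$, while a direct expansion gives $\widetilde{R}_{\tau} - \widetilde{R}_{t^{*}} = \frac{1}{n}\sum_{i=1}^{r} (\gamma_i^{(t^{*})} - \gamma_i^{(\tau)})(2 - \gamma_i^{(\tau)} - \gamma_i^{(t^{*})}) Z_i^{2}$. The monotonicity of $t \mapsto \gamma_i^{(t)}$ forces every summand of the latter to share the same sign, and the pointwise bound $2 - \gamma_i^{(\tau)} - \gamma_i^{(t^{*})} \geq |\gamma_i^{(\tau)} - \gamma_i^{(t^{*})}|$ (a consequence of $\gamma_i^{(t)} \in [0,1]$) yields the claim term-by-term.

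Second, I transfer the random gap to the desired deviation using monotonicity of $t \mapsto \widetilde{R}_t$ and the defining identities $\widetilde{R}_{\tau} = r\sigma^{2}/n = \mathbb{E}_{\varepsilon}\widetilde{R}_{t^{*}}$. On $\{\tau \leq t^{*}\}$ one has $\widetilde{R}_{t^{*}} \leq \widetilde{R}_{\tau}$, hence $\widetilde{R}_{\tau} - \widetilde{R}_{t^{*}} = \mathbb{E}_{\varepsilon}\widetilde{R}_{t^{*}} - \widetilde{R}_{t^{*}}$; on $\{\tau > t^{*}\}$ the inequalities flip and the symmetric identity holds. In either case $|\widetilde{R}_{\tau} - \widetilde{R}_{t^{*}}| \leq |\widetilde{R}_{t^{*}} - \mathbb{E}_{\varepsilon}\widetilde{R}_{t^{*}}|$, and taking expectations in the previous display produces
\[\mathbb{E}_{\varepsilon}\lVert f^{\tau} - f^{*}\rVert_{n}^{2} \leq (1+\theta^{-1})\mathbb{E}_{\varepsilon}\lVert f^{t^{*}} - f^{*}\rVert_{n}^{2} + (1+\theta)\,\mathbb{E}_{\varepsilon}|\widetilde{R}_{t^{*}} - \mathbb{E}_{\varepsilon}\widetilde{R}_{t^{*}}|.\]

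Finally, Jensen's inequality gives $\mathbb{E}_{\varepsilon}|\widetilde{R}_{t^{*}} - \mathbb{E}_{\varepsilon}\widetilde{R}_{t^{*}}| \leq \sqrt{\mathrm{Var}_{\varepsilon}(\widetilde{R}_{t^{*}})}$. Using $\mathrm{Var}(Z_i^{2}) = 2\sigma^{4} + 4\sigma^{2}(G_i^{*})^{2}$, the bounds $(1-\gamma_i^{(t^{*})})^{4} \leq (1-\gamma_i^{(t^{*})})^{2} \leq 1$, and the identity $B^{2}(t^{*}) + \frac{\sigma^{2}}{n}\sum_{i}(1-\gamma_i^{(t^{*})})^{2} = r\sigma^{2}/n$ defining $t^{*}$, a direct computation yields $\mathrm{Var}_{\varepsilon}(\widetilde{R}_{t^{*}}) \leq \frac{2r\sigma^{4}}{n^{2}} + \frac{2\sigma^{2}B^{2}(t^{*})}{n}$. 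Splitting the square root via $\sqrt{x+y}\leq \sqrt{x}+\sqrt{y}$ and applying Young's inequality $2\sqrt{xy} \leq \alpha x + y/\alpha$ with $\alpha$ tuned so that the resulting coefficient of $B^{2}(t^{*}) \leq \mathbb{E}_{\varepsilon}\lVert f^{t^{*}} - f^{*}\rVert_{n}^{2}$ exactly equals the remaining oracle budget $(1+\theta^{-1})$ promotes the oracle factor from $(1+\theta^{-1})$ to $2(1+\theta^{-1})$, while the remaining pieces of order $\sqrt{r}\sigma^{2}/n$ and $\sigma^{2}/n$ (absorbed using $\sqrt{r}\geq 1$) collect into the announced $2(\sqrt{3}+\theta)\sqrt{r}\sigma^{2}/n$. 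I expect the main obstacle to be precisely this last bookkeeping: the triangle-plus-monotonicity machinery upstream is standard (in the spirit of \cite{blanchard2016optimal}), but coordinating the Young parameter with the square-root splitting so that every residual term gathers cleanly into a single $\sqrt{r}\sigma^{2}/n$ expression with the prescribed constants $2$ and $\sqrt{3}+\theta$ requires a careful choice.
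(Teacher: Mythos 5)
Your overall plan matches the paper's (triangle inequality, bound $\lVert f^\tau - f^{t^*}\rVert_n^2$ by $|\widetilde{R}_\tau - \widetilde{R}_{t^*}|$ via monotonicity of $\gamma_i^{(t)}$, use the identity $\widetilde{R}_\tau = r\sigma^2/n = \mathbb{E}_\varepsilon\widetilde{R}_{t^*}$, then Jensen and Young on the variance), and the deterministic steps 2 and 3 are correct. The genuine gap is in the very first step, and it does not wash out at the end as you hoped: you placed $\theta$ into the triangle inequality via $(a+b)^2 \leq (1+\theta^{-1})a^2 + (1+\theta)b^2$, whereas the paper uses the plain $(a+b)^2 \leq 2a^2+2b^2$ and reserves $\theta$ exclusively for Young's inequality inside the bound on $\mathbb{E}_\varepsilon|\widetilde{R}_{t^*}-\mathbb{E}_\varepsilon\widetilde{R}_{t^*}|$. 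With your split, the multiplier $(1+\theta)$ sits in front of the entire deviation term, and once you tune Young's parameter to make the coefficient of $B^2(t^*)$ equal $1+\theta^{-1}$ (that choice is forced, namely $\alpha=\theta$), the remaining coefficient of $\sqrt{r}\sigma^2/n$ is $(1+\theta)(\sqrt{3}+\theta)$ up to the constant inside the first square root — a term that is quadratic in $\theta$ and exceeds the announced $2(\sqrt{3}+\theta)$ as soon as $\theta>1$. So the claim "for any positive $\theta$" is not established by this route; it would hold only on a bounded interval of $\theta$. The fix is immediate: do not parameterize the triangle inequality — use the factor-$2$ split, keeping $\theta$ as a free Young's parameter, and then $\mathbb{E}_\varepsilon|\widetilde{R}_{t^*}-\mathbb{E}_\varepsilon\widetilde{R}_{t^*}| \leq \theta^{-1}B^2(t^*) + (\sqrt{3}+\theta)\sqrt{r}\sigma^2/n$ combines cleanly with the factor $2$ and the absorption $B^2(t^*)\le\mathbb{E}_\varepsilon\lVert f^{t^*}-f^*\rVert_n^2$ to give exactly the stated bound.

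A secondary, minor point: from $\mathrm{Var}(Z_i^2) = 2\sigma^4 + 4\sigma^2 (G_i^*)^2$ and $(1-\gamma_i^{(t^*)})^4\le (1-\gamma_i^{(t^*)})^2\le 1$, the variance bound comes out as $\mathrm{Var}_\varepsilon(\widetilde{R}_{t^*}) \leq \tfrac{2r\sigma^4}{n^2} + \tfrac{4\sigma^2 B^2(t^*)}{n}$, not $\tfrac{2\sigma^2 B^2(t^*)}{n}$ as you wrote; the coefficient $4$ is what produces the $2\sigma/\sqrt{n}\cdot B(t^*)$ cross-term that Young's inequality then splits. This is a harmless arithmetic slip (the paper in fact uses the even looser $3\sigma^4$ in place of $2\sigma^4$), but your stated constant was wrong, and it does not save the $\theta$-parameterization problem above.
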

\begin{proof}[Proof of Theorem \ref{th:1}]
    In this proof, we will use the following inequalities: for any $a, b \geq 0, \ (a - b)^2 \leq |a^2 - b^2|$, and $2ab \leq \theta a^2 + \frac{1}{\theta} b^2$ for $\forall  \theta > 0$.

    Let us first prove the subsequent oracle-type inequality for the difference between $f^\tau$ and $f^{t^*}$. Consider
\begin{align*}
    \lVert f^{t^*} - f^{\tau} \rVert_n^2 & = \frac{1}{n} \sum_{i=1}^r \Big( \gamma_i^{(t^*)} - \gamma_i^{(\tau)}\Big)^2 Z_i^2 \leq \frac{1}{n} \sum_{i=1}^r |(1 - \gamma_i^{(t^*)})^2 - (1 - \gamma_i^{(\tau)})^2 | Z_i^2 \\ 
    & =  (\widetilde{R}_{t^*} - \widetilde{R}_{\tau})\mathbb{I}\left\{ \tau \geq t^*\right\} + (\widetilde{R}_{\tau} - \widetilde{R}_{t^*})\mathbb{I}\left\{ \tau < t^*\right\} \\
    %
    %
    & \leq (\widetilde{R}_{t^*} - \mathbb{E}_{\varepsilon}\widetilde{R}_{t^*})\mathbb{I}\left\{ \tau \geq t^*\right\} + (\mathbb{E}_{\varepsilon}\widetilde{R}_{t^*} - \widetilde{R}_{t^*})\mathbb{I}\left\{ \tau < t^* \right\} \\
    & \leq |\widetilde{R}_{t^*} - \mathbb{E}_{\varepsilon}\widetilde{R}_{t^*} |.
\end{align*}
From the definition of $\widetilde{R}_t$ (\ref{reduced_empir_risk_def}), one notices that
\begin{align*}
  | \widetilde{R}_{t^*} - \mathbb{E}_{\varepsilon}\widetilde{R}_{t^*} | & = \left| \sum_{i=1}^r (1 - \gamma_i^{(t^*)})^2 \Big[\frac{1}{n}(\varepsilon_i^2 - \sigma^2) + \frac{2}{n} \varepsilon_i G_i^* \Big]\right|.   
\end{align*}
From $\mathbb{E}_{\varepsilon}| X(\varepsilon)| \leq \sqrt{\text{var}_{\varepsilon}X(\varepsilon)}$ for $X(\varepsilon)$ centered and $\sqrt{a + b} \leq \sqrt{a} + \sqrt{b}$ for any $a, b \geq 0$, and $\mathbb{E}_{\varepsilon}\left( \varepsilon^{4} \right) \leq 3 \sigma^4$, it comes
\begin{align*}
\mathbb{E}_{\varepsilon} |\widetilde{R}_{t^*} - \mathbb{E}_{\varepsilon}\widetilde{R}_{t^*} | & \leq \sqrt{\frac{2 \sigma^2}{n^2} \sum_{i=1}^r (1 - \gamma_i^{(t^*)})^4 \left[ \frac{3}{2} \sigma^2 + 2 (G_i^*)^2 \right]} \\ 
&\leq \sqrt{\frac{3 \sigma^4}{n^2} \sum_{i=1}^r (1 - \gamma_i^{(t^*)})^2} + \sqrt{\frac{4 \sigma^2}{n^2} \sum_{i=1}^r (1 - \gamma_i^{(t^*)})^2 (G_i^*)^2} \\ 
& \leq \frac{\sqrt{3}\sigma^2 \sqrt{r}}{n} + \theta \frac{\sigma^2}{n} + \theta^{-1}B^2(t^*) \\ &\leq \theta^{-1}B^2(t^*) + (\sqrt{3} + \theta) \frac{\sqrt{r} \sigma^2}{n}.
\end{align*}

Applying the inequalities $(a + b)^2 \leq 2a^2 + 2b^2$ for any $a, b \geq 0$  and $B^2(t^*) \leq \mathbb{E}_{\varepsilon} \lVert f^{t^*} - f^* \rVert_n^2$, we arrive at
   \begin{align*}
   & \mathbb{E}_{\varepsilon} \lVert f^{\tau} - f^* \rVert_n^2 \\
   & \leq 2 \mathbb{E}_{\varepsilon} \lVert f^{t^*} - f^* \rVert_n^2 + 2  \mathbb{E}_{\varepsilon} \lVert f^{\tau} - f^{t^*} \rVert_n^2 \\
   %
   %
   & \leq 2(1 + \theta^{-1})\mathbb{E}_{\varepsilon}\lVert f^{t^*} - f^* \rVert_n^2 + 2 (\sqrt{3} + \theta)\frac{\sqrt{r}\sigma^2}{n}.
    \end{align*}
\end{proof}
First of all, it is worth noting that the risk of the estimator $f^{t^*}$ is proved to be \textit{optimal} for gradient descent and kernel ridge regression no matter the kernel we use (see Appendix \ref{finite_rank_appendix} for the proof), so it remains to focus on the remainder term on the right-hand side in Ineq. (\ref{general_res}). Theorem \ref{th:1} applies to any reproducing kernel, but one remarks that for infinite-rank kernels, $r = n$, and we achieve only the rate $\mathcal{O}\left(1/\sqrt{n}\right)$.
This rate is suboptimal since, for instance, RKHS with polynomial eigenvalue decay kernels (will be considered in the next subsection) has the minimax-optimal rate for the risk error of the order $\mathcal{O}\left(n^{-\frac{\beta}{\beta + 1}}\right)$, with $\beta > 1$. Therefore, the oracle-type inequality (\ref{general_res}) could be useful only for finite-rank kernels due to the fast $\mathcal{O}(\sqrt{r}/n)$ rate of the remainder term. 

Notice that, in order to make artificially the term $\mathcal{O}(\sqrt{r} / n)$ a remainder one (even for cases corresponding to infinite-rank kernels), \cite{blanchard2018optimal, blanchard2018early} introduced in the definitions of their stopping rules a restriction on the "starting time" $t_{0}$. However, in the mentioned work, this restriction incurred the price of possibility to miss the designed time $\tau$. 
Besides that, \cite{blanchard2018early} developed an additional procedure based on standard model selection criteria such as AIC-criterion for the spectral cut-off estimator to recover the "missing" stopping rule and achieve optimality over Sobolev-type ellipsoids. In our work, we removed such a strong assumption.

As a corollary of Theorem \ref{th:1}, one can prove that $f^{\tau}$ provides a minimax estimator of $f^*$ over the ball of radius $R$.
\begin{corollary} \label{corollary_empirical_norm}
Under Assumptions \ref{a1}, \ref{a2}, \ref{additional_assumption_gd_krr}, if a kernel has finite rank $r$, then 
\begin{equation}
    \mathbb{E}_{\varepsilon} \lVert f^{\tau} - f^* \rVert_n^2 \leq c_u R^2 \widehat{\epsilon}_n^2,
\end{equation}
where the constant $c_u$ is numeric.
\end{corollary}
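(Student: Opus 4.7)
The strategy is to apply Theorem~\ref{th:1} with any fixed $\theta > 0$, which reduces the corollary to bounding two quantities by $R^2\widehat{\epsilon}_n^2$ (up to constants): the risk of the deterministic stopping time $\mathbb{E}_{\varepsilon}\lVert f^{t^*} - f^*\rVert_n^2$, and the remainder term $\sqrt{r}\sigma^2/n$. Both estimates exploit the finite-rank structure through the definitions \eqref{empirical_rademacher_complexity_def} and \eqref{RK_critical_radius_empirical} of the empirical kernel complexity and critical radius.

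For the remainder term, I would rewrite the critical equation under finite rank $r$ as
\[
    \frac{1}{n}\sum_{j=1}^{r}\min\{\widehat{\epsilon}_n^2, \widehat{\mu}_j\} = \frac{4\widehat{\epsilon}_n^4 R^2}{\sigma^2}.
\]
For the standard finite-rank kernels (linear, polynomial of degree $d$), a constant fraction of the $r$ nonzero empirical eigenvalues dominate $\widehat{\epsilon}_n^2$, so the left-hand side is of order $r\widehat{\epsilon}_n^2/n$, giving $R^2\widehat{\epsilon}_n^2 \asymp \sigma^2 r/n$. Hence $\sqrt{r}\sigma^2/n \leq r\sigma^2/n \lesssim R^2\widehat{\epsilon}_n^2$.

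For the oracle-type term, I would mimic the Raskutti-style fixed-point argument. Setting the auxiliary time $T = 1/(\eta\widehat{\epsilon}_n^2)$, so that $\eta T\widehat{\mu}_i = \widehat{\mu}_i/\widehat{\epsilon}_n^2$, Assumption~\ref{additional_assumption_gd_krr} yields
\[
    V(T) \leq \frac{\sigma^2}{n}\sum_i \min\bigl(1, \widehat{\mu}_i/\widehat{\epsilon}_n^2\bigr) = \frac{\sigma^2}{n\widehat{\epsilon}_n^2}\sum_i \min\{\widehat{\epsilon}_n^2, \widehat{\mu}_i\} = 4R^2\widehat{\epsilon}_n^2
\]
by the critical equation. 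For the bias $B^2(T)$, I would split $(1-\gamma_i^{(T)})^2(G_i^*)^2 = (1-\gamma_i^{(T)})^2\widehat{\mu}_i \cdot (G_i^*)^2/\widehat{\mu}_i$ and use the empirical RKHS-norm bound $\sum_i (G_i^*)^2/(n\widehat{\mu}_i) \leq \lVert f^*\rVert_{\mathcal{H}}^2 \leq R^2$ (from representer-theorem considerations under Assumption~\ref{a1}), together with the estimate $(1-\gamma_i^{(T)})^2\widehat{\mu}_i \lesssim \widehat{\epsilon}_n^2$ from Assumption~\ref{additional_assumption_gd_krr}, to conclude $B^2(T) \lesssim R^2\widehat{\epsilon}_n^2$. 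Combining these, $\mathbb{E}_{\varepsilon}\lVert f^T - f^*\rVert_n^2 \lesssim R^2\widehat{\epsilon}_n^2$, and Lemma~\ref{oracle} lifts this to $\mathbb{E}_{\varepsilon}\lVert f^{t^b} - f^*\rVert_n^2 \lesssim R^2\widehat{\epsilon}_n^2$.

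The main obstacle is transferring this bound from $t^b$ to $t^*$. First, a short argument shows $t^* \leq t^b$: indeed, $\mathbb{E}_{\varepsilon}\widetilde{R}_{t^b} = B^2(t^b) + \widetilde{V}(t^b) \leq V(t^b) + \widetilde{V}(t^b) \leq r\sigma^2/n$, where the last step uses $a^2 + (1-a)^2 \leq 1$ for $a \in [0,1]$, so the stopping criterion defining $t^*$ is already met at $t^b$. Monotonicity of the variance then yields $V(t^*) \leq V(t^b) \lesssim R^2\widehat{\epsilon}_n^2$. The trickier piece is upper-bounding $B^2(t^*)$: the defining inequality yields only $B^2(t^*) \leq r\sigma^2/n - \widetilde{V}(t^*)$, and to conclude $B^2(t^*) \lesssim R^2\widehat{\epsilon}_n^2$ one must prove a matching lower bound on $\widetilde{V}(t^*)$ by exploiting Assumption~\ref{additional_assumption_gd_krr} applied to the non-attenuated factors $1 - \gamma_i^{(t^*)}$ and reinvoking the critical-equation identity above. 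This step is the technical heart of the argument and is what forces the introduction of Assumption~\ref{additional_assumption_gd_krr}.
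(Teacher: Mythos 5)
Your reduction to bounding $\mathbb{E}_{\varepsilon}\lVert f^{t^*}-f^*\rVert_n^2$ and $\sqrt{r}\sigma^2/n$ by multiples of $R^2\widehat{\epsilon}_n^2$ is correct, and the observation that for finite-rank kernels $R^2\widehat{\epsilon}_n^2 \asymp r\sigma^2/n$ matches the paper's use of Lemma~\ref{critical_radius_empirical}. However, the route you take for the oracle term $\mathbb{E}_{\varepsilon}\lVert f^{t^*}-f^*\rVert_n^2$ contains a genuine gap that you yourself flag but do not close: you only establish $V(t^*)\leq V(t^b)\lesssim R^2\widehat{\epsilon}_n^2$, and since $t^*\leq t^b$ forces $B^2(t^*)\geq B^2(t^b)$, moving from $t^b$ to $t^*$ \emph{increases} the bias, so the bound on $\mathbb{E}_{\varepsilon}\lVert f^{t^b}-f^*\rVert_n^2$ does not transfer. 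Announcing that one ``must prove a matching lower bound on $\widetilde{V}(t^*)$'' is a plan, not a proof, and as phrased it is not even clear why a lower bound on $\widetilde{V}(t^*)$ would control the bias.

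The insight you are missing is that at $t^*$ the expected reduced empirical risk hits the threshold \emph{with equality}: $\mathbb{E}_{\varepsilon}\widetilde{R}_{t^*}=r\sigma^2/n$ (by continuity and monotonicity of $t\mapsto\mathbb{E}_{\varepsilon}\widetilde{R}_t$). Writing $\mathbb{E}_{\varepsilon}\widetilde{R}_t=B^2(t)+\frac{\sigma^2}{n}\sum_{i=1}^r(1-\gamma_i^{(t)})^2$ and expanding the square gives the exact identity
\begin{equation*}
\mathbb{E}_{\varepsilon}\lVert f^{t^*}-f^*\rVert_n^2 \;=\; B^2(t^*)+V(t^*) \;=\; 2\widetilde{V}(t^*), \qquad \widetilde{V}(t)\coloneqq\frac{\sigma^2}{n}\sum_{i=1}^r\gamma_i^{(t)},
\end{equation*}
so that \emph{no separate bound on $B^2(t^*)$ is needed at all}. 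One then bounds $\widetilde{V}(t^*)$ via Assumption~\ref{additional_assumption_gd_krr} and $t^*\leq\widehat{t}_\epsilon$ (which follows because $B^2(t^*)\geq\widetilde{V}(t^*)$ and $\widehat{t}_\epsilon$ is where an upper bound on bias crosses the corresponding lower bound). This is exactly what the paper's Lemma~\ref{cl1l2} does, and it makes the detour through $T=1/(\eta\widehat{\epsilon}_n^2)$, Lemma~\ref{oracle}, and $t^b$ unnecessary. You should also fix the intermediate display in your $t^*\leq t^b$ argument: $\mathbb{E}_{\varepsilon}\widetilde{R}_{t^b}$ equals $B^2(t^b)+\frac{\sigma^2}{n}\sum_i(1-\gamma_i^{(t^b)})^2$, not $B^2(t^b)+\widetilde{V}(t^b)$; the conclusion $\mathbb{E}_{\varepsilon}\widetilde{R}_{t^b}\leq r\sigma^2/n$ via $B^2(t^b)\leq V(t^b)$ and $a^2+(1-a)^2\leq 1$ is still correct.
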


\begin{proof}[Proof of Corollary \ref{corollary_empirical_norm}]
From Theorem \ref{th:1} and Lemma \ref{cl1l2} in Appendix,
\begin{equation}
    \mathbb{E}_{\varepsilon} \lVert f^{\tau} - f^* \rVert_n^2 \leq 16 (1 + \theta^{-1})R^2 \widehat{\epsilon}_n^2 + 2(\sqrt{3} + \theta)\frac{\sqrt{r}\sigma^2}{n}.
\end{equation}
Further, applying \cite[Section 4.3]{raskutti2014early}, $\widehat{\epsilon}_n^2 = c \frac{r \sigma^2}{n R^2}$, and it implies that
\begin{equation}
    \mathbb{E}_{\varepsilon} \lVert f^{\tau} - f^* \rVert_n^2 \leq \Big[ 16(1 + \theta^{-1}) + \frac{2(\sqrt{3} + \theta)}{c} \Big] R^2 \widehat{\epsilon}_n^2.
\end{equation}
\end{proof}
Note that the critical radius $\widehat{\epsilon}_n$ cannot be arbitrary small since it should satisfy Ineq. (\ref{RK_critical_radius_empirical}). As it will be clarified later, the squared empirical critical radius is essentially optimal.
\subsubsection{Random-design framework}\label{sec.random.design}
We would like to transfer the minimax optimality bound for the estimator $f^{\tau}$ from the empirical $L_2(\mathbb{P}_n)$-norm to the population $L_2(\mathbb{P}_X)$ norm by means of the so-called localized population Rademacher complexity. This complexity measure became a standard tool in empirical processes and nonparametric regression \cite{bartlett2005local, koltchinskii2006local, raskutti2014early, wainwright2019high}.

For any kernel function class studied in the paper, we consider the localized Rademacher complexity that can be seen as a population counterpart of the empirical Rademacher complexity (\ref{empirical_rademacher_complexity_def}) introduced earlier:
\begin{equation} \label{radamacher_complexity}
    \overline{\mathcal{R}}_n(\epsilon, \mathcal{H}) = R \left[ \frac{1}{n}\sum_{i=1}^{+\infty} \min \{ \mu_i, \epsilon^2 \} \right]^{1/2}.
\end{equation}

Using the localized population Rademacher complexity, we define its \textit{population critical radius} $\epsilon_n > 0$ to be the smallest positive solution $\epsilon$ that satisfies the inequality
\begin{equation} \label{RK_critical_radius}
    \frac{\overline{\mathcal{R}}_n(\epsilon, \mathcal{H})}{\epsilon R} \leq \frac{2 \epsilon R}{\sigma}.
\end{equation}

In  contrast  to the empirical critical radius $\widehat{\epsilon}_n$, this quantity is not data-dependent, since it is specified by the population eigenvalues of the kernel operator $T_{\mathbb{K}}$ underlying the RKHS. 

\begin{theorem}\label{th:2}
    Under Assumptions \ref{a1}, \ref{a2}, and \ref{additional_assumption_gd_krr}, given the stopping time (\ref{tau}), there is a positive numeric constant $\widetilde{c}_{u}$ so that for finite-rank kernels with rank $r$, with probability at least $1 - c \exp(-c_1 n \epsilon_n^2)$,
    \begin{equation} \label{upper_bound_on_l2_norm_finite_kernel} 
          \lVert f^{\tau} - f^* \rVert_2^2 \leq \widetilde{c}_{u}R^2 \epsilon_n^2
    \end{equation}
    In addition, the risk error of $\tau$ is bounded as
    \begin{equation} \label{upper_bound_on_risk_l2_finite_kernel}
    \mathbb{E}\lVert f^{\tau} - f^* \rVert_2^2 \leq \frac{\widetilde{c}r \sigma^2}{n} + \underbrace{C(\sigma, R)\exp(-cr)}_{\textnormal{remainder term}},
    \end{equation}
    where constant $C(\sigma, R)$ depends on $\sigma$ and $R$ only.
\end{theorem}


\begin{remark}
The full proof is deferred to Section \ref{proof_for_change_of_norm}. Regarding Ineq. (\ref{upper_bound_on_l2_norm_finite_kernel}), $\epsilon_n^2$ is proven to be the minimax-optimal rate for the $L_2(\mathbb{P}_X)$ norm in a RKHS (see \cite{bartlett2005local, mendelson2002geometric, raskutti2014early}). As for the risk error in Ineq. (\ref{upper_bound_on_risk_l2_finite_kernel}), the (exponential) remainder term should decrease to zero faster than $\frac{r \sigma^2}{n}$, and Theorem \ref{th:2} provides a rate $\mathcal{O}\left( \frac{r \sigma^2}{n} \right)$ that matches up to a constant the minimax bound (see, e.g., \cite[Theorem 2(a)]{raskutti2012minimax} with $s = 1$), when $f^*$ belongs to the $\mathcal{H}$-norm ball of a fixed radius $R$, thus not improvable in general. A similar bound for finite-rank kernels was achieved in \cite[Corollary 4]{raskutti2014early}.
\end{remark} 
We summarize our findings in the following corollary.
\begin{corollary} \label{finite_rank_corollary}
Under Assumptions \ref{a1}, \ref{a2}, \ref{additional_assumption_gd_krr} and a finite-rank kernel, the early stopping rule $\tau$ satisfies 
\begin{equation}
    \mathbb{E} \lVert f^{\tau} - f^* \rVert_2^2 \asymp \underset{\widehat{f}}{\inf} \underset{\lVert f^* \rVert_{\mathcal{H}} \leq R}{\sup} \mathbb{E} \lVert \widehat{f} - f^* \rVert_2^2,
\end{equation}
where the infimum is taken over all measurable functions of the input data.
\end{corollary}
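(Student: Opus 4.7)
The plan is to combine the upper bound already established for $f^{\tau}$ with a matching minimax lower bound over the RKHS ball, so that the two-sided $\asymp$ relation follows. The upper direction is immediate: Theorem \ref{th:2} yields $\mathbb{E}\lVert f^{\tau} - f^* \rVert_2^2 \leq \widetilde{c}_u \, r\sigma^2/n$ uniformly in $f^* \in \mathbb{B}_{\mathcal{H}}(R)$, so taking the supremum preserves the bound and gives
\[
\sup_{\lVert f^* \rVert_{\mathcal{H}}\leq R} \mathbb{E}\lVert f^{\tau} - f^* \rVert_2^2 \;\lesssim\; \frac{r\sigma^2}{n}.
\]

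For the matching lower bound I would invoke a classical information-theoretic argument (Fano's inequality or Assouad's lemma). In the finite-rank-$r$ case, the kernel operator $T_k$ has at most $r$ nonzero eigenvalues, and the RKHS embeds into the $r$-dimensional subspace spanned by $\{ \sqrt{\mu_k}\phi_k\}_{k=1}^r$. Writing $f^* = \sum_{k=1}^r \sqrt{\mu_k}\,\theta_k\,\phi_k$, the constraint $\lVert f^* \rVert_{\mathcal{H}}^2 = \sum_{k=1}^r \theta_k^2 \leq R^2$ together with the orthonormality of $\{\phi_k\}$ in $L_2(\mathbb{P}_X)$ reduces the problem to estimating a vector $(\theta_1,\ldots,\theta_r)$ lying in an $r$-dimensional ball under Gaussian noise, with effective per-coordinate noise level $\sigma^2/n$. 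A standard packing of this ball together with Fano's lemma then yields the minimax rate $r\sigma^2/n$; this is precisely the computation carried out in \cite[Theorem~2(a)]{raskutti2012minimax} (with the specialization corresponding to a finite-dimensional kernel class), giving
\[
\inf_{\widehat{f}} \sup_{\lVert f^* \rVert_{\mathcal{H}}\leq R} \mathbb{E}\lVert \widehat{f} - f^* \rVert_2^2 \;\gtrsim\; \frac{r\sigma^2}{n}.
\]
Combined with the upper bound, this yields the desired asymptotic equivalence.

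The only point requiring some care, which I expect to be the main (mild) obstacle, is to ensure the problem is in the nontrivial regime $R^2 \gtrsim r\sigma^2/n$. Outside of it the trivial estimator $\widehat{f} \equiv 0$ already attains squared risk at most $R^2$, and the minimax rate would degenerate to $\min\{R^2, r\sigma^2/n\}$ rather than $r\sigma^2/n$. Because Assumption~\ref{a1} fixes $R$ as a constant independent of $n$ while $r$ is a finite integer, this regime is automatic for all $n$ sufficiently large, so the two bounds match and the proof is complete.
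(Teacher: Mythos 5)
Your proposal is correct and takes essentially the same approach as the paper: the corollary is obtained by combining the upper bound $\mathbb{E}\lVert f^{\tau}-f^*\rVert_2^2\lesssim r\sigma^2/n$ from Theorem~\ref{th:2} with the minimax lower bound $\inf_{\widehat{f}}\sup_{\lVert f^*\rVert_{\mathcal H}\leq R}\mathbb{E}\lVert\widehat f-f^*\rVert_2^2\gtrsim r\sigma^2/n$ cited from \cite[Theorem~2(a)]{raskutti2012minimax}, exactly as the remark preceding the corollary indicates. Your added caveat about the nontrivial regime $R^2\gtrsim r\sigma^2/n$ is a reasonable clarification of a point the paper leaves implicit (it is automatic here since $R$ is a fixed constant and $r$ is finite).
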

\subsection{Practical behavior of $\tau$ with infinite-rank kernels} \label{pdk}
A typical example of RKHS that produces an infinite-rank kernel is the $k^{\textnormal{th}}$-order Sobolev spaces for some fixed integer $k \geq 1$ with Lebesgue measure on a bounded domain. We consider Sobolev spaces that consist of functions that have $k^{\textnormal{th}}$-order weak derivatives $f^{(k)}$ being Lebesgue integrable and $f^{(0)}(0) = f^{(1)}(0) = \ldots = f^{(k-1)}(0) = 0$. It is worth mentioning that for such classes, the eigenvalues of the kernel operator $\mu_i \asymp i^{-\beta}, \ i = 1, 2, \ldots$, with $\beta = 2k$. Another example of kernel with this decay condition for the eigenvalues is the Laplace kernel $\mathbb{K}(x_1, x_2) = e^{-|x_1 - x_2|}, \ x_1, x_2 \in \mathbb{R}$ (see \cite[p.402]{scholkopf2001learning}). 

Firstly, let us now illustrate the practical behavior of ESR (\ref{tau}) (its histogram) for gradient descent (\ref{iterations}) with the step-size $\eta = 1/(1.2 \widehat{\mu}_1)$ and one-dimensional Sobolev kernel $\mathbb{K}(x_1, x_2) = \min\{x_1, x_2\}$ that generates the reproducing space
\begin{equation} \label{H:def}
\mathcal{H} = \left\{f: [0, 1] \to \mathbb{R} \ | \ f(0) = 0, \int_{0}^1 (f^\prime(x))^2 dx < \infty \right\}.
\end{equation}
We deal with the model (\ref{main}) with two regression functions: a smooth piece-wise linear $f^*(x) = |x - 1/2|-1/2 $ and nonsmooth heavisine $f^*(x) = 0.093 \ [4 \  \textnormal{sin}(4 \pi x) - \textnormal{sign}(x - 0.3) - \textnormal{sign}(0.72 - x)]$ functions. The design points are random $x_i \overset{\textnormal{i.i.d.}}{\sim} \mathbb{U}[0, 1]$. The number of observations is $n = 200$. For both functions, $\lVert f^* \rVert_n \approx 0.28$, and we set up a  middle difficulty noise level $\sigma = 0.15$. The number of repetitions is $N = 200$.
\begin{figure}[htbp]
    \centering
    \begin{subfigure}{6cm}
        \centering
        \includegraphics[width=\linewidth]{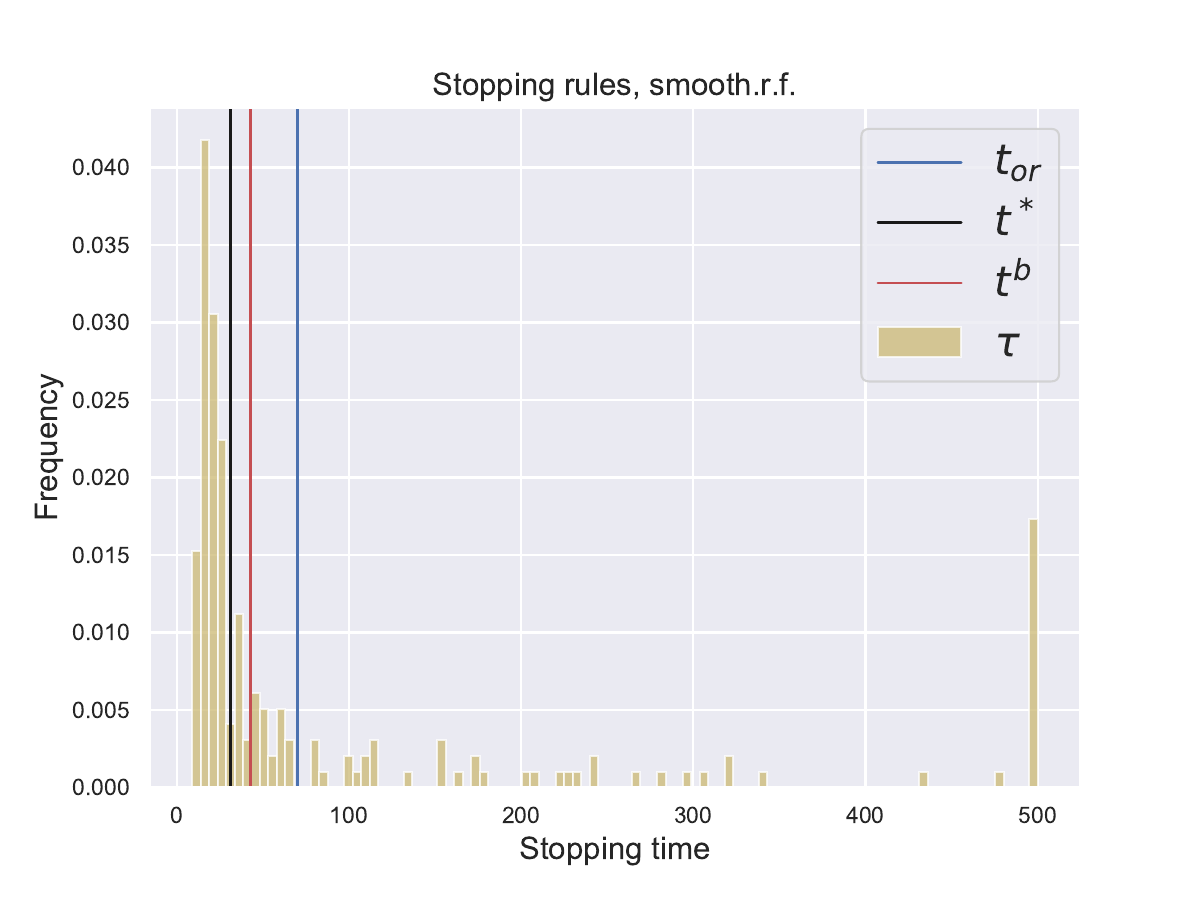} 
        \caption{}
        \label{fig:subfig1}
    \end{subfigure}
    \hfill
    \begin{subfigure}{6cm}
        \centering
        \includegraphics[width=\linewidth]{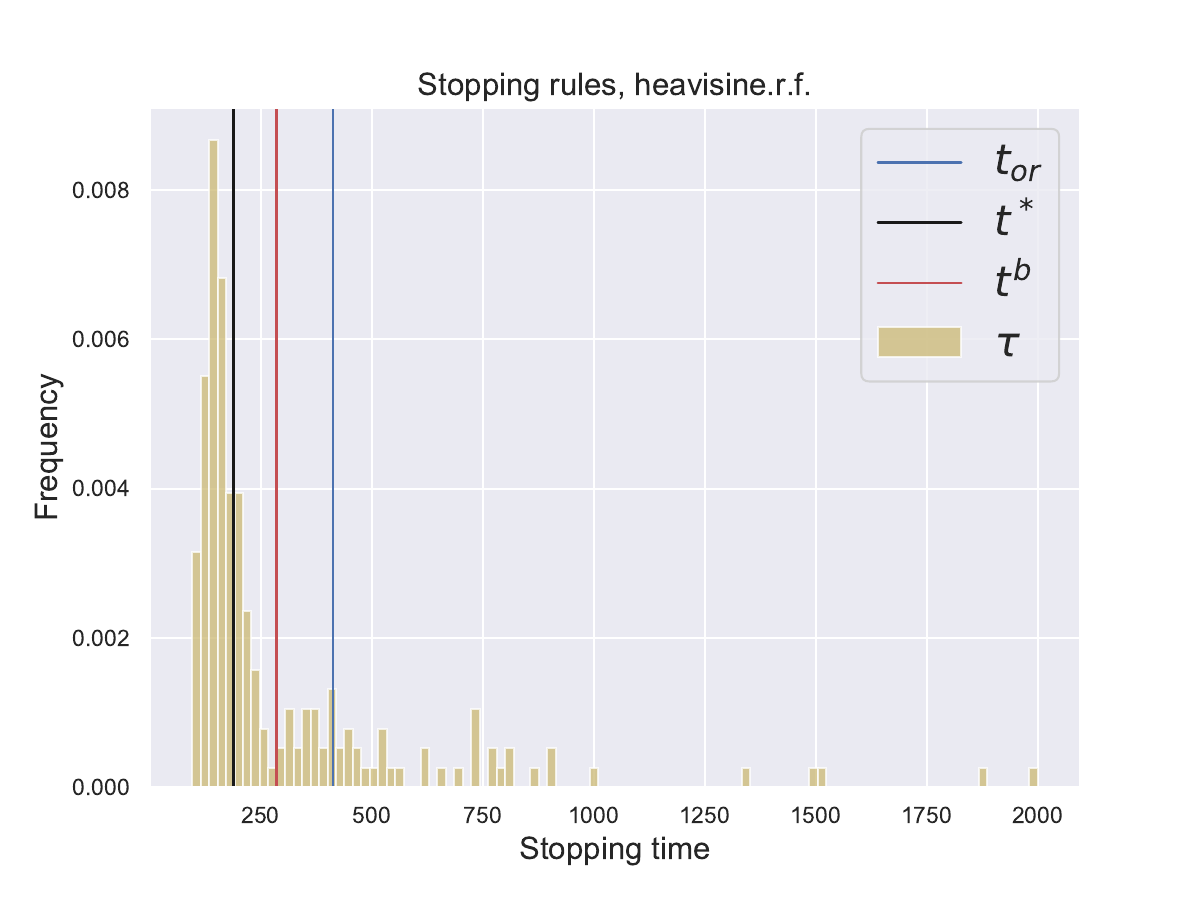} 
        \caption{}
        \label{fig:subfig2}
    \end{subfigure}
    \caption{Histogram of $\tau$ vs $t^*$ vs  $t^b \coloneqq \inf \{ t > 0 \ | \ B^2(t) \leq V(t) \}$ vs $t_{\textnormal{or}} \coloneqq \underset{t > 0}{\textnormal{argmin}} \left[ \mathbb{E}_{\varepsilon}\lVert f^t - f^* \rVert_n^2 \right]$ for kernel gradient descent with the step-size $\eta = 1 / (1.2 \widehat{\mu}_1)$ for the piece-wise linear $f^*(x) = |x - 1/2| - 1/2$ (panel (a)) and heavisine $f^*(x) = 0.093 \ [4 \ \textnormal{sin}(4 \pi x) - \textnormal{sign}(x - 0.3) - \textnormal{sign}(0.72 - x)]$ (panel (b)) regression functions, and the first-order Sobolev kernel $\mathbb{K}(x_1, x_2) = \min \{x_1, x_2 \}$.}
    \label{fig:hist}
\end{figure}
In panel (a) of Figure \ref{fig:hist}, we detect that our stopping rule $\tau$ has a high variance. 
However, if we change the signal $f^*$ from the smooth to nonsmooth one, the regression function does not belong anymore to $\mathcal{H}$ defined in (\ref{H:def}). In this case (panel (b) in Figure \ref{fig:hist}), the stopping rule $\tau$ performs much better than for the previous regression function. 
In order to get a stable early stopping rule that will be close to $t^*$, we propose using a special smoothing technique for the empirical risk. 
%
%
\section{Polynomial smoothing} \label{sec:4}
As was discussed earlier, the main issue of poor behavior of the stopping rule $\tau$ for infinite-rank kernels is the variability of the empirical risk around its expectation. 
A solution that we propose is to smooth the empirical risk by means of the eigenvalues of the normalized Gram matrix.
\subsection{Polynomial smoothing and minimum discrepancy principle rule}
We start by defining the squared $\alpha$-norm as $\lVert f \rVert_{n, \alpha}^2 \coloneqq \langle K_n^{\alpha} F, F \rangle_n$ for all $F = \left[f(x_1),  \ldots, f(x_n) \right]^\top  \in \mathbb{R}^n$ and $\alpha \in [0, 1]$, from which we also introduce the smoothed risk, bias, and variance of a spectral filter estimator as
\begin{equation*}
R_{\alpha}(t) = \mathbb{E}_{\varepsilon}\lVert f^t - f^*\rVert_{n, \alpha}^2  = \lVert \mathbb{E}_{\varepsilon}f^t - f^*\rVert_{n, \alpha}^2 + \mathbb{E}_{\varepsilon}\lVert f^t - \mathbb{E}_{\varepsilon}f^t\rVert_{n, \alpha}^2 = B^2_{\alpha}(t) + V_{\alpha}(t),
\end{equation*} 
with 
\begin{equation}
B^2_{\alpha}(t) = \frac{1}{n}\sum_{i=1}^n \widehat{\mu}_i^{\alpha} (1 - \gamma_i^{(t)})^2 (G_i^*)^2, \ \ \ \  V_{\alpha}(t) = \frac{\sigma^2}{n}\sum_{i=1}^n \widehat{\mu}_i^{\alpha} (\gamma_i^{(t)})^2.
\end{equation}
The smoothed empirical risk is
\begin{equation}
    R_{\alpha, t} = \lVert F^t - Y\rVert_{n, \alpha}^2 = \lVert G^t - Z \rVert_{n, \alpha}^2 = \frac{1}{n}\sum_{i=1}^n \widehat{\mu}_i^{\alpha} (1 - \gamma_i^{(t)})^2 Z_i^2 , \quad \textnormal{ for } t > 0.
\end{equation}
Recall that the kernel is bounded by $B = 1$, thus $\widehat{\mu}_i \leq 1$ for all $i = 1, \ldots, n$, then the smoothed bias $B_{\alpha}^2(t)$ and smoothed variance $V_{\alpha}(t)$ are smaller their non-smoothed counterparts. 

Analogously to the heuristic derivation leading to the stopping rule (\ref{tau}), the new stopping rule is based on the discrepancy principle applied to the $\alpha-$smoothed empirical risk, that is,
\begin{equation} \label{t_alpha}
    \tau_{\alpha} = \inf \left\{ t > 0 \ | \ R_{\alpha, t} \leq \sigma^2\frac{\mathrm{tr}(K_n^\alpha)}{n} \right\},
\end{equation}
where $\sigma^2 \mathrm{tr}(K_n^\alpha)/n = \sigma^2\sum_{i=1}^n \widehat{\mu}_i^{\alpha}/n$ is the natural counterpart of $r \sigma^2/n$ in the case of a full-rank kernel matrix and the $\alpha-$norm.

%

\subsection{Related work}
The idea of smoothing the empirical risk (the residuals) is not new in the literature. For instance, \cite{blanchard2016convergence, blanchard2010conjugate, blanchard2012discrepancy} discussed various smoothing strategies applied to (kernelized) conjugate gradient descent, and \cite{celisse2021analyzing} considered spectral regularization with spectral filter estimators. %
More closely related to the present work, \cite{stankewitz2019smoothed} studied a statistical performance improvement allowed by polynomial smoothing of the residuals (as we do here) but restricted to the spectral cut-off estimator.

In \cite{blanchard2010conjugate, blanchard2012discrepancy}, the authors considered the following statistical inverse problem: $z = Ax + \sigma \zeta$, where $A$ is a self-adjoint operator and $\zeta$ is Gaussian noise. In their case, for the purpose of achieving optimal rates, the usual discrepancy principle rule $\lVert Ax_m - z \rVert \leq \vartheta \delta$ ($m$ is an iteration number, $\vartheta$ is a parameter) was modified and took the form $\lVert \rho_{\lambda}(A)( A x_m - z )\rVert \leq \vartheta \delta$, where $\rho_{\lambda}(t) = \frac{1}{\sqrt{t + \lambda}}$ and $\delta$ is the normalized variance of Gaussian noise.

In \cite{blanchard2016convergence}, the minimum discrepancy principle was modified to the following: each iteration $m$ of conjugate gradient descent was represented by a vector $\widehat{\alpha}_m = K_n^{\dagger} Y$, $K_n^{\dagger}$ is the pseudo-inverse of the normalized Gram matrix, and the learning process was stopped if $\lVert Y - K_n \widehat{\alpha}_m \rVert_{K_n} < \Omega$ for some positive $\Omega$, where $\lVert \alpha \rVert_{K_n}^2 = \langle \alpha, K_n \alpha \rangle.$ Thus, this method corresponds (up to a threshold) to the stopping rule (\ref{t_alpha}) with $\alpha = 1.$ 

In the work \cite{stankewitz2019smoothed}, the authors concentrated on the inverse problem $Y = A \xi + \delta W$ and its corresponding Gaussian vector observation model $Y_i = \Tilde{\mu}_{i} \xi_i + \delta \varepsilon_i, \ i \in [r]$, where $\{ \Tilde{\mu}_i \}_{i=1}^r$ are the singular values of the linear bounded operator $A$ and $\{ \varepsilon_i \}_{i=1}^r$ are Gaussian noise variables. They recovered the signal $\{ \xi_i \}_{i=1}^r$ by a cut-off estimator of the form $\widehat{\xi}_i^{(t)} = \mathbb{I}\{i \leq t \}\widetilde{\mu}_{i}^{-1}Y_i, \ i \in [r]$. The discrepancy principle in this case was $\lVert (A A^\top)^{\alpha/2}(Y - A \widehat{\xi}^{(t)}) \rVert^2 \leq \kappa$ for some positive $\kappa.$ They found out that, if the smoothing parameter $\alpha$ lies in the interval $[\frac{1}{4p}, \frac{1}{2p})$, where $p$ is the polynomial decay of the singular values $\{ \widetilde{\mu}_i \}_{i=1}^r$, then the cut-off estimator is adaptive to Sobolev ellipsoids. Therefore, our work could be considered as an extension of \cite{stankewitz2019smoothed} in order to generalize the polynomial smoothing strategy to more complex filter estimators such as gradient descent and (Tikhonov) ridge regression in the reproducing kernel framework.

\subsection{Optimality result (fixed-design)} \label{optimality_section}
%
%
%
%
%
%
%
%
%
%
We pursue the analogy a bit further by defining the \textit{smoothed statistical dimension} as 
\begin{equation} \label{smoothed_statistical_dimension}
    d_{n, \alpha} \coloneqq \inf \left\{ j \in [n]: \widehat{\mu}_j \leq \widehat{\epsilon}_{n, \alpha}^2 \right\},
\end{equation}
and $d_{n, \alpha} = n$ if no such index exists. 
Combined with (\ref{empirical_rademacher_complexity_def}), this implies that
\begin{equation} \label{useful_inequalities_smooth_}
    \widehat{\mathcal{R}}_{n, \alpha}^2(\widehat{\epsilon}_{n, \alpha}, \mathcal{H}) \geq \frac{\sum_{j=1}^{d_{n, \alpha}} \widehat{\mu}_j^{\alpha}}{n}R^2 \widehat{\epsilon}_{n, \alpha}^2, \ \ \textnormal{ and } \ \ \widehat{\epsilon}_{n, \alpha}^{2(1+\alpha)} \geq \frac{\sigma^2\sum_{j=1}^{d_{n, \alpha}}\widehat{\mu}_j^{\alpha}}{4R^2 n}.
\end{equation}
Let us emphasize that \cite{yang2017randomized} already introduced the so-called \emph{statistical dimension} (corresponds to $d_{n, 0}$ in our notation). It appeared that the statistical dimension provides an upper bound on the minimax-optimal dimension of randomized projections for kernel ridge regression (see \cite[Theorem 2, Corollary 1]{yang2017randomized}). In our case, $d_{n, \alpha}$ can be seen as a ($\alpha$-smooth) version of the statistical dimension.

The purpose of the following result is to give more insight into understanding of Eq. (\ref{smoothed_statistical_dimension}) regarding the minimax risk.
\begin{theorem}[Lower bound from Theorem 1 in \cite{yang2017randomized}]  \label{theorem_yang}
For any regular kernel class, meaning that for any $k = 1, \ldots, n$, $ \widehat{\mu}_{k+1}^{-1} \sum_{i=k + 1}^n \widehat{\mu}_i \lesssim k$, and any estimator $\widetilde{f}$ of $f^* \in \mathbb{B}_{\mathcal{H}}(R)$ satisfying the nonparametric model defined in Eq.~\eqref{main}, we get
\begin{equation*}
    \underset{\lVert f^* \rVert_{\mathcal{H}} \leq R}{\sup}\mathbb{E}_{\varepsilon} \lVert \widetilde{f} - f^* \rVert_n^2 \geq c_l R^2 \widehat{\epsilon}_n^2,
\end{equation*}
for some numeric constant $c_l > 0$.
\end{theorem}
%
Firstly, in \cite{yang2017randomized}, the regularity assumption was formulated as $\sum_{d_{n, 0} + 1}^n \widehat{\mu}_i \lesssim d_{n, 0}\widehat{\epsilon}_n^2$, which directly stems from the assumption in Theorem \ref{theorem_yang}. Let us remark that the same assumption (as in Theorem \ref{theorem_yang}) has been already made by \cite[Assumption 6]{celisse2021analyzing}. Secondly, Theorem~\ref{theorem_yang} applies to any kernel, as long as the condition on the tail of eigenvalues is fulfilled, which is in particular true for the reproducing kernels from Section \ref{pdk}. Thus, the fastest achievable rate by an estimator of $f^*$ is $\widehat{\epsilon}_n^2$. 

A key property for the smoothing to yield optimal results is that the value of $\alpha$ has to be large enough to control the tail sum of the smoothed eigenvalues by the corresponding cumulative sum, which is the purpose of the assumption below.
%
%
\begin{assumption} \label{sufficient_smoothing}
There exists $\Upsilon = [\alpha_0, 1], \ \alpha_0 \geq 0$, such that for all $\alpha \in \Upsilon$ and $k \in \{1, \ldots, n\}$,
\begin{equation}
    \sum_{i=k + 1}^{+\infty} \mu_i^{2 \alpha} \leq \mathcal{M} \sum_{i=1}^{k}\mu_i^{2\alpha}, 
\end{equation}
where $\mathcal{M} \geq 1$ denotes a numeric constant.
\end{assumption}
%
%
%
We enumerate several classical examples for which this assumption holds. 
\begin{example}[$\beta$-polynomial eigenvalue decay kernels]
Let us assume that the kernel operator satisfy that there exist numeric constants $0 < c \leq C$ such that 
\begin{equation} \label{beta-polynomial}
    c i^{-\beta} \leq \mu_i \leq Ci^{-\beta}, \ \ i = 1, 2, \ldots,
\end{equation}
For the polynomial eigenvalue-decay kernels, Assumption~\ref{sufficient_smoothing} holds with
\begin{equation}\label{lower.bound.alpha}
    \mathcal{M} = 2^{2\beta - 1} \left( \frac{C}{c} \right)^{2} \quad \textnormal{and} \quad 1 \geq \alpha \geq \frac{1}{\beta + 1}=\alpha_0.
\end{equation}
\end{example}

\begin{example}[$\gamma$-exponential eigenvalue-decay kernels]
Let us assume that the eigenvalues of the kernel operator satisfy that there exist numeric constants $0 < c \leq C $ and a constant $\gamma>0$ such that
\begin{align*}
    c e^{-i^{\gamma}} \leq \mu_i \leq C e^{-i^{\gamma}}, i = 1, 2, \ldots.
\end{align*} 
Instances of kernels within this class include the Gaussian kernel with respect to the Lebesgue measure on the real line (with $\gamma = 2$) or on a compact domain (with $\gamma = 1$) (up to $\log$ factor in the exponent, see \cite[Example 13.21]{wainwright2019high}).
Then, Assumption~\ref{sufficient_smoothing} holds with
\begin{equation*}
    \mathcal{M} = \Big( \frac{C}{c} \Big)^2 \frac{\int_{0}^{\infty} e^{-y^{\gamma}}dy}{\int_{2^{-1/\gamma}}^{2 /(2\alpha_0)^{1/\gamma}} e^{-y^{\gamma}}dy} \quad  \textnormal{and}\quad \alpha \in [\alpha_0, 1], \quad \mbox{for any}\quad \alpha_0 \in (0, 1).
\end{equation*}
\end{example}

For \emph{any regular kernel class} satisfying the above assumption, the next theorem provides a high probability bound on the performance of $f^{\tau_{\alpha}}$ (measured in terms of the $L_2(\mathbb{P}_n)$-norm), which depends on the smoothed empirical critical radius.
%
\begin{theorem}[Upper bound on empirical norm]\label{th:3}
Under Assumptions~\ref{a1}, \ref{a2}, \ref{additional_assumption_gd_krr}, and \ref{sufficient_smoothing}, for any regular kernel and $\alpha \leq \frac{1}{2}$, the stopping time (\ref{t_alpha}) satisfies
\begin{equation} \label{main_inequality}
    \lVert f^{\tau_{\alpha}} - f^* \rVert_n^2 \leq c_u R^2 \widehat{\epsilon}_{n, \alpha}^2 
\end{equation}
with probability at least $1 - c \exp \Big[ - c_1 \frac{R^2}{\sigma^2}n \widehat{\epsilon}_{n, \alpha}^{2(1+\alpha)} \Big]$ for some positive constants $c_1$ and $c_u$, where $c_1$ depends only on $\mathcal{M}$, $c_u$ and $c$ are numeric. Moreover, 
\begin{equation} \label{in_expectation}
    \mathbb{E}_{\varepsilon} \lVert f^{\tau_{\alpha}} - f^* \rVert_n^2 \leq C R^2 \widehat{\epsilon}_{n, \alpha}^2 + 20 \max \{ \sigma^2, R^2 \} \exp \left[ - c_3 \frac{R^2}{\sigma^2}n \widehat{\epsilon}_{n, \alpha}^{2(1+\alpha)} \right], 
\end{equation}
where the constant $C$ is numeric, constant $c_3$ only depending on $\mathcal{M}$. 
\end{theorem}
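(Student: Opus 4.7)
The overall strategy mirrors Theorem~\ref{th:1}: I would identify a deterministic proxy for $\tau_\alpha$ (namely $t_\alpha^*$, which I claim is of the same order as a critical time $\bar t \asymp 1/(\eta \widehat\epsilon_{n,\alpha}^2)$), upper-bound the smoothed risk at the proxy via the fixed-point of Definition~\ref{def_smooth_fixed_point}, and then use concentration of the smoothed empirical risk to control the deviation between $\tau_\alpha$ and its proxy. To calibrate $\bar t$, Assumption~\ref{additional_assumption_gd_krr} yields $\gamma_i^{(\bar t)} \asymp \min\{1, \widehat\mu_i/\widehat\epsilon_{n,\alpha}^2\}$ and $1 - \gamma_i^{(\bar t)} \lesssim \min\{1, \widehat\epsilon_{n,\alpha}^2/\widehat\mu_i\}$ (the latter from the qualification built into GD and KRR). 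Combined with the fixed-point identity $n^{-1}\sigma^2 \sum_i \widehat\mu_i^\alpha \min\{\widehat\mu_i, \widehat\epsilon_{n,\alpha}^2\} \asymp R^2 \widehat\epsilon_{n,\alpha}^{2(1+\alpha)}$ and the RKHS bound $n^{-1}\sum_i (G_i^*)^2/\widehat\mu_i \lesssim R^2$, this would give $B_\alpha^2(\bar t) + V_\alpha(\bar t) \lesssim R^2 \widehat\epsilon_{n,\alpha}^{2(1+\alpha)}$ and $\mathbb{E}_\varepsilon R_{\alpha,\bar t} \asymp \sigma^2 \mathrm{tr}(K_n^\alpha)/n$, so $t_\alpha^* \asymp \bar t$.

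For the concentration step, I would decompose the deviation of the smoothed empirical risk as
\begin{equation*}
R_{\alpha,t} - \mathbb{E}_\varepsilon R_{\alpha,t} = \frac{2}{n}\sum_{i=1}^r \widehat\mu_i^\alpha (1-\gamma_i^{(t)})^2 G_i^* \varepsilon_i + \frac{1}{n}\sum_{i=1}^r \widehat\mu_i^\alpha (1-\gamma_i^{(t)})^2 (\varepsilon_i^2 - \sigma^2),
\end{equation*}
a centered Gaussian linear term plus a centered weighted chi-square. Gaussian tails and a Laurent--Massart (or Hanson--Wright) bound deliver $|R_{\alpha,t}- \mathbb{E}_\varepsilon R_{\alpha,t}|\lesssim \sigma^2 n^{-1}\sqrt{\sum_i \widehat\mu_i^{2\alpha}(1-\gamma_i^{(t)})^4}$ with probability at least $1-5\exp(-c n R^2 \widehat\epsilon_{n,\alpha}^{2(1+\alpha)}/\sigma^2)$. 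Assumption~\ref{sufficient_smoothing} is crucial here: it bounds the tail $\sum_{i>d_{n,\alpha}} \widehat\mu_i^{2\alpha}$ by a constant multiple of the head $\sum_{i \leq d_{n,\alpha}}\widehat\mu_i^{2\alpha}$, and the fixed-point identity then converts the head into $R^2 \widehat\epsilon_{n,\alpha}^{2(1+\alpha)}$ up to constants. Since $t \mapsto R_{\alpha,t}$ is deterministically non-increasing, concentration at two deterministic times $t_L, t_U \asymp \bar t$ (chosen so that $\mathbb{E}_\varepsilon R_{\alpha,t_U}$ lies safely below and $\mathbb{E}_\varepsilon R_{\alpha,t_L}$ safely above the threshold $\sigma^2 \mathrm{tr}(K_n^\alpha)/n$) is enough to pinch $\tau_\alpha \in [t_L, t_U]$ on a good event $\mathcal{E}$, avoiding any uniform-in-$t$ argument.

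On $\mathcal{E}$ I would then translate to the unsmoothed norm by the monotonicity decomposition
\begin{equation*}
\|f^{\tau_\alpha} - f^*\|_n^2 \leq \frac{2}{n}\sum_{i=1}^r (1-\gamma_i^{(t_L)})^2 (G_i^*)^2 + \frac{2}{n}\sum_{i=1}^r (\gamma_i^{(t_U)})^2 \varepsilon_i^2,
\end{equation*}
splitting each sum at index $d_{n,\alpha}$. On the head ($\widehat\mu_i>\widehat\epsilon_{n,\alpha}^2$), the bias contribution uses $(1-\gamma_i^{(t_L)})^2 \lesssim (\widehat\epsilon_{n,\alpha}^2/\widehat\mu_i)^2$ together with the source condition to give $\lesssim R^2 \widehat\epsilon_{n,\alpha}^2$; on the tail, $n^{-1}\sum_{i>d_{n,\alpha}}(G_i^*)^2 \leq \widehat\epsilon_{n,\alpha}^2 R^2$ again by the RKHS norm bound. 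For the variance term, a separate Hanson--Wright concentration around $V(t_U)$ plus Assumption~\ref{assumption_on_variance} yields a head bound $\lesssim \sigma^2 d_{n,\alpha}/n$ and a matching tail bound, both of which I would convert to $R^2\widehat\epsilon_{n,\alpha}^2$ via the fixed-point lower bound $\sigma^2 d_{n,\alpha} \widehat\epsilon_{n,\alpha}^{2\alpha}/n \lesssim R^2 \widehat\epsilon_{n,\alpha}^{2(1+\alpha)}$ (using $\widehat\mu_j^\alpha \geq \widehat\epsilon_{n,\alpha}^{2\alpha}$ for $j \leq d_{n,\alpha}$). For \eqref{in_expectation}, the complementary event is handled by the crude deterministic bound $\|f^{\tau_\alpha} - f^*\|_n^2 \lesssim \|F^{\tau_\alpha}\|_n^2 + R^2 \lesssim \max\{\sigma^2, R^2\}$ (exploiting contractivity of the spectral filter and Assumption~\ref{a2}) together with the probability estimate above. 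The main obstacle is precisely this final translation: the smoothed norm natively controls $R^2\widehat\epsilon_{n,\alpha}^{2(1+\alpha)}$, while the target is $R^2\widehat\epsilon_{n,\alpha}^2$, so the split at $d_{n,\alpha}$ must be orchestrated so that Assumption~\ref{sufficient_smoothing} governs the concentration (via head domination in the $\widehat\mu_i^{2\alpha}$ sum) while Assumption~\ref{assumption_on_variance} governs the variance transfer (tail control of the unsmoothed $\widehat\mu_i$'s), with the fixed-point identity interacting correctly with both.
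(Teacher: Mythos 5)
Your strategy matches the paper's: pinch $\tau_\alpha$ between deterministic times $t_L \leq t_\alpha^* \leq t_U$ (the paper's $\widetilde t_{\epsilon,\alpha}$ and $\overline t_{\epsilon,\alpha}$, both $\asymp 1/(\eta\widehat\epsilon_{n,\alpha}^2)$) via Gaussian/chi-square concentration of the smoothed residual, let Assumption~\ref{sufficient_smoothing} control the quadratic term $\sum_i\widehat\mu_i^{2\alpha}(1-\gamma_i^{(t)})^4$, and use monotonicity of bias and stochastic variance together with Assumption~\ref{assumption_on_variance} (the paper's Lemma~\ref{connection_t_epsilon}) to transfer from the smoothed critical radius back to the unsmoothed norm and obtain $R^2\widehat\epsilon_{n,\alpha}^2$. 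This is essentially the argument the paper carries out in Lemmas~\ref{lemma_deviation_bound_1}--\ref{bias_deviation}; your head/tail split for the bias is a valid but slightly more roundabout route than the paper's direct $B^2(t_L)\leq R^2/(\eta t_L)$ from~\eqref{Lemma_wainwright}.

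One slip worth fixing: for the expectation bound~\eqref{in_expectation} you claim a ``crude deterministic bound $\lVert f^{\tau_\alpha}-f^*\rVert_n^2 \lesssim \lVert F^{\tau_\alpha}\rVert_n^2 + R^2 \lesssim \max\{\sigma^2,R^2\}$.'' But $\lVert F^{\tau_\alpha}\rVert_n^2 \leq \lVert Y\rVert_n^2$ is a \emph{random} quantity, so the last inequality is not deterministic and cannot simply be multiplied by the failure probability. The paper instead splits: $B^2(\tau_\alpha)\leq R^2$ does hold deterministically, while for the stochastic variance it uses $v(\tau_\alpha)\leq n^{-1}\sum_i\varepsilon_i^2$, Cauchy--Schwarz, and the second moment $\mathbb{E}_\varepsilon[v^2(\tau_\alpha)]\leq 3\sigma^4$ to control the contribution of the bad event. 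Your argument should be amended to a second-moment/Cauchy--Schwarz step of this kind; as written the expectation bound does not follow.
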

The complete proof of Theorem \ref{th:3} is given in Appendix~\ref{polynomial_appendix}.  
The main message is that the final performance of the estimator $f^{\tau_{\alpha}}$ is controlled by the smoothed critical radius $\widehat{\epsilon}_{n, \alpha}^2$. From the existing literature on the empirical critical radius \cite{raskutti2012minimax, raskutti2014early, wainwright2019high, yang2017randomized}, it is already known that the non-smooth version $\widehat{\epsilon}_n^2$ is the typical quantity that leads to minimax rates in the RKHS (see also Theorem~\ref{theorem_yang}). 
%
%
The behavior of $\widehat{\epsilon}_{n,\alpha}^2$ with respect to $n$ is likely to depend on $\alpha$, as emphasized by the notation. Intuitively, this suggests that there could exist a range of values of $\alpha$, for which $\widehat{\epsilon}_{n,\alpha}^2$ is of the same order as (or faster than) $\widehat{\epsilon}_{n}^2$, leading therefore to optimal rates. 

Another striking aspect of Ineq.~\eqref{in_expectation} is related to the additional terms involving the exponential function in Ineq. \eqref{in_expectation}. As far as \eqref{main_inequality} is a statement with "high probability", this term is expected to converge to 0 at a rate depending on $n\widehat{\epsilon}_{n,\alpha}^2$. Therefore, the final convergence rate as well as the fact that this term is (or not) negligible will depend on $\alpha$.

As a consequence of Theorem \ref{theorem_yang}, as far as there exist values of $\alpha$ such that $\widehat{\epsilon}_{n, \alpha}^2 $ is at most as large as $\widehat{\epsilon}_n^2$, the estimator $f^{\tau_\alpha}$ is optimal.
\subsection{Consequences for $\beta$-polynomial eigenvalue-decay kernels}

The leading idea in the present section is identifying values of $\alpha$, for which the bound (\ref{main_inequality}) from Theorem~\ref{th:3} scales as $R^2 \widehat{\epsilon}_n^2$.

Let us recall the definition of a polynomial decay kernel from \eqref{beta-polynomial}:
\begin{equation*} 
    c i^{-\beta} \leq \mu_i \leq C i^{-\beta}, \ i = 1, 2, \ldots, \ \ \textnormal{ for } \beta > 1 \textnormal{ and numeric constants } c, C > 0.
\end{equation*}
One typical example of the reproducing kernel satisfying this condition is the Sobolev kernel on $[0, 1] \times [0, 1]$ given by $\mathbb{K}(x, x^\prime) = \min \{x, x^\prime \}$ with $\beta = 2$ \cite{raskutti2014early}. The corresponding RKHS is the first-order Sobolev class, that is, the class of functions that are almost everywhere differentiable with the derivative in $L_2[0, 1]$. 
\begin{lemma} \label{epsilons_comparaison}
For any $\beta$-polynomial eigenvalue decay kernel, there exist numeric constants $c_1, c_2 > 0$ such that for $\alpha < 1/\beta$, one has 
\begin{align*}
    c_1 \widehat{\epsilon}_n^2 \leq \widehat{\epsilon}_{n, \alpha}^2 \leq c_2 \widehat{\epsilon}_n^2  \asymp \left( \frac{\sigma^2}{2 R^2 n} \right)^{\frac{\beta}{\beta + 1}} .
\end{align*} 
\end{lemma}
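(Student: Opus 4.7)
The plan is to compute both $\widehat{\epsilon}_n^2$ and $\widehat{\epsilon}_{n,\alpha}^2$ up to absolute multiplicative constants by directly solving the (squared) smoothed fixed-point equation under the polynomial decay assumption, and then compare them. Recall from Appendix~\ref{auxiliary} that $\widehat{\epsilon}_{n,\alpha}$ realizes equality in \eqref{fixed_point_smooth}, so squaring \eqref{fixed_point_smooth} yields
\begin{equation*}
    \Phi_\alpha(\widehat{\epsilon}_{n,\alpha}) \;\coloneqq\; \frac{1}{n}\sum_{i=1}^r \widehat{\mu}_i^{\alpha}\, \min\bigl\{\widehat{\mu}_i, \widehat{\epsilon}_{n,\alpha}^{\,2}\bigr\} \;=\; \frac{4 R^2}{\sigma^2}\, \widehat{\epsilon}_{n,\alpha}^{\,2(2+\alpha)}.
\end{equation*}
The case $\alpha=0$ is the same equation for the unsmoothed radius $\widehat{\epsilon}_n$, so it suffices to estimate $\Phi_\alpha(\epsilon)$ sharply for $\alpha\in[0,1/\beta)$.

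\textbf{Step 1: Sharp two-sided bounds on the defect sum.} Using $c\, i^{-\beta}\le \widehat{\mu}_i\le C\, i^{-\beta}$ from~\eqref{beta-polynomial}, I split the sum at the index $d_\epsilon \coloneqq \lfloor \epsilon^{-2/\beta}\rfloor$ (the threshold where $\widehat{\mu}_i$ and $\epsilon^2$ swap dominance, up to constants). For the low-frequency part, the assumption $\alpha\beta<1$ ensures the power is integrable with divergent partial sums, so an integral comparison gives
\begin{equation*}
    \sum_{i\le d_\epsilon} \widehat{\mu}_i^{\alpha}\, \epsilon^2 \;\asymp\; \epsilon^2 \sum_{i\le d_\epsilon} i^{-\alpha\beta} \;\asymp\; \epsilon^2\, d_\epsilon^{\,1-\alpha\beta} \;\asymp\; \epsilon^{\,2 + 2\alpha - 2/\beta}.
\end{equation*}
For the high-frequency part, since $\beta>1$ implies $(1+\alpha)\beta>1$, another integral comparison yields
\begin{equation*}
    \sum_{i> d_\epsilon} \widehat{\mu}_i^{\,1+\alpha} \;\asymp\; \sum_{i> d_\epsilon} i^{-(1+\alpha)\beta} \;\asymp\; d_\epsilon^{\,1-(1+\alpha)\beta} \;\asymp\; \epsilon^{\,2 + 2\alpha - 2/\beta}.
\end{equation*}
Summing the two contributions gives constants $\underline{\kappa},\overline{\kappa}>0$ (depending only on $c$, $C$, $\alpha$, $\beta$) such that, for all sufficiently small $\epsilon$,
\begin{equation*}
    \underline{\kappa}\, \epsilon^{\,2+2\alpha-2/\beta} \;\le\; \Phi_\alpha(\epsilon) \;\le\; \overline{\kappa}\, \epsilon^{\,2+2\alpha-2/\beta}.
\end{equation*}

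\textbf{Step 2: Solve the fixed-point equation.} Plugging the two-sided bound into $\Phi_\alpha(\widehat{\epsilon}_{n,\alpha}) = \tfrac{4R^2}{\sigma^2}\widehat{\epsilon}_{n,\alpha}^{\,2(2+\alpha)}$ and rearranging, the $\alpha$ dependence cancels on the exponents: one obtains
\begin{equation*}
    \widehat{\epsilon}_{n,\alpha}^{\,2 + 2/\beta} \;\asymp\; \frac{\sigma^2}{R^2 n}, \qquad\text{hence}\qquad \widehat{\epsilon}_{n,\alpha}^{\,2} \;\asymp\; \Bigl(\tfrac{\sigma^2}{2R^2 n}\Bigr)^{\beta/(\beta+1)},
\end{equation*}
with constants depending only on $c$, $C$, $\alpha$, $\beta$. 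Specializing to $\alpha=0$ gives exactly the same order for $\widehat{\epsilon}_n^2$, yielding the two-sided comparison $c_1\widehat{\epsilon}_n^2 \le \widehat{\epsilon}_{n,\alpha}^2 \le c_2 \widehat{\epsilon}_n^2$.

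\textbf{Main obstacle.} The decisive step is Step~1, where the assumption $\alpha<1/\beta$ must enter in a non-cosmetic way: it is precisely the condition that makes the low-frequency partial sum $\sum_{i\le d_\epsilon} i^{-\alpha\beta}$ grow like $d_\epsilon^{\,1-\alpha\beta}$ instead of saturating to a finite constant (which would happen for $\alpha\beta>1$) or producing a logarithmic factor (at $\alpha\beta=1$). If instead $\alpha\ge 1/\beta$, then the low-frequency sum is bounded, the scaling on the left-hand side of the fixed-point equation changes to $\epsilon^2$, and the resulting exponent becomes $1/(1+\alpha) < \beta/(\beta+1)$, producing a strictly larger critical radius. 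Carefully tracking the integral comparisons and verifying that the two halves of the sum are of the same order (so no logarithmic slack appears in the regime $\alpha<1/\beta$) is the only delicate part of the argument.
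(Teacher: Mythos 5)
Your proof is correct and takes essentially the same route as the paper, which delegates to Lemmas~\ref{critical_radius_empirical} and~\ref{critical_radius_empirical_smooth}: both rest on splitting the kernel complexity sum at the threshold $M\asymp\epsilon^{-2/\beta}$, estimating the two halves as $\asymp\epsilon^{2+2\alpha-2/\beta}$ via integral comparison (using $\alpha\beta<1$ and $(1+\alpha)\beta>1$), and solving the fixed-point equation so that the $\alpha$-dependence cancels in the exponent. You are in fact slightly more complete than the paper's Lemma~\ref{critical_radius_empirical_smooth}, whose written proof only derives an upper bound on $\widehat{\mathcal{R}}_{n,\alpha}$, whereas the two-sided $\asymp$ conclusion requires the matching lower bound $c\,i^{-\beta}\le\widehat{\mu}_i$ that you invoke explicitly.
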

The proof of Lemma~\ref{epsilons_comparaison} was deferred to Lemma \ref{critical_radius_empirical_smooth} in Appendix~\ref{general_appendix} and is not reproduced here. 
Therefore, if $\alpha \beta < 1$, then $\widehat{\epsilon}_{n, \alpha}^2 \asymp \widehat{\epsilon}_n^2 \asymp \left( \frac{\sigma^2}{2 R^2 n} \right)^{\frac{\beta}{\beta + 1}}$. 
Let us now recall from \eqref{lower.bound.alpha} that Assumption~\ref{sufficient_smoothing} holds for $\alpha\geq (\beta+1)^{-1}$.
All these arguments lead us to the next result, which establishes the minimax optimality of $\tau_\alpha$ with any kernel satisfying the $\beta$-polynomial eigenvalue-decay assumption, as long as $\alpha \in \left[\frac{1}{\beta + 1}, \min \left\{ \frac{1}{\beta}, \frac{1}{2} \right\} \right)$.
\begin{corollary} \label{pdk_corollary}
Under Assumptions~\ref{a1},~\ref{a2}, \ref{additional_assumption_gd_krr}, and the $\beta$-polynomial eigenvalue decay (\ref{beta-polynomial}), for any $\alpha \in \left[\frac{1}{\beta + 1},\min \left\{\frac{1}{\beta}, \frac{1}{2} \right\}\right)$, the early stopping rule $\tau_{\alpha}$ satisfies 
\begin{equation}
    \mathbb{E}_{\varepsilon}\lVert f^{\tau_{\alpha}} - f^* \rVert_n^2 \asymp \underset{\widehat{f}}{\inf} \underset{\lVert f^* \rVert_{\mathcal{H}} \leq R}{\sup} \mathbb{E}_{\varepsilon} \lVert \widehat{f} - f^* \rVert_n^2,
\end{equation}
where the infimum is taken over all measurable functions of the input data.
\end{corollary}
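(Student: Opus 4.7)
The plan is to combine the in-expectation upper bound from Theorem~\ref{th:3} with the minimax lower bound from Theorem~\ref{theorem_yang}, using Lemma~\ref{epsilons_comparaison} as the bridge that turns the smoothed empirical critical radius $\widehat{\epsilon}_{n,\alpha}^2$ into the usual $\widehat{\epsilon}_n^2$. The constraints on $\alpha$ are pinned down by two separate considerations: the left endpoint ensures Assumption~\ref{sufficient_smoothing} is available, while the right endpoint ensures both Lemma~\ref{epsilons_comparaison} and the vanishing of the exponential remainder.

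First, I would verify that the hypotheses of Theorem~\ref{th:3} are satisfied in the present regime. The $\beta$-polynomial eigenvalue decay assumption gives Assumption~\ref{assumption_on_variance} via the example immediately after its statement, with constant $\mathcal{A} = (C/c)/(\beta-1)$, and gives Assumption~\ref{sufficient_smoothing} via \eqref{lower.bound.alpha} provided $\alpha \geq 1/(\beta+1)$, with $\mathcal{M} = 2(C/c)^2$. Since the other three assumptions are direct hypotheses of the corollary, Theorem~\ref{th:3} applies and produces
\begin{equation*}
\mathbb{E}_{\varepsilon} \lVert f^{\tau_\alpha} - f^* \rVert_n^2 \leq C R^2 \widehat{\epsilon}_{n,\alpha}^2 + 6\max\{\sigma^2,R^2\} \exp\!\Bigl[-c_3 \tfrac{R^2}{\sigma^2} n \widehat{\epsilon}_{n,\alpha}^{2(1+\alpha)}\Bigr].
\end{equation*}

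Second, I would invoke Lemma~\ref{epsilons_comparaison}, which under $\alpha < 1/\beta$ yields $\widehat{\epsilon}_{n,\alpha}^2 \asymp \widehat{\epsilon}_n^2 \asymp [\sigma^2/(R^2 n)]^{\beta/(\beta+1)}$. Plugging this into the exponent gives $n \widehat{\epsilon}_{n,\alpha}^{2(1+\alpha)} \asymp n^{\,1 - \beta(1+\alpha)/(\beta+1)}$, and the exponent $1 - \beta(1+\alpha)/(\beta+1)$ is strictly positive precisely when $\alpha < 1/\beta$. Hence the exponential remainder decays faster than any polynomial in $n$, so it is of smaller order than $R^2 \widehat{\epsilon}_n^2$ and can be absorbed into the leading constant. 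This delivers the upper bound
\begin{equation*}
\mathbb{E}_{\varepsilon} \lVert f^{\tau_\alpha} - f^* \rVert_n^2 \,\lesssim\, R^2 \widehat{\epsilon}_n^2.
\end{equation*}

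Third, for the matching lower bound, I would apply Theorem~\ref{theorem_yang}. Assumption~\ref{assumption_on_variance} holds at $\alpha=0$ under $\beta$-polynomial decay (by the same example), so
\begin{equation*}
\inf_{\widehat{f}} \sup_{\lVert f^*\rVert_{\mathcal{H}}\leq R} \mathbb{E}_{\varepsilon} \lVert \widehat{f} - f^* \rVert_n^2 \,\geq\, c_l R^2 \widehat{\epsilon}_n^2.
\end{equation*}
Combining the two inequalities and noting that the supremum of the upper bound over $\lVert f^*\rVert_{\mathcal{H}}\leq R$ is also $\lesssim R^2\widehat{\epsilon}_n^2$ (since the upper bound depends on $f^*$ only through $R$) yields the claimed equivalence $\mathbb{E}_{\varepsilon}\lVert f^{\tau_\alpha} - f^*\rVert_n^2 \asymp R^2 \widehat{\epsilon}_n^2$.

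The main obstacle is the careful bookkeeping of the exponential remainder in \eqref{in_expectation}: the interval $[1/(\beta+1), 1/\beta)$ is forced on us by exactly two opposing constraints, namely Assumption~\ref{sufficient_smoothing} (lower bound on $\alpha$) and the requirement $n\widehat{\epsilon}_{n,\alpha}^{2(1+\alpha)}\to\infty$ polynomially fast (strict upper bound on $\alpha$). Checking that both endpoints are simultaneously compatible, which is nontrivial only when $\beta>1$, is what makes the result meaningful; once that is done, the rest is a direct combination of results already established.
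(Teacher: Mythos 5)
Your proof is correct and follows the same route the paper's discussion lays out: verify Assumptions~\ref{assumption_on_variance} and~\ref{sufficient_smoothing} from the $\beta$-polynomial decay, apply Theorem~\ref{th:3}, convert $\widehat{\epsilon}_{n,\alpha}^2$ to $\widehat{\epsilon}_n^2$ via Lemma~\ref{epsilons_comparaison}, absorb the exponential remainder because $n\widehat{\epsilon}_{n,\alpha}^{2(1+\alpha)}\to\infty$ polynomially when $\alpha\beta<1$, and match against the lower bound of Theorem~\ref{theorem_yang}. The bookkeeping of the two constraints forcing the interval $[1/(\beta+1),1/\beta)$ is also exactly as intended, though your closing remark that compatibility is nontrivial only for $\beta>1$ is beside the point since $1/(\beta+1)<1/\beta$ for every $\beta>0$ and the paper always assumes $\beta>1$.
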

Corollary~\ref{pdk_corollary} establishes an optimality result in the fixed-design framework since as long as $(\beta+1)^{-1} \leq \alpha < \min \left\{\beta^{-1}, \frac{1}{2}\right\}$, the upper bound matches the lower bound up to multiplicative constants. Moreover, this property holds uniformly with respect to $\beta>1$, provided the value of $\alpha$ is chosen appropriately.
An interesting feature of this bound is that the optimal value of $\alpha$ only depends on the (polynomial) decay rate of the empirical eigenvalues of the normalized Gram matrix. This suggests that any effective estimator of the unknown parameter $\beta$ could be plugged into the above (fixed-design) result and would lead to an optimal rate. 
%
Note that \cite{stankewitz2019smoothed} has emphasized a similar trade-off for the smoothing parameter $\alpha$ (polynomial smoothing), considering the spectral cut-off estimator in the Gaussian sequence model.
Regarding convergence rates, Corollary~\ref{pdk_corollary} combined with Lemma~\ref{epsilons_comparaison} suggests that the convergence rate of the expected risk is of the order $\mathcal{O}\left(n^{-\frac{\beta}{\beta+1}}\right)$. This is the same as the already known one in nonparametric regression in the random design framework \cite{raskutti2014early, stone1985additive}, which is known to be minimax-optimal as long as $f^*$ belongs to the RKHS $\mathcal{H}$.

\section{Empirical comparison with existing stopping rules} \label{sec:5}
The present section aims at illustrating the practical behavior of several stopping rules discussed along the paper as well as making a comparison with existing alternative stopping rules.

\subsection{Stopping rules involved}

The empirical comparison is carried out between the stopping rules $\tau$ \eqref{tau} and $\tau_{\alpha}$ with $\alpha \in \left[\frac{1}{\beta + 1}, \min \left\{ \frac{1}{\beta}, \frac{1}{2} \right\}\right)$ \eqref{t_alpha}, and four alternative stopping rules that are briefly described in the what follows. For the sake of comparison, most of them correspond to early stopping rules already considered in \cite{raskutti2014early}.

\subsubsection*{Hold-out stopping rule}
We consider a procedure based on the hold-out idea \cite{arlot2010survey}. Data $\{(x_i, y_i)\}_{i=1}^n$ are split into two parts: the training sample $S_{\textnormal{train}} = (x_{\textnormal{train}}, y_{\textnormal{train}})$ and the test sample $S_{\textnormal{test}} = (x_{\textnormal{test}}, y_{\textnormal{test}})$ so that the training sample and test sample represent a half of the whole dataset. 
We train the learning algorithm for $t = 0, 1, \ldots$ and estimate the risk for each $t$ by $R_{\textnormal{ho}}(f^t) = \frac{1}{n}\sum_{i \in S_{\textnormal{test}}}((\widehat{y}_{\textnormal{test}})_i - y_i)^2$, where $(\widehat{y}_{\textnormal{test}})_i$ denotes the output of the algorithm trained at iteration $t$ on $S_{\textnormal{train}}$ and evaluated at the point $x_i$ of the test sample. 
The final stopping rule is defined as
\begin{equation} \label{t_ho}
    \widehat{\textnormal{T}}_{\textnormal{HO}} = \textnormal{argmin} \Big\{ t \in \mathbb{N} \ | \ R_{\textnormal{ho}} (f^{t + 1}) > R_{\textnormal{ho}} (f^t) \Big\} - 1.
\end{equation}
Although it does not completely use the data for training (loss of information), the hold-out strategy has been proved to output minimax-optimal estimators in various contexts (see, for instance, \cite{article, caponnetto2010cross} with Sobolev spaces and $\beta \leq 2$).

\subsubsection*{V-fold stopping rule}
The observations $\{(x_i, y_i)\}_{i=1}^n$ are randomly split into $V = 4$ equal sized blocks. 
At each round (among the $V$ ones), $V - 1$ blocks are devoted to training $S_{\textnormal{train}} = (x_{\textnormal{train}}, y_{\textnormal{train}})$, and the remaining one serves for the test sample $S_{\textnormal{test}} = (x_{\textnormal{test}}, y_{\textnormal{test}})$.
At each iteration $t = 1, \ldots$, the risk is estimated by $R_{\textnormal{VFCV}}(f^t) = \frac{1}{V - 1} \sum_{j=1}^{V-1} \frac{1}{n/V}\sum_{i \in S_{\textnormal{test}}(j)} ((\widehat{y}_{\textnormal{test}})_i - y_i)^2$, where $\widehat{y}_{\textnormal{test}}$ was described for the hold-out stopping rule. 
The final stopping time is 
\begin{equation} \label{t_vf}
    \widehat{\textnormal{T}}_{\textnormal{VFCV}} = \textnormal{argmin} \big\{ t \in \mathbb{N} \ | \  R_{\textnormal{VFCV}}(f^{t+1}) > R_{\textnormal{VFCV}}(f^t) \big\} - 1.
\end{equation}
V-fold cross validation is widely used in practice since, on the one hand, it is more computationally tractable than other splitting-based methods such as leave-one-out or leave-p-out (see the survey \cite{arlot2010survey}), and on the other hand, it enjoys a better statistical performance than the hold-out (lower variability). 
\subsubsection*{Raskutti-Wainwright-Yu stopping rule (from \cite{raskutti2014early})}
The use of this stopping rule heavily relies on the assumption that $\lVert f^* \rVert_{\mathcal{H}}^2 $ is known, which is a strong requirement in practice.
It controls the bias-variance trade-off by using upper bounds on the bias and variance terms. The latter involves the localized empirical Rademacher complexity $\widehat{\mathcal{R}}_{n}\left(\frac{1}{\sqrt{\eta t}}, \mathcal{H}\right)$. It stops as soon as (upper bound of) the bias term becomes smaller than (upper bound on) the variance term, which leads to
\begin{equation} \label{t_w}
    \widehat{\textnormal{T}}_{\textnormal{RWY}} = \textnormal{argmin} \Big\{ t \in \mathbb{N} \ | \  \widehat{\mathcal{R}}_n\Big(\frac{1}{\sqrt{\eta t}}, \mathcal{H}\Big) > (2 e \sigma \eta t)^{-1}  \Big\} - 1.
\end{equation}
\subsubsection*{Theoretical minimum discrepancy-based stopping rule $t^*$}
The fourth stopping rule is the one introduced in \eqref{t_star}. It relies on the minimum discrepancy principle and involves the (theoretical) expected empirical risk $\mathbb{E}_{\varepsilon} R_t$:
\begin{equation*}
    t^* = \inf \left\{ t \in \mathbb{N} \ | \ \mathbb{E}_{\varepsilon} R_t \leq \sigma^2 \right\}.
\end{equation*}
This stopping time is introduced for comparison purposes only since it cannot be computed in practice. 
This rule is proved to be optimal (see Appendix~\ref{finite_rank_appendix}) for \textit{any bounded reproducing kernel}, so it could serve as a reference in the present empirical comparison.
\subsubsection*{Oracle stopping rule}
The "oracle" stopping rule defines the first time the risk curve starts to increase. 
\begin{equation} \label{t_or}
    t_{\textnormal{or}} = \textnormal{argmin} \big\{ t \in \mathbb{N} \ | \ \mathbb{E}_{\varepsilon}\lVert f^{t+1} - f^* \rVert_n^2 > \mathbb{E}_{\varepsilon}\lVert f^t - f^* \rVert_n^2 \big\} - 1. 
\end{equation}
In situations where only one global minimum does exists for the risk, this rule coincides with the global minimum location. 
Its formulation reflects the realistic constraint that we do not have access to the whole risk curve (unlike in the classical model selection setup).

\subsection{Simulation design}
Artificial data are generated according to the regression model $y_j = f^*(x_j) + \varepsilon_j$, $j = 1,\ldots, n$, where $\varepsilon_j \overset{\textnormal{i.i.d.}}{\sim} \mathcal{N}(0, \sigma^2)$ with the equidistant $x_j = j/n, \ j = 1, \ldots, n$, and  $\sigma = 0.15$. 
The same experiments have been also carried out with uniform $x_i \sim \mathbb{U}[0, 1]$ (not reported here) without any change regarding the conclusions.
The sample size $n$ varies from $40$ to $400$. 

%
The gradient descent algorithm \eqref{iterations} has been used with the step-size $\eta = (1.2\, \widehat{\mu}_1)^{-1}$ and initialization $F^0 = [0, \ldots, 0]^\top$.

The present comparison involves two regression functions with the same $L_2(\mathbb{P}_n)$-norms of the signal $\lVert f^* \rVert_n \approx 0.28$: $(i)$ a piecewise linear function called "smooth" $f^*(x) = |x - 1/2|-1/2$, and $(ii)$ a "sinus" $f^*(x) = 0.4 \ \textnormal{sin}(4 \pi x)$. 

%
To ease the comparison, the piecewise linear regression function was set up as in \cite[Figure 3]{raskutti2014early}.

The case of finite-rank kernels is addressed in Section~\ref{sec.finite.rank.simuls} with the so-called polynomial kernel of degree $3$ defined by $\mathbb{K}(x_1, x_2) = (1 + x_1^{\top}x_2)^3$ on the unit square $[0, 1] \times [0, 1]$.  
By contrast, Section~\ref{sec.poly.decay.simuls} tackles the polynomial decay kernels with the first-order Sobolev kernel $\mathbb{K}(x_1, x_2) = \min \{ x_1, x_2 \}$ on the unit square $[0, 1] \times [0, 1]$.

The performance of the early stopping rules is measured in terms of the $L_2(\mathbb{P}_n)$ squared norm $\lVert f^t - f^* \rVert_n^2$ averaged over $N = 100$ independent trials.

For our simulations, we use a variance estimation method that is described in Section~\ref{variance_decay}. This method is asymptotically unbiased, which is sufficient for our purposes.


\subsection{Results of the simulation experiments}\label{sec.simul.experiments.results}

\subsubsection{Finite-rank kernels}\label{sec.finite.rank.simuls}

\begin{figure}[htbp]
    \centering
    \begin{subfigure}{6cm}
        \centering
        \includegraphics[width=\linewidth]{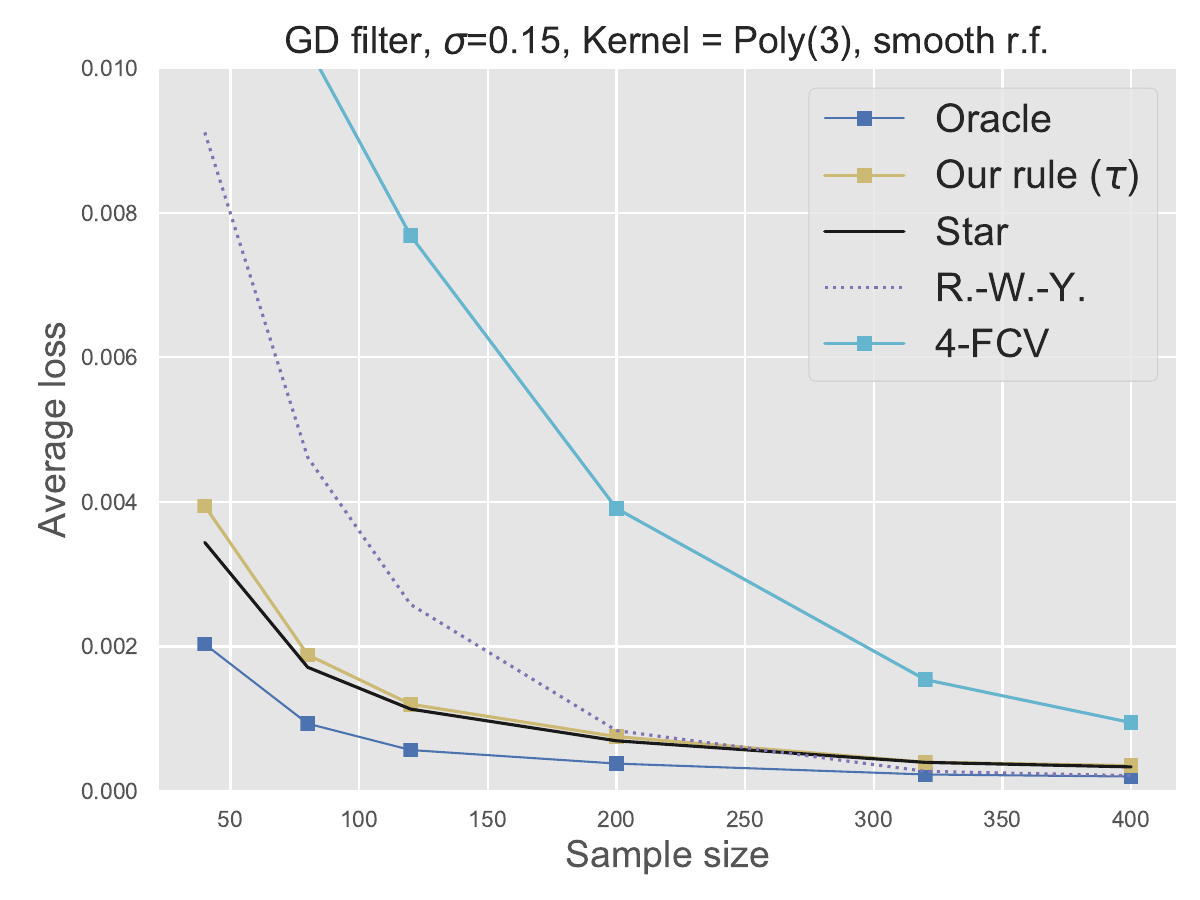} 
        \caption{\label{smooth}}
    \end{subfigure}
    \hfill
    \begin{subfigure}{6cm}
        \centering
        \includegraphics[width=\linewidth]{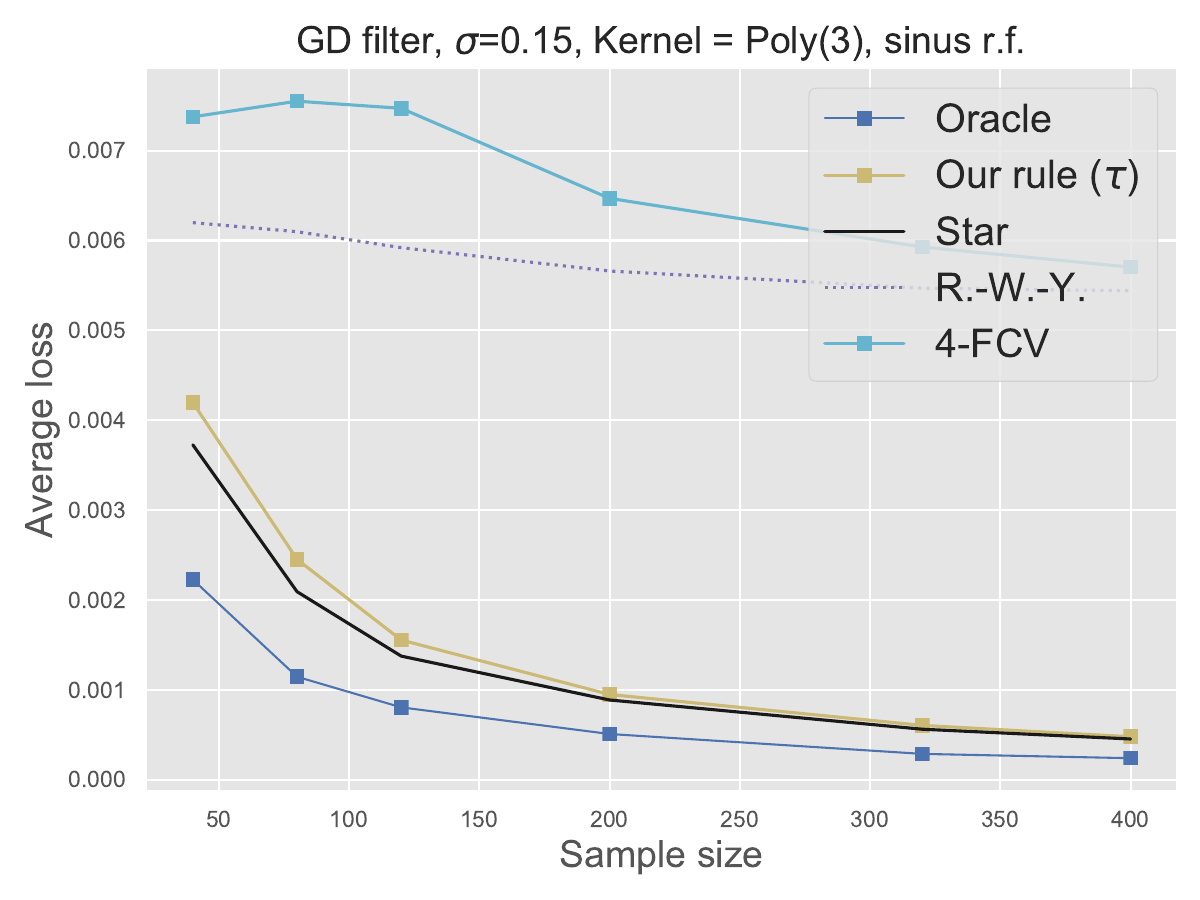} 
        \caption{\label{sinus}}
    \end{subfigure}
    \caption{Kernel gradient descent with the step-size $\eta = 1 / (1.2 \widehat{\mu}_1)$ and polynomial kernel $\mathbb{K}(x_1, x_2) = (1 + x_1^{\top}x_2)^3, \ x_1, x_2 \in [0, 1]$, for the estimation of two noised regression functions: the smooth $f^*(x) = |x - 1/2| - 1/2$ for panel (a), and the "sinus" $f^*(x) = 0.4 \ \textnormal{sin}(4 \pi x)$ for panel (b), with the equidistant covariates $x_j = j/n$. Each curve corresponds to the $L_2(\mathbb{P}_n)$ squared norm error for the stopping rules (\ref{t_or}), (\ref{t_star}), (\ref{t_w}), (\ref{t_vf}), (\ref{tau}) averaged over $100$ independent trials, versus the sample size $n = \{40, 80, 120, 200, 320, 400 \}$.}
    \label{fig:finite}
\end{figure}

Figure~\ref{fig:finite} displays the (averaged) $L_2(\mathbb{P}_n)$-norm error of the oracle stopping rule (\ref{t_or}), our stopping rule $\tau$ (\ref{tau}), $t^*$ (\ref{t_star}), minimax-optimal stopping rule $\widehat{\textnormal{T}}_{\textnormal{RWY}}$ (\ref{t_w}), and $4$-fold cross validation stopping time $\widehat{\textnormal{T}}_{\textnormal{VFCV}}$ (\ref{t_vf}) versus the sample size. 
Figure~\ref{smooth} shows the results for the piecewise linear regression function whereas Figure~\ref{sinus} corresponds to the "sinus" regression function. 

All the curves decrease as $n$ grows. 
From these graphs, the overall worst performance is achieved by $\widehat{\textnormal{T}}_{\textnormal{VFCV}}$, especially with a small sample size, which can be due to the additional randomness induced by the preliminary random splitting with $4-FCV$.
By contrast, the minimum discrepancy-based stopping rules ($\tau$ and $t^*$) exhibit the best performances compared to the results of $\widehat{\textnormal{T}}_{\textnormal{VFCV}}$ and $\widehat{\textnormal{T}}_{\textnormal{RWY}}$. 
The averaged mean-squared error of $\tau$ is getting closer to the one of $t^*$ as the number of samples $n$ increases, which was expected from the theory and also intuitively, since $\tau$ has been introduced as an estimator of $t^*$. 
From Figure~\ref{smooth}, $\widehat{\textnormal{T}}_{\textnormal{RWY}}$ is less accurate for small sample sizes, but improves a lot as $n$ grows up to achieving a performance similar to that of $\tau$. This can result from the fact that $\widehat{\textnormal{T}}_{\textnormal{RWY}}$ is built from upper bounds on the bias and variance terms, which are likely to be looser with a small sample size, but achieve an optimal convergence rate as $n$ increases.
On Figure~\ref{sinus}, the reason why $\tau$ exhibits (strongly) better results than $\widehat{\textnormal{T}}_{\textnormal{RWY}}$ owes to the main assumption on the regression function, namely that $ \lVert f^* \rVert_{\mathcal{H}} \leq 1$. This could be violated for the "sinus" function. 

\subsubsection{Polynomial eigenvalue decay kernels}\label{sec.poly.decay.simuls}
Figure~\ref{comp:pdk} displays the resulting (averaged over $100$ repetitions) $L_2(\mathbb{P}_n)$-error of $\tau_\alpha$ (with $\alpha = (\beta + 1)^{-1} = 0.33$) \eqref{t_alpha}, $\widehat{\textnormal{T}}_{\textnormal{RWY}}$ \eqref{t_w}, $t^*$ \eqref{t_star}, and $\widehat{\textnormal{T}}_{\textnormal{HO}}$ \eqref{t_ho} versus the sample size. 
\begin{figure}[htbp] \label{fig:finite_kernel}
    \centering
    \begin{subfigure}{6cm}
        \centering
        \includegraphics[width=\linewidth]{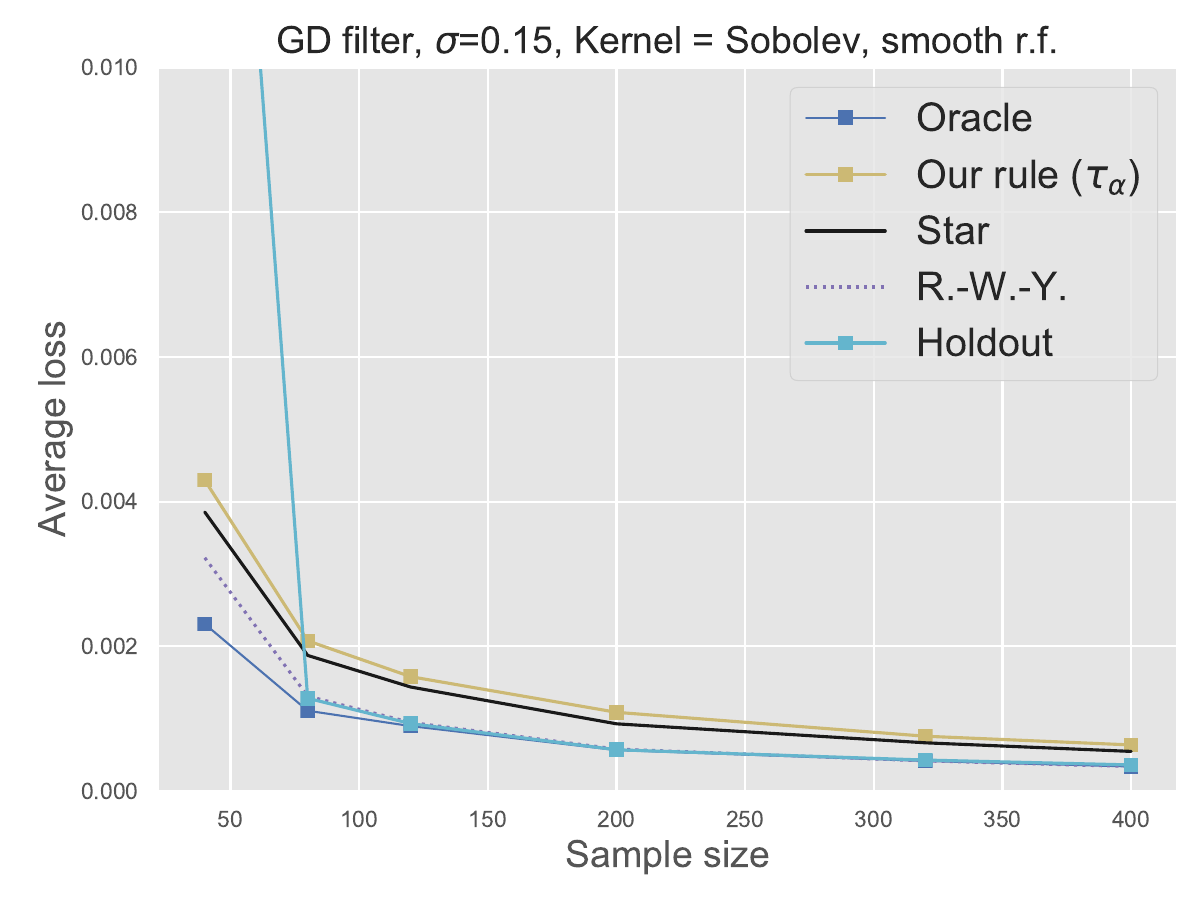} 
        \caption{\label{fig.smooth.poly.Decay}}
    \end{subfigure}
    \hfill
    \begin{subfigure}{6cm}
        \centering
        \includegraphics[width=\linewidth]{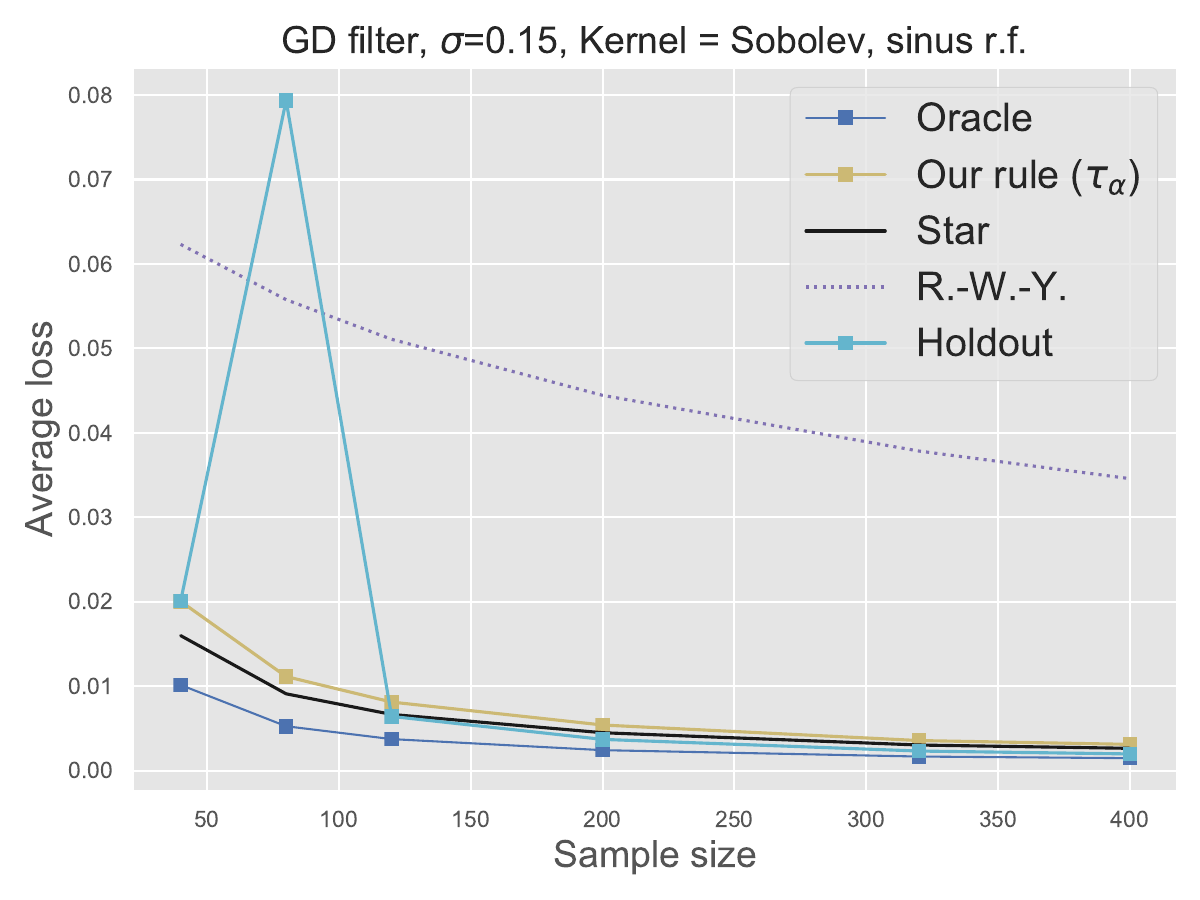} 
        \caption{\label{fig.sinus.poly.Decay}}
    \end{subfigure}
    \caption{Kernel gradient descent (\ref{iterations}) with the step-size $\eta = 1 / (1.2 \widehat{\mu}_1)$ and Sobolev kernel $\mathbb{K}(x_1, x_2) = \min \{ x_1, x_2\}, \ x_1, x_2 \in [0, 1]$ for the estimation of two noised regression functions: the smooth $f^*(x) = |x - 1/2| - 1/2$ for panel (a) and the "sinus" $f^*(x) = 0.4 \ \textnormal{sin}(4\pi x)$ for panel (b), with the equidistant covariates $x_j = j/n$. Each curve corresponds to the $L_2(\mathbb{P}_n)$ squared norm error for the stopping times (\ref{t_or}), (\ref{t_star}), (\ref{t_w}), (\ref{t_ho}), (\ref{t_alpha}) with $\alpha = 0.33$, averaged over $100$ independent trials, versus the sample size $n = \{40, 80, 120, 200, 320, 400 \}$.}
    \label{comp:pdk}
\end{figure}
Figure~\ref{fig.smooth.poly.Decay} shows that all stopping rules seem to work equivalently well, although there is a slight advantage for $\widehat{\textnormal{T}}_{\textnormal{HO}}$ and $\widehat{\textnormal{T}}_{\textnormal{RWY}}$ compared to $t^*$ and $\tau_\alpha$. However, as $n$ grows to $400$, the performances of all stopping rules become very close to each other. Let us emphasize that the true value of $\beta$ is not known in these experiments. Therefore, the value $(\beta + 1)^{-1} = 0.33$ has been estimated from the decay of the empirical eigenvalue of the normalized Gram matrix. This can explain why the performance of $\tau_\alpha$ remains worse than that of $\widehat{\textnormal{T}}_{\textnormal{RWY}}$.

The story described by Figure~\ref{fig.sinus.poly.Decay} is somewhat different. The first striking remark is that $\widehat{\textnormal{T}}_{\textnormal{RWY}}$ completely fails on this example, which still stems from the (unsatisfied) constraint on the $\mathcal{H}$-norm of $f^*$.
However, the best performance is still achieved by the Hold-out stopping rule, although $\tau_\alpha$ and $t^*$ remain very close to the latter.
The fact that $t^*$ remains close to the oracle stopping rule (without any need for smoothing) supports the idea that the minimum discrepancy is a reliable principle for designing an effective stopping rule. The deficiency of $\tau$ (by contrast to $\tau_\alpha$) then results from the variability of the empirical risk, which does not remain close enough to its expectation. This bad behavior is then balanced by introducing the polynomial smoothing at level $\alpha$ within the definition of $\tau_\alpha$, which enjoys close to optimal practical performances.

Let us also mention that $\widehat{\textnormal{T}}_{\textnormal{HO}}$ exhibit some variability, in particular, with small sample sizes as illustrated by Figures~\ref{fig.smooth.poly.Decay} and~\ref{fig.sinus.poly.Decay}.

The overall conclusion is that the smoothed minimum discrepancy-based stopping time $\tau_\alpha$ leads to almost optimal performances provided $\alpha= (\beta+1)^{-1}$, where $\beta$ quantifies the polynomial decay of the empirical eigenvalues $\{ \widehat{\mu}_i \}_{i=1}^n$.
\subsection{Estimation of variance and decay rate for polynomial eigenvalue decay kernels} \label{variance_decay}
The purpose of the present section is to describe two strategies for estimating: $(i)$ the decay rate of the empirical eigenvalues of the normalized Gram matrix, and $(ii)$ the variance parameter $\sigma^2$.
\subsubsection{Polynomial decay parameter estimation}
From the empirical version of the polynomial decay assumption \eqref{beta-polynomial}, one can easily derive upper and lower bounds for $\beta$ as $\frac{\log(\widehat{\mu}_i / \widehat{\mu}_{i+1}) - \log(C / c)}{\log(1 + 1/i)} \leq \beta \leq \frac{\log(\widehat{\mu}_i / \widehat{\mu}_{i+1}) + \log(C / c)}{\log(1 + 1/i)}$.
The difference between these upper and lower bounds is equal to $\frac{2 \log (C / c)}{\log(1 + 1/i)}$, which is minimized for $i=1$. Then the best precision on the estimated value of $\beta$ is reached with $i=1$, which yields the estimator $\widehat{\beta} = \frac{\log ( \widehat{\mu}_1 / \widehat{\mu}_2 )}{\log2}$. 
%
%
\subsubsection{Variance parameter estimation}
There is a bunch of suggestions for variance estimation with linear smoothers; see, e.g., Section 5.6 in the book \cite{wasserman2006all}.
In our simulation experiments, two cases are distinguished: the situation where the reproducing kernel has finite rank $r$, and the situation where $\textnormal{rk}(T_{\mathbb{K}}) = \infty$. In both cases, an asymptotically unbiased estimator of $\sigma^2$ is designed. 
\paragraph{Finite-rank kernel.}
With such a finite-rank kernel, the estimation of the noise is made from the coordinates $\{ Z_i \}_{i=r+1}^n$ corresponding to the situation, where $G_i^* = 0, \ i > r$ (see Section 4.1.1 in \cite{raskutti2014early}).
Actually, these coordinates (which are pure noise) are exploited to build an easy-to-compute estimator of $\sigma^2$, that is,
\begin{equation} \label{sigma_est_const}
    \widehat{\sigma}^2 = \frac{\sum_{i=n - r + 1}^n Z_i^2}{n - r}.
\end{equation}
\paragraph{Infinite-rank kernel.}
If $rk(T_{\mathbb{K}}) = \infty$, we suggest using the following result.
%
%
%
%
\begin{lemma} \label{var_est}
For any regular kernel (see Theorem \ref{theorem_yang}), any value of $t$ satisfying $ \eta t  \cdot \widehat{\epsilon}_n^2  \to +\infty$ as $n\to +\infty$ yields that $\widehat{\sigma}^2 = \frac{R_{t}}{\frac{1}{n}\sum_{i=1}^n (1 - \gamma_i^{(t)})^2}$ is an asymptotically unbiased estimator of $\sigma^2$.
\end{lemma}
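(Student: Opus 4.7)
The plan is to first compute $\mathbb{E}_\varepsilon \widehat{\sigma}^2$ exactly, thereby reducing the claim to showing that a bias-over-denominator ratio vanishes, and then to control this ratio via qualification-type filter bounds combined with the polynomial spectral decay. Using $Z_i = G_i^* + \varepsilon_i$ with $\{\varepsilon_i\}$ independent $\mathcal{N}(0,\sigma^2)$, a direct calculation yields
\begin{equation*}
\mathbb{E}_\varepsilon R_{1,t} = B_1^2(t) + \sigma^2 D(t), \qquad D(t) := \frac{1}{n}\sum_{i=1}^r \widehat{\mu}_i (1-\gamma_i^{(t)})^2,
\end{equation*}
so $\mathbb{E}_\varepsilon \widehat{\sigma}^2 - \sigma^2 = B_1^2(t)/D(t)$, and it suffices to prove $B_1^2(t)/D(t) \to 0$.

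For the numerator I would invoke the qualification condition \eqref{qualification} with $\nu = 1$ (valid for both GD and KRR, with numeric constant $C_1$) to obtain the uniform filter bound $\widehat{\mu}_i (1-\gamma_i^{(t)}) \leq C_1/(\eta t)$. Coupled with the standard identity $\frac{1}{n}\sum_{i=1}^r (G_i^*)^2/\widehat{\mu}_i = (F^*)^\top K^{-1} F^* \leq \lVert f^* \rVert_{\mathcal{H}}^2 \leq R^2$, this gives
\begin{equation*}
B_1^2(t) = \frac{1}{n}\sum_{i=1}^r [\widehat{\mu}_i (1-\gamma_i^{(t)})]^2 \frac{(G_i^*)^2}{\widehat{\mu}_i} \leq \frac{C_1^2 R^2}{(\eta t)^2}.
\end{equation*}
For the denominator I would use the upper side of \eqref{beta-polynomial} to choose $j^* = \lceil (2C\eta t)^{1/\beta}\rceil$ so that $\eta t \widehat{\mu}_{j^*} \leq 1/2$; Assumption \ref{additional_assumption_gd_krr} then forces $\gamma_i^{(t)} \leq 1/2$, hence $(1-\gamma_i^{(t)})^2 \geq 1/4$ for every $i \geq j^*$. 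Plugging the lower side $\widehat{\mu}_i \geq c i^{-\beta}$ into the tail sum and using $\sum_{i \geq j^*} i^{-\beta} \asymp (j^*)^{1-\beta}/(\beta - 1)$ for $\beta > 1$ produces
\begin{equation*}
D(t) \;\geq\; \frac{c}{4n}\sum_{i=j^*}^r i^{-\beta} \;\gtrsim\; \frac{1}{n (\eta t)^{(\beta - 1)/\beta}},
\end{equation*}
provided $j^* \leq r$. The remaining regime $j^* > r$ is simpler since then $(1-\gamma_i^{(t)})^2 \asymp 1$ on the whole spectrum, so $D(t) \asymp \frac{1}{n}\sum_{i=1}^r \widehat{\mu}_i$, and the same conclusion follows a fortiori.

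Dividing the two bounds yields $B_1^2(t)/D(t) \lesssim R^2 n /(\eta t)^{(\beta+1)/\beta}$. Lemma \ref{critical_radius_empirical} (cf.\ Lemma \ref{epsilons_comparaison}) ensures $\widehat{\epsilon}_n^2 \asymp n^{-\beta/(\beta+1)}$ under $\beta$-polynomial decay, so the hypothesis $\eta t\, \widehat{\epsilon}_n^2 \to +\infty$ is equivalent to $\eta t \gg n^{\beta/(\beta+1)}$, which is exactly the statement $n/(\eta t)^{(\beta+1)/\beta} \to 0$. Hence $\mathbb{E}_\varepsilon \widehat{\sigma}^2 \to \sigma^2$, as required. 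The main obstacle I anticipate is the tight lower bound on $D(t)$: one must locate the spectral cutoff $j^* \asymp (\eta t)^{1/\beta}$ exactly where the filter transitions, and orchestrate Assumption \ref{additional_assumption_gd_krr} together with the two-sided polynomial decay so that the resulting power of $\eta t$ matches the scale $\widehat{\epsilon}_n^{-2}$ appearing in the hypothesis; the numerator bound and the reduction step are then essentially routine.
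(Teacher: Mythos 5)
Your proof is essentially the same approach as the paper's: compute $\mathbb{E}_\varepsilon\widehat\sigma^2 = \sigma^2 + B_1^2(t)/D(t)$, bound $B_1^2(t)\lesssim R^2/(\eta t)^2$, and lower bound $D(t)\gtrsim n^{-1}(\eta t)^{-(\beta-1)/\beta}$ by splitting the spectrum at a cutoff of order $(\eta t)^{1/\beta}$. Two small variations: the paper restricts to the explicit KRR expression $1-\gamma_i^{(t)}=(1+\eta t\widehat\mu_i)^{-1}$ and keeps the \emph{head} sum $\sum_{i\le i_0}i^\beta/(\eta t)^2$, whereas you work from the abstract bound in Assumption~\ref{additional_assumption_gd_krr} and keep the \emph{tail} sum $\sum_{i\ge j^*}\widehat\mu_i$. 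Both routes give the same rate and yours is cleaner in that it automatically covers GD as well as KRR. The reduction via the qualification condition is the same substance as Lemma~\ref{smooth_bias_upper_bound} with $\alpha=1$.

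One genuine flaw: your treatment of the regime $j^*>r$ is incorrect. When $j^*>r$, i.e.\ $\eta t \gtrsim r^\beta$, you do \emph{not} have $(1-\gamma_i^{(t)})^2\asymp 1$ on the whole spectrum — on the contrary, for the large eigenvalues (small $i$) one has $\eta t\widehat\mu_i\gg 1$, so $\gamma_i^{(t)}\to 1$ and $\widehat\mu_i(1-\gamma_i^{(t)})^2\to 0$; the assertion $D(t)\asymp \frac1n\sum_{i\le r}\widehat\mu_i$ fails badly for large $t$. The conclusion is salvageable (for KRR one then has $\widehat\mu_i(1-\gamma_i^{(t)})^2\asymp 1/(\eta^2t^2\widehat\mu_i)$, giving $D(t)\gtrsim r^{\beta+1}/(n\eta^2t^2)$ and hence $B_1^2(t)/D(t)\lesssim nR^2/r^{\beta+1}\to 0$ when $r\asymp n$ and $\beta>1$), but the argument needs to be rewritten. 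Note that the paper itself silently assumes $i_0\le r$ in its split, so it does not handle this regime either.
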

A sketch of the proof of Lemma~\ref{var_est} is given in Appendix~\ref{lemma_var_est}. 
Based on this lemma, we suggest taking $t = T$, where $T$ is the maximum number of iterations allowed to execute due to computational constraints. 
Notice that as long as we access closed-form expressions of the estimator, there is no need to compute all estimators for $t$ between $1\leq t\leq T$.  
The final estimator of $\sigma^2$ used in the experiments of Section~\ref{sec.simul.experiments.results} is given by
\begin{equation} \label{sigma_est_full}
    \widehat{\sigma}^2 = \frac{R_{T}}{\frac{1}{n} \sum_{i=1}^n (1 - \gamma_i^{(T)})^2}.
\end{equation}
\section{Conclusion} \label{sec:6}
In this paper, we describe spectral filter estimators (e.g., gradient descent, kernel ridge regression) for the non-parametric regression function estimation in RKHS. Two new data-driven early stopping rules $\tau$ (\ref{tau}) and $\tau_{\alpha}$ (\ref{t_alpha}) for these iterative algorithms are designed. In more detail, we show that for the infinite-rank reproducing kernels, $\tau$ has a high variance due to the variability of the empirical risk around its expectation, and we proposed a way to reduce this variability by means of smoothing the empirical $L_2(\mathbb{P}_n)$-norm (and, as a consequence, the empirical risk) by the eigenvalues of the normalized kernel matrix. We demonstrate in Corollaries \ref{finite_rank_corollary} and \ref{pdk_corollary} that our stopping times $\tau$ and $\tau_{\alpha}$ yield minimax-optimal rates, in particular, for finite-rank kernel classes and Sobolev spaces. It is worth emphasizing that computing the stopping times requires \textit{only} the estimation of the variance $\sigma^2$ and computing $(\widehat{\mu}_1, \ldots, \widehat{\mu}_n)$. Theoretical results are confirmed empirically: $\tau$ and $\tau_{\alpha}$ with the smoothing parameter $\alpha = (\beta + 1)^{-1}$, where $\beta$ is the polynomial decay rate of the eigenvalues of the normalized Gram matrix, perform favorably in comparison with stopping rules based on hold-out data and 4-fold cross-validation.

There are various open questions that could be tackled after our results. A deficiency of our strategy is that the construction of $\tau$ and $\tau_{\alpha}$ is based on the assumption that the regression function belongs to a known RKHS, which restricts (mildly) the smoothness of the regression function. We would like to understand how our results extend to other loss functions besides the squared loss (for example, in the classification framework), as it was done in \cite{wei2017early}. Another research direction could be to use early stopping with fast approximation techniques for kernels \cite{rudi2015less} to avoid calculation of all eigenvalues of the normalized Gram matrix that can be prohibited for large-scale problems.

\appendix \label{appendix}
\section{Useful results} \label{general_appendix}
In this section, we present several auxiliary lemmas that are repeatedly used in the paper. 
%
%
%
%
\begin{lemma}\cite[$\eta_t = \eta t$ in Lemma 8 and $\nu = \eta t$ in Lemma 13]{raskutti2014early} \label{gamma_bounds}
For any bounded kernel, with $\gamma_i^{(t)}$ corresponding to gradient descent or kernel ridge regression, for every $t \geq 0$,
\begin{align}
     \frac{1}{2}\min \{1, \eta t \widehat{\mu}_i \} \leq \gamma_i^{(t)} \leq \min \{1, \eta t \widehat{\mu}_i \}, \ \ i= 1,\ldots, n.
    %
\end{align}

\end{lemma}

The following result shows the magnitude of the smoothed critical radius for polynomial eigenvalue decay kernels.
\begin{lemma} \label{critical_radius_empirical_smooth}
Assume that $\widehat{\mu}_i \leq C i^{-\beta}, \ i = 1, 2, \ldots, n$,  for $\alpha \beta < 1$, one has
\begin{equation*}
    \widehat{\epsilon}_{n, \alpha}^2 \asymp \left[ \sqrt{\frac{C^\alpha}{1 - \alpha \beta}} + \sqrt{\frac{C^{1+\alpha}}{\beta (1 + \alpha) - 1}} \right]^{\frac{2 \beta}{\beta + 1}} \left[ \frac{\sigma^2}{2 R^2 n} \right]^{\frac{\beta}{\beta + 1}}.    
\end{equation*}
\end{lemma}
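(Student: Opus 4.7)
The plan is to solve explicitly the fixed-point inequality \eqref{fixed_point_smooth} that characterizes $\widehat{\epsilon}_{n,\alpha}$ by carefully splitting the defining sum at the index where $\widehat{\mu}_i$ crosses $\epsilon^2$. Recall that $\widehat{\epsilon}_{n,\alpha}$ is, up to numeric constants, the positive solution of
\begin{equation*}
\frac{1}{n}\sum_{i=1}^r \widehat{\mu}_i^{\alpha}\min\{\widehat{\mu}_i,\epsilon^2\} \;\asymp\; \frac{4R^2}{\sigma^2}\,\epsilon^{2(2+\alpha)}.
\end{equation*}

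First I would define the cutoff index $k=k(\epsilon)\asymp \epsilon^{-2/\beta}$, so that under the polynomial decay $c\,i^{-\beta}\leq\widehat{\mu}_i\leq C\,i^{-\beta}$ one has $\widehat{\mu}_i\geq\epsilon^2$ for $i\leq k$ and $\widehat{\mu}_i\leq\epsilon^2$ for $i>k$ (up to absolute constants absorbed in $\asymp$). Then I would treat the two resulting pieces separately. Using $\alpha\beta<1$, the ``head'' satisfies
\begin{equation*}
\sum_{i=1}^{k}\widehat{\mu}_i^{\alpha}\epsilon^2 \;\asymp\; \epsilon^2\cdot C^{\alpha}\sum_{i=1}^{k} i^{-\alpha\beta} \;\asymp\; \frac{C^{\alpha}}{1-\alpha\beta}\,\epsilon^{2}\,k^{1-\alpha\beta},
\end{equation*}
and since $\beta(1+\alpha)>1$, the ``tail'' satisfies
\begin{equation*}
\sum_{i=k+1}^{r}\widehat{\mu}_i^{1+\alpha} \;\asymp\; C^{1+\alpha}\sum_{i>k} i^{-\beta(1+\alpha)} \;\asymp\; \frac{C^{1+\alpha}}{\beta(1+\alpha)-1}\,k^{1-\beta(1+\alpha)}.
\end{equation*}
Substituting $k\asymp\epsilon^{-2/\beta}$, both contributions share the same $\epsilon$-exponent $2(\beta(1+\alpha)-1)/\beta$, which is the key algebraic simplification.

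Plugging back into the fixed-point equation yields
\begin{equation*}
\frac{1}{n}\Bigl[\tfrac{C^{\alpha}}{1-\alpha\beta}+\tfrac{C^{1+\alpha}}{\beta(1+\alpha)-1}\Bigr]\epsilon^{2(\beta(1+\alpha)-1)/\beta} \;\asymp\; \frac{4R^2}{\sigma^2}\,\epsilon^{2(2+\alpha)}.
\end{equation*}
The difference of the two exponents of $\epsilon$ collapses to $2(\beta+1)/\beta$, so solving for $\epsilon^2$ gives
\begin{equation*}
\widehat{\epsilon}_{n,\alpha}^{\,2}\;\asymp\;\Bigl[\tfrac{C^{\alpha}}{1-\alpha\beta}+\tfrac{C^{1+\alpha}}{\beta(1+\alpha)-1}\Bigr]^{\beta/(\beta+1)} \Bigl[\tfrac{\sigma^2}{2R^2 n}\Bigr]^{\beta/(\beta+1)}.
\end{equation*}
Finally, using the elementary equivalence $\sqrt{a+b}\asymp\sqrt{a}+\sqrt{b}$ for nonnegative $a,b$, the bracket can be rewritten as $\bigl(\sqrt{C^{\alpha}/(1-\alpha\beta)}+\sqrt{C^{1+\alpha}/(\beta(1+\alpha)-1)}\bigr)^{2\beta/(\beta+1)}$, which is exactly the announced form.

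The main technical nuisance, and the only real obstacle, is controlling the edge effects of the sum splitting: one must verify (i) that $1\ll k\ll r$ in the regime of interest (for otherwise either the head or the tail sum degenerates and the stated rate breaks down), and (ii) that the integral approximations $\sum_{i=1}^k i^{-\alpha\beta}\asymp k^{1-\alpha\beta}/(1-\alpha\beta)$ and $\sum_{i>k} i^{-\beta(1+\alpha)}\asymp k^{1-\beta(1+\alpha)}/(\beta(1+\alpha)-1)$ hold with constants uniform in $k$. Both follow from standard comparisons of Riemann sums to integrals, using crucially that $\alpha\beta<1$ (so the head diverges polynomially) and $\beta(1+\alpha)>1$ (so the tail converges). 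Everything else is algebraic bookkeeping, and the existence/uniqueness of $\widehat{\epsilon}_{n,\alpha}$ is already asserted in Appendix~\ref{auxiliary}.
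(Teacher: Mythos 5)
Your proof is correct and follows essentially the same route as the paper: split the sum in the squared smoothed Rademacher complexity at the crossover index $k\asymp\epsilon^{-2/\beta}$, bound the head and tail by standard power-sum estimates, and solve the resulting fixed-point relation. The paper works under the square root and uses $\sqrt{a+b}\le\sqrt a+\sqrt b$ before choosing $M=\epsilon^{-2/\beta}$, while you work with the squared quantity and invoke $\sqrt{a+b}\asymp\sqrt a+\sqrt b$ only at the end to match the asserted form; this is a cosmetic difference, and you are slightly more careful in flagging that the two-sided decay $c\,i^{-\beta}\le\widehat\mu_i\le C\,i^{-\beta}$ is needed to upgrade the upper bound to a genuine $\asymp$.
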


\begin{proof}[Proof of Lemma~\ref{critical_radius_empirical_smooth}]
For every $M(\epsilon) \in (0, n]$ and $\alpha \beta < 1$, we have
\begin{align*}
    \widehat{\mathcal{R}}_{n, \alpha}(\epsilon, \mathcal{H}) &\leq R \sqrt{\frac{1}{n}} \sqrt{\sum_{j=1}^{n} \min \{C j^{-\beta}, \epsilon^2 \} C^{\alpha}j^{-\beta \alpha}}\\
    &\leq R \sqrt{\frac{C^{\alpha}}{n}}\sqrt{\sum_{j=1}^{\floor*{M(\epsilon)}} j^{-\beta \alpha}}\epsilon + R \sqrt{\frac{C^{1+\alpha}}{n}}\sqrt{\sum_{j = \ceil*{M(\epsilon)}}^{n} j^{-\beta - \beta \alpha}}\\
    &\leq R \sqrt{\frac{C^\alpha}{1 - \alpha \beta} \frac{M(\epsilon)^{1-\alpha \beta}}{n}}\epsilon + R \sqrt{\frac{C^{1+\alpha}}{n}} \sqrt{\frac{1}{\beta (1 + \alpha) - 1} \frac{1}{M(\epsilon)^{\beta (1 + \alpha) - 1}}}
\end{align*}
Set $M(\epsilon) = \epsilon^{-2/\beta}$ that implies $\sqrt{M(\epsilon)^{1-\alpha \beta}} \epsilon = \epsilon^{1 - \frac{1 - \alpha \beta}{\beta}}$, and 
\begin{equation*}
    \widehat{\mathcal{R}}_{n, \alpha}(\epsilon, \mathcal{H}) \leq R \left[ \sqrt{\frac{C^{\alpha}}{1 - \alpha \beta}} + \sqrt{\frac{C^{1+\alpha}}{\beta (1 + \alpha) - 1}} \right] \epsilon^{1 - \frac{1 - \alpha \beta}{\beta}}\frac{1}{\sqrt{n}}.
\end{equation*}
Therefore, the smoothed critical inequality $\widehat{\mathcal{R}}_{n,\alpha}(\epsilon, \mathcal{H}) \leq  \frac{2 R^2}{\sigma} \epsilon^{2+\alpha}$ is satisfied for 
\begin{equation} \label{final_result_for_smoothed_epsilon_n}
    \widehat{\epsilon}_{n, \alpha}^2 = \widetilde{c} \left[ \sqrt{\frac{C^\alpha}{1 - \alpha \beta}} + \sqrt{\frac{C^{1+\alpha}}{\beta (1 + \alpha) - 1}} \right]^{\frac{2 \beta}{\beta + 1}} \left[ \frac{\sigma^2}{2 R^2 n} \right]^{\frac{\beta}{\beta + 1}}.
\end{equation}
Notice that $M(\widehat{\epsilon}_{n, \alpha}) \asymp \left( \frac{R^2}{\sigma^2} \right)^{\frac{1}{\beta + 1}}n^{\frac{1}{\beta + 1}} \lesssim \left( \frac{R^2}{\sigma^2} \right)^{\frac{1}{\beta + 1}} n$. Besides that, due to Lemma \ref{smallest_positive_solution}, one can choose a positive constant $\widetilde{c}$ in Eq. (\ref{final_result_for_smoothed_epsilon_n}) such that $M(\widehat{\epsilon}_{n, \alpha}) \leq n$.  
\end{proof}
%
%
%
%
For the next two lemmas define the positive self-adjoint trace-class covariance operator 
\begin{equation*}
    \Sigma \coloneqq \mathbb{E}_X\left[ \mathbb{K}(\cdot, X) \otimes \mathbb{K}(\cdot, X) \right],
\end{equation*}
where $\otimes$ is the Kronecker product between two elements in $\mathcal{H}$ such that $(a \otimes b)u = a\langle b, u \rangle_{\mathcal{H}}$, for every $u \in \mathcal{H}$. We know that $\Sigma$ and $T_{\mathbb{K}}$ have the same eigenvalues $\{ \mu_j \}_{j=1}^{\infty}$. Moreover, we introduce the smoothed empirical covariance operator as 
\begin{equation}
        \widehat{\Sigma}_{n, \alpha} \coloneqq \frac{1}{n}\sum_{j=1}^n \widehat{\mu}_j^{2\alpha} \mathbb{K}(\cdot, x_j) \otimes \mathbb{K}(\cdot, x_j).
    \end{equation}
\begin{lemma} \label{right_tail_bound_smoothed}
    For each $a > 0$, any $1 \leq k \leq n$, $\alpha \in [0, 1/2]$, and $\theta > 1$, one has
    \begin{equation*}
        \mathbb{P}_X\left( \sum_{j=1}^k \mu_j^{2\alpha} > \frac{\theta }{\theta - 1}\sum_{j=1}^{k}\widehat{\mu}_j^{2\alpha}  + \frac{a\left(1 + 3\theta \right) \theta}{3(\theta - 1) n} \right) \leq 2 \exp(-a)
    \end{equation*}
\end{lemma}
\begin{proof}
    Let $\Pi_k$ be the orthogonal projection from $\mathcal{H}$ onto the span of the eigenfunctions $(\phi_j: j = 1, \ldots, k)$. Then by the variational characterization of partial traces, one has $\sum_{j=1}^k \mu_j^{2\alpha} = \textnormal{tr}\left( \Pi_k \Sigma^{2\alpha} \right)$ and $\sum_{j=1}^k \widehat{\mu}_j^{2\alpha} \geq \textnormal{tr}\left( \Pi_k \widehat{\Sigma}_{n, \alpha} \right)$. One concludes that 
    \begin{equation*}
        \sum_{j=1}^k \mu_j^{2\alpha} - \sum_{j=1}^k \widehat{\mu}_j^{2\alpha} \leq \textnormal{tr}\left( \Pi_k \left( \Sigma^{2\alpha} - 
        \widehat{\Sigma}_{n, \alpha} \right) \right).
    \end{equation*}
    By reproducing property and Mercer's theorem, $\lVert \Pi_k \mathbb{K}\left( \cdot, X \right) \rVert_{\mathcal{H}}^2 = \sum_{i=1}^k \mu_i \phi_i^2(X)$, and
    \begin{align*}
        \sum_{j=1}^k \mu_j^{2\alpha} - \sum_{j=1}^k \widehat{\mu}_j^{2\alpha} &\leq \mathbb{E}_X\lVert \Pi_k \Sigma^{\alpha - \frac{1}{2}}\mathbb{K}\left( \cdot, X \right) \rVert_{\mathcal{H}}^2 - \frac{1}{n}\sum_{j=1}^n \widehat{\mu}_j^{2\alpha}\lVert \Pi_k \mathbb{K}\left( \cdot, x_j \right) \rVert_{\mathcal{H}}^2 \\
        &\leq \mid \mathbb{E}_X\lVert \Pi_k \Sigma^{\alpha - \frac{1}{2}}\mathbb{K}\left( \cdot, X \right) \rVert_{\mathcal{H}}^2 - \frac{1}{n}\sum_{j=1}^n \widehat{\mu}_j^{2\alpha}\lVert \Pi_k \mathbb{K}\left( \cdot, x_j \right) \rVert_{\mathcal{H}}^2 \mid.
    \end{align*}
    Since $\widehat{\mu}_j^{2\alpha}\lVert \Pi_k \mathbb{K}\left(\cdot, x_j \right) \rVert_{\mathcal{H}}^2 \leq 1$, one has $\mathbb{E}_X \left[ \widehat{\mu}_j^{4\alpha} \lVert \Pi_k \mathbb{K}\left( \cdot, x_j \right) \rVert_{\mathcal{H}}^4 \right] \leq \sum_{i=1}^k \mu_i$, and by Bernstein's inequality, for any $a > 0$,
    \begin{equation*}
        \mathbb{P}_X\left(\sum_{j=1}^k \mu_j^{2\alpha} > \sum_{j=1}^k \widehat{\mu}_j^{2 \alpha} + \sqrt{\frac{2 a \left( \sum_{j=1}^k \mu_j \right) }{n}} + \frac{a}{3n} \right) \leq 2 \exp (-a).
    \end{equation*}
    Then, by using $\sum_{j=1}^k \mu_j \leq \sum_{j=1}^k \mu_j^{2\alpha}$ when $\alpha \in [0, 1/2]$, and $\sqrt{2 x y } \leq \theta x + \frac{y}{\theta}$ for any $\theta > 0$, one gets
    \begin{equation*}
        \mathbb{P}_X\left( \left( 1 - \frac{1}{\theta} \right) \sum_{j=1}^k \mu_j^{2\alpha} >  \sum_{j=1}^k \widehat{\mu}_j^{2\alpha} + \frac{a\left( 1 + 3\theta  \right)}{3n} \right) \leq 2 \exp (-a),
    \end{equation*}
    for any $a > 0$.
\end{proof}
\begin{lemma} \label{left_tail_bound_smoothed}
    For each $a > 0$, any $0 \leq k \leq n$, $\alpha \in [0, 1/2]$, and $\theta > 1$, one has
    \begin{equation*}
        \mathbb{P}_X \left( \sum_{j>k} \widehat{\mu}_j^{2\alpha} > \frac{\theta + 1}{\theta} \sum_{j > k} \mu_j^{2\alpha} + \frac{a\left( 1 + 3 \theta \right)}{3 n}  \right) \leq \exp(-a).
    \end{equation*} 
\end{lemma}
\begin{proof}
    The proof of \cite[Lemma 33]{celisse2021analyzing} could be easily generalized to the smoothed version by using the proof of Lemma \ref{right_tail_bound_smoothed}.
    Let $\Pi_k$ be the orthogonal projection from $\mathcal{H}$ onto the span of the population eigenfunctions $\left(\phi_j: j > k \right)$. Then by the variational characterization of partial traces, one has $\sum_{j>k}\mu_j^{2\alpha} = \textnormal{tr}\left( \Pi_k \Sigma^{2\alpha} \right)$ and $\sum_{j>k}\widehat{\mu}_j^{2\alpha} \leq \textnormal{tr}\left( \Pi_k \widehat{\Sigma}_{n, \alpha} \right)$.
    One concludes that
    \begin{equation*}
        \sum_{j>k}\widehat{\mu}_j^{2\alpha} - \sum_{j>k}\mu_j^{2\alpha} \leq \textnormal{tr}\left( \Pi_k \left(\widehat{\Sigma}_{n, \alpha} - \Sigma^{2\alpha}\right) \right).
    \end{equation*}
    Hence,
    \begin{equation*}
        \sum_{j>k} \widehat{\mu}_j^{2\alpha} - \sum_{j>k}\mu_j^{2\alpha} \leq \frac{1}{n}\sum_{j=1}^n \widehat{\mu}_j^{2\alpha} \lVert \Pi_k \mathbb{K}\left( \cdot, x_j \right)  \rVert_{\mathcal{H}}^2 - \mathbb{E}_X \lVert \Pi_k \Sigma^{\alpha - 1/2}\mathbb{K}\left( \cdot, X \right) \rVert_{\mathcal{H}}^2 .
    \end{equation*}
    Since $\widehat{\mu}_j^{2\alpha}\lVert \Pi_k \mathbb{K}(\cdot, x_j) \rVert_{\mathcal{H}}^2 \leq 1$ and by using the reproducing property and Mercer's theorem, $\lVert \Pi_k \mathbb{K}\left( \cdot, X \right) \rVert_{\mathcal{H}}^2 = \sum_{j>k}\mu_j \phi_j^2(X)$, one has 
    \begin{equation*}
        \mathbb{E}_X \left[ \widehat{\mu}_j^{4\alpha} \lVert \Pi_k \mathbb{K}\left( \cdot, x_j \right) \rVert_{\mathcal{H}}^4 \right] \leq \sum_{j>k}\mu_j.
    \end{equation*}
    Bernstein's inequality yields that for any $a > 0$,
    \begin{equation*}
        \mathbb{P}_X\left( \sum_{j>k}\widehat{\mu}_j^{2\alpha} > \sum_{j>k}\mu_j^{2\alpha} + \sqrt{\frac{2a \left( \sum_{j>k}\mu_j \right)}{n}} + \frac{a}{3n} \right) \leq \exp(-a).
    \end{equation*}
    Using the inequalities $\sum_{j>k}\mu_j \leq \sum_{j>k}\mu_j^{2\alpha}$, when $\alpha \in [0, 1/2]$, and
    \begin{equation*}
        \sqrt{\frac{2a \left( \sum_{j>k}\mu_j \right) }{n}} \leq \frac{1}{\theta}\sum_{j>k}\mu_j + \frac{a\theta}{n},
    \end{equation*}
    one gets 
    \begin{equation*}
        \mathbb{P}_X\left( \sum_{j>k}\widehat{\mu}_j^{2\alpha} > \left(1 + \frac{1}{\theta}\right) \sum_{j>k}\mu_j^{2\alpha} + \frac{a\left(1 + 3 \theta\right)}{3n} \right) \leq \exp(-a),
    \end{equation*}
    for any $a > 0$ and $\theta > 1$.
\end{proof}
\begin{corollary} \label{corollary_for_assumption}
    Assumption \ref{sufficient_smoothing}, Lemma \ref{right_tail_bound_smoothed}, and Lemma \ref{left_tail_bound_smoothed} imply that for any $1 \leq k \leq n$, $a > 0$, $\theta > 1$, and $\alpha \in [\alpha_0, 1/2]$, 
    \begin{equation}
        \sum_{j = k + 1}^{n}\widehat{\mu}_j^{2\alpha} \leq \frac{(\theta + 1)\mathcal{M}}{\theta - 1}\sum_{j=1}^k \widehat{\mu}_j^{2 \alpha} + \frac{a(1 + 3 \theta)}{3 n}\left( \frac{\mathcal{M}(\theta + 1)}{\theta - 1} + 1 \right) 
    \end{equation}
    with probability (over $\{x_i \}_{i=1}^n$) at least $1 - 3 \exp(-a)$.
\end{corollary}

\section{Handling the smoothed bias and variance} \label{smoothed_bias_variance}
%
%
%
%
\begin{lemma} \label{smooth_bias_upper_bound}
Under Assumptions \ref{a1}, \ref{a2},
\begin{equation} 
    B_{\alpha}^2(t) \leq  \frac{R^2}{(\eta t)^{1+\alpha}}, \quad \alpha \in [0, 1]. 
\end{equation}
\end{lemma}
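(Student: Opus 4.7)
The plan is to reduce the bound on $B_\alpha^2(t)$ to the $\alpha=0$ case already encoded in Lemma~\ref{gamma_bounds}, by factoring an extra $\widehat{\mu}_i^{\alpha}$ into $\widehat{\mu}_i^{1+\alpha}\cdot\widehat{\mu}_i^{-1}$ inside the sum. Concretely, I would rewrite
$$
B_\alpha^2(t) \;=\; \frac{1}{n}\sum_{i=1}^{r}\bigl(1-\gamma_i^{(t)}\bigr)^2\,\widehat{\mu}_i^{\,1+\alpha}\,\frac{(G_i^*)^2}{\widehat{\mu}_i}\;\leq\;\Bigl(\sup_{1\le i\le r}\bigl(1-\gamma_i^{(t)}\bigr)^{2}\widehat{\mu}_i^{\,1+\alpha}\Bigr)\cdot\frac{1}{n}\sum_{i=1}^{r}\frac{(G_i^*)^2}{\widehat{\mu}_i},
$$
and then invoke Lemma~\ref{gamma_bounds} for the bound $\frac{1}{n}\sum_i (G_i^*)^2/\widehat{\mu}_i \le R^2$. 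It therefore remains to establish the key pointwise inequality
$$
\bigl(1-\gamma_i^{(t)}\bigr)^{2}\,\widehat{\mu}_i^{\,1+\alpha} \;\le\; (\eta t)^{-(1+\alpha)}, \qquad i=1,\dots,r.
$$

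To prove this pointwise estimate I would split on the size of $u_i := \eta t\,\widehat{\mu}_i$. When $u_i\le 1$ one trivially has $\widehat{\mu}_i^{\,1+\alpha}\le(\eta t)^{-(1+\alpha)}$, and since $\gamma_i^{(t)}\in[0,1]$ the factor $(1-\gamma_i^{(t)})^2\le 1$ is harmless. When $u_i>1$ the job is to show $(1-\gamma_i^{(t)})^2\le u_i^{-(1+\alpha)}$ for each admissible filter. For KRR one has $1-\gamma_i^{(t)}=1/(1+u_i)\le 1/u_i$, so squaring gives $u_i^{-2}\le u_i^{-(1+\alpha)}$ since $\alpha\le 1$ and $u_i\ge1$. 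For GD with step size $\eta<1/\widehat{\mu}_1$, the bound $1-\gamma_i^{(t)}=(1-\eta\widehat{\mu}_i)^t\le e^{-u_i}$ reduces the claim to $u_i^{1+\alpha}\le e^{2u_i}$ for $u_i\ge 1$; this follows by checking the inequality at $u_i=1$ and noting that $\frac{d}{du}(2u-(1+\alpha)\log u)=2-(1+\alpha)/u\ge 0$ whenever $u\ge(1+\alpha)/2$, which is implied by $u\ge 1\ge(1+\alpha)/2$.

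Alternatively, and more conceptually, the same pointwise bound drops out of the qualification condition \eqref{qualification} applied with $\nu=(1+\alpha)/2\in[1/2,1]$ and $\lambda_t=1/(\eta t)$: one obtains $|1-\widehat{\mu}_i g_{\lambda_t}(\widehat{\mu}_i)|\widehat{\mu}_i^{(1+\alpha)/2}\le C_{(1+\alpha)/2}(\eta t)^{-(1+\alpha)/2}$, and the constant $C_{(1+\alpha)/2}$ equals $1$ for both GD and KRR (an elementary calculus check on $\xi\mapsto \lambda\xi^\nu/(\xi+\lambda)$ and $\xi\mapsto (1-\eta\xi)^t\xi^\nu$ over $\xi\in(0,1]$). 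Combining either route with the preceding display and Lemma~\ref{gamma_bounds} yields $B_\alpha^2(t)\le R^2/(\eta t)^{1+\alpha}$. The only delicate step is the pointwise inequality in the regime $u_i>1$ for GD, which is a short but non-trivial one-variable exercise; everything else is algebraic bookkeeping.
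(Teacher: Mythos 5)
Your proof is correct and follows essentially the same approach as the paper's: both reduce to the pointwise bound $\widehat{\mu}_i^{1+\alpha}(1-\gamma_i^{(t)})^2\leq(\eta t)^{-(1+\alpha)}$ combined with the source-type bound $\frac{1}{n}\sum_i (G_i^*)^2/\widehat{\mu}_i\leq R^2$ from Lemma~\ref{gamma_bounds}. The paper obtains the pointwise inequality by first noting $(1-\gamma_i^{(t)})^2\leq(1-\gamma_i^{(t)})^{1+\alpha}$ (since $1-\gamma_i^{(t)}\in[0,1]$ and $2\geq 1+\alpha$) and then raising the qualification condition at $\nu=1$, namely $(1-\gamma_i^{(t)})\widehat{\mu}_i\leq(\eta t)^{-1}$, to the power $1+\alpha$, which is logically equivalent to your qualification-at-$\nu=(1+\alpha)/2$ route and to your more explicit filter-by-filter case analysis on $u_i=\eta t\widehat{\mu}_i$.
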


\begin{proof}[Proof of Lemma \ref{smooth_bias_upper_bound}]
Proof of \cite[Lemma 7]{raskutti2014early} can be easily generalized to obtain the result.
\end{proof}
%
Here, we recall one concentration result from \cite[Section 4.1.2]{raskutti2014early}. For any $t >0$ and $\delta > 0$, one has $V(t) = \mathbb{E}_{\varepsilon}\left[v(t)\right]$, and
\begin{equation} \label{hanson-wright_in_t}
    \mathbb{P}_{\varepsilon} \Big( |v(t) - V(t)| \geq \delta \Big) \leq 2 \ \exp \left[ - \frac{c n \delta}{\sigma^2} \min \left\{ 1, \frac{R^2 \delta}{\sigma^2 \eta t \widehat{\mathcal{R}}_{n}^2(\frac{1}{\sqrt{\eta t}}, \mathcal{H})}  \right\} \right].
\end{equation}
\section{Auxiliary lemma for finite-rank kernels} \label{finite_rank_appendix}

Let us first transfer the critical inequality (\ref{RK_critical_radius_empirical}) from $\epsilon$ to $t$.

\begin{definition}\label{t_epsilon}
Set $\epsilon = \frac{1}{\sqrt{\eta t}}$ in (\ref{RK_critical_radius_empirical}), and let us define $\widehat{t}_{\epsilon}$ as the largest positive solution to the following fixed-point equation
    \begin{equation} \label{fixed_point_rademacher_in_t}
    \frac{\sigma^2 \eta t}{R^2} \widehat{\mathcal{R}}_n^2\left(\frac{1}{\sqrt{\eta t}}, \mathcal{H}\right) \leq \frac{4 R^2}{\eta t}.
    \end{equation}    
\end{definition}
Note that the empirical critical radius $\widehat{\epsilon}_n = \frac{1}{\sqrt{\eta \widehat{t}_{\epsilon}}}$, and such a point $\widehat{t}_{\epsilon}$ exists since $\widehat{\epsilon}_n$ exists and is unique \cite{mendelson2002geometric, bartlett2005local, raskutti2014early}. Moreover, $\widehat{t}_{\epsilon}$ provides the equality in Ineq. (\ref{fixed_point_rademacher_in_t}). 

Remark that at $t = t^*: B^2(t) = \frac{2 \sigma^2}{n}\sum_{i=1}^r \gamma_i^{(t)} - V(t) \geq \frac{\sigma^2}{n}\sum_{i=1}^r \gamma_i^{(t)}$. Thus, due to the construction of $\widehat{t}_{\epsilon}$ ( $\widehat{t}_{\epsilon}$ is the point of intersection of an upper bound on the bias and a lower bound on $\frac{\sigma^2}{2 n}\sum_{i=1}^r \gamma_i^{(t)}$) and monotonicity (in $t$) of all the terms involved, we get $t^* \leq \widehat{t}_{\epsilon}$.

\vspace{5mm}

\begin{lemma} \label{cl1l2}
Recall the definition of the stopping rule $t^*$ (\ref{t_star}). Under Assumptions \ref{a1}, \ref{a2}, and \ref{additional_assumption_gd_krr}, the following holds for any reproducing kernel:
\begin{equation*}
\mathbb{E}_{\varepsilon} \lVert f^{t^*} - f^* \rVert_n^2 \leq 8 R^2 \widehat{\epsilon}_n^2.
\end{equation*}
\end{lemma}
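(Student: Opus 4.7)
The plan is to decompose the risk $\mathbb{E}_\varepsilon \lVert f^{t^*} - f^* \rVert_n^2 = B^2(t^*) + V(t^*)$ and exploit the defining inequality for $t^*$ to rewrite everything in terms of the spectral weights $\{\gamma_i^{(t^*)}\}$, which are then tied back to the empirical Rademacher complexity at the critical radius $\widehat{\epsilon}_n$ via the ordering $t^* \leq \widehat{t}_\epsilon$ recorded just before the lemma.

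The first step would be to use the rotated model $Z_i = G_i^* + \varepsilon_i$ with $\varepsilon_i \sim \mathcal{N}(0,\sigma^2)$, for which a direct expansion yields $\mathbb{E}_\varepsilon \widetilde{R}_t = B^2(t) + \frac{\sigma^2}{n}\sum_{i=1}^r (1-\gamma_i^{(t)})^2$. The defining condition $\mathbb{E}_\varepsilon \widetilde{R}_{t^*} \leq r\sigma^2/n$ then rearranges to
\[
B^2(t^*) \leq \frac{\sigma^2}{n}\sum_{i=1}^r \bigl[1-(1-\gamma_i^{(t^*)})^2\bigr],
\]
and adding $V(t^*) = \frac{\sigma^2}{n}\sum_{i=1}^r (\gamma_i^{(t^*)})^2$ to both sides, the quadratic parts collapse via the elementary identity $1 - (1-\gamma)^2 + \gamma^2 = 2\gamma$ into
\[
\mathbb{E}_\varepsilon \lVert f^{t^*}-f^*\rVert_n^2 \leq \frac{2\sigma^2}{n}\sum_{i=1}^r \gamma_i^{(t^*)}.
\]

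I would then invoke the ordering $t^* \leq \widehat{t}_\epsilon$ together with the non-decreasingness of $t \mapsto \gamma_i^{(t)}$ (Definition \ref{def.spectral.filters}) and the upper half of Assumption \ref{additional_assumption_gd_krr} to bound
\[
\sum_{i=1}^r \gamma_i^{(t^*)} \leq \sum_{i=1}^r \gamma_i^{(\widehat{t}_\epsilon)} \leq \sum_{i=1}^r \min\bigl\{1,\eta \widehat{t}_\epsilon \widehat{\mu}_i\bigr\}.
\]
Pulling out the factor $\eta \widehat{t}_\epsilon$ and using $\widehat{\epsilon}_n^2 = 1/(\eta \widehat{t}_\epsilon)$ identifies this last sum with $n \eta \widehat{t}_\epsilon \widehat{\mathcal{R}}_n^2(\widehat{\epsilon}_n, \mathcal{H})/R^2$. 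At $t = \widehat{t}_\epsilon$ the critical equation \eqref{fixed_point_rademacher_in_t} holds with equality (Definition \ref{t_epsilon}), so $\sigma^2 \eta \widehat{t}_\epsilon \widehat{\mathcal{R}}_n^2(\widehat{\epsilon}_n,\mathcal{H})/R^2 = 4 R^2/(\eta \widehat{t}_\epsilon) = 4 R^2 \widehat{\epsilon}_n^2$. Chaining the inequalities yields $\mathbb{E}_\varepsilon \lVert f^{t^*}-f^*\rVert_n^2 \leq 2 \cdot 4 R^2 \widehat{\epsilon}_n^2 = 8 R^2 \widehat{\epsilon}_n^2$.

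The whole argument is essentially bookkeeping once the telescoping step is spotted; the main subtlety is that bounding $B^2(t^*)$ and $V(t^*)$ separately by their Rademacher proxies would give a worse constant (around $12 R^2 \widehat{\epsilon}_n^2$, coming from $2$ on the bias and $1$ on the variance against a factor $4$ on the right-hand side of the critical equation). The optimal constant $8$ drops out precisely from exploiting the exact cancellation $[1-(1-\gamma)^2]+\gamma^2 = 2\gamma$ directly inside the defining condition of $t^*$, and then invoking the equality case of the critical equation exactly once, at the single time $\widehat{t}_\epsilon$.
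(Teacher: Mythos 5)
Your argument is correct and essentially the same as the paper's: the paper likewise reduces the risk at $t^*$ to $2\widetilde{V}(t^*)=\frac{2\sigma^2}{n}\sum_{i=1}^r\gamma_i^{(t^*)}$ (via the same $1-(1-\gamma)^2+\gamma^2=2\gamma$ identity, phrased as ``$\mathbb{E}_\varepsilon R_{t^*}=\sigma^2$'' yielding $B^2(t^*)+V(t^*)=2\widetilde V(t^*)$) and then pushes $\widetilde V$ up to $\widehat t_\epsilon$ using $t^*\leq \widehat t_\epsilon$ and the critical equality $\sigma^2\eta\widehat t_\epsilon\widehat{\mathcal R}_n^2(\widehat\epsilon_n,\mathcal H)/R^2=4R^2\widehat\epsilon_n^2$. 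The only cosmetic difference is that you bound each $\gamma_i^{(t^*)}\le\gamma_i^{(\widehat t_\epsilon)}$ pointwise before invoking Assumption~\ref{additional_assumption_gd_krr}, whereas the paper first passes to $\min\{1,\eta t^*\widehat\mu_i\}$ at $t^*$ and then uses monotonicity of $t\mapsto\eta t\,\widehat{\mathcal R}_n^2(1/\sqrt{\eta t},\mathcal H)$; both routes land on the same chain of inequalities.
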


\begin{proof}[Proof of Lemma \ref{cl1l2}]
Let us define a proxy version of the variance term: $\widetilde{V}(t) \coloneqq \frac{\sigma^2}{n}\sum_{i=1}^r \gamma_i^{(t)}$. Moreover, for all $t > 0$,
\begin{equation}\label{exp_bias_at_t}
    \mathbb{E}_{\varepsilon}R_t = B^2(t) + \frac{\sigma^2}{n}\sum_{i=1}^n (1 - \gamma_i^{(t)})^2.
\end{equation}
From the fact that $\mathbb{E}_{\varepsilon}R_{t^*} = \sigma^2$, $\mathbb{E}_{\varepsilon}\lVert f^{t^*} - f^* \rVert_n^2 = B^2(t^*) + V(t^*) = 2 \widetilde{V}(t^*)$.  

Therefore, in order to prove the lemma, our goal is to get an upper bound on $\widetilde{V}(t^*)$. Since the function $\eta t \widehat{\mathcal{R}}_{n}^2(\frac{1}{\sqrt{\eta t}}, \mathcal{H})$ is monotonic in $t$ (see, for example, Lemma \ref{smallest_positive_solution}), and $t^* \leq \widehat{t}_{\epsilon}$, we conclude that
\begin{equation*}
    \widetilde{V}(t^*) \leq \frac{\sigma^2 \eta t^*}{R^2} \widehat{\mathcal{R}}_{n}^2\left(\frac{1}{\sqrt{\eta t^*}}, \mathcal{H}\right) \leq \frac{\sigma^2 \eta \widehat{t}_{\epsilon}}{R^2}\widehat{\mathcal{R}}_{n}^2\left(\frac{1}{\sqrt{\eta \widehat{t}_{\epsilon}}}, \mathcal{H}\right) = 4 R^2 \widehat{\epsilon}_n^2.
\end{equation*}

\end{proof}

\section{Proofs for polynomial smoothing (fixed design)} \label{polynomial_appendix}

In the proofs, we will need three additional definitions below.

\begin{definition}
In Definition \ref{empirical_rademacher_complexity_def}, set $\epsilon = \frac{1}{\sqrt{\eta t}}$, then for any $\alpha \in [0, 1]$, the smoothed critical inequality (\ref{empirical_rademacher_complexity_def}) is equivalent to
\begin{equation} \label{critical_inequality_in_t}
    \frac{\sigma^2 \eta t}{4} \widehat{\mathcal{R}}_{n, \alpha}^2\Big(\frac{1}{\sqrt{\eta t}}, \mathcal{H}\Big) \leq \frac{R^4}{(\eta t)^{1+\alpha}}. 
\end{equation}
Due to Lemma \ref{smallest_positive_solution}, the left-hand side of (\ref{critical_inequality_in_t}) is non-decreasing in $t$, and the right-hand side is non-increasing in $t$. 

\begin{definition} \label{t_epsilon_alpha_def}
For any $\alpha \in [0, 1]$, define the stopping rule $\widehat{t}_{\epsilon, \alpha}$ such that
    \begin{equation}
        \widehat{\epsilon}_{n, \alpha}^2 = \frac{1}{\eta \widehat{t}_{\epsilon, \alpha}},
    \end{equation}
\end{definition}
then Ineq. (\ref{critical_inequality_in_t}) becomes the equality at $t = \widehat{t}_{\epsilon, \alpha}$ thanks to the monotonicity and continuity of both terms in the inequality.
\end{definition}

Further, we define the stopping time $\widetilde{t}_{\epsilon, \alpha}$ and $\overline{t}_{\epsilon, \alpha}$, a lower bound and an upper bound on $t_{\alpha}^* \coloneqq \inf \left\{t > 0 \ | \ \mathbb{E}_{\varepsilon}R_{\alpha, t} \leq \frac{\sigma^2}{n}\sum_{i=1}^n \widehat{\mu}_i^{\alpha} \right\}$, $\forall \alpha \in [0, 1]$.

\begin{definition} \label{definition_additional_stopping_times}
Define the smoothed proxy variance $\widetilde{V}_{\alpha}(t) \coloneqq \frac{\sigma^2}{n}\sum_{i=1}^n \widehat{\mu}_i^{\alpha}\gamma_i^{(t)}$ and the following stopping times
\begin{align}
    \begin{split}
    \overline{t}_{\epsilon, \alpha} &= \inf \big\{ t > 0 \ | \ 
    B_{\alpha}^2(t) = \frac{1}{2}\widetilde{V}_{\alpha}(t) \big\},\\
    \widetilde{t}_{\epsilon, \alpha} &= \inf \big\{t > 0 \ | \ B_{\alpha}^2(t) = 3 \widetilde{V}_{\alpha}(t) \big\}.
    \end{split}
\end{align}
\end{definition}
Notice that at $t = \widetilde{t}_{\epsilon, \alpha}$:
\begin{equation*}
   \frac{6 R^2}{(\eta t)^{1+\alpha}} \geq \frac{R^2}{(\eta t)^{1+\alpha}} \geq B_{\alpha}^2(t) = 3 \widetilde{V}_{\alpha}(t) \geq \frac{3}{2}\frac{\sigma^2}{R^2} \eta t \widehat{\mathcal{R}}_{n, \alpha}^2\Big(\frac{1}{\sqrt{\eta t}}, \mathcal{H}\Big).
\end{equation*}
At $t = \overline{t}_{\epsilon, \alpha}$:
\begin{equation*}
    \frac{R^2}{(\eta t)^{1+\alpha}} \geq B_{\alpha}^2(t) = \frac{1}{2}\widetilde{V}_{\alpha}(t) \geq \frac{\sigma^2 \eta t}{4R^2} \widehat{\mathcal{R}}_{n, \alpha}^2\Big(\frac{1}{\sqrt{\eta t}}, \mathcal{H}\Big).
\end{equation*}
Thus, $\widetilde{t}_{\epsilon, \alpha}$ and $\overline{t}_{\epsilon, \alpha}$ satisfy the smoothed critical inequality (\ref{critical_inequality_in_t}). Moreover, $\widehat{t}_{\epsilon, \alpha}$ is always greater than or equal to $\overline{t}_{\epsilon, \alpha}$ and $\widetilde{t}_{\epsilon, \alpha}$ since $\widehat{t}_{\epsilon, \alpha}$ is the largest value satisfying Ineq. (\ref{critical_inequality_in_t}). As a consequence of Lemma \ref{smallest_positive_solution} and continuity of (\ref{critical_inequality_in_t}) in $t$, one has $\frac{1}{\eta \widetilde{t}_{\epsilon, \alpha}} \asymp \frac{1}{\eta \overline{t}_{\epsilon, \alpha}} \asymp \frac{1}{\eta \widehat{t}_{\epsilon, \alpha}} = \widehat{\epsilon}_{n, \alpha}^2$.
We assume for simplicity that 
\begin{align*}
    \overline{\epsilon}_{n, \alpha}^2 &\coloneqq \frac{1}{\eta \overline{t}_{\epsilon, \alpha}} = c^\prime \frac{1}{\eta \widehat{t}_{\epsilon, \alpha}} = c^\prime \widehat{\epsilon}_{n, \alpha}^2, \\
    \widetilde{\epsilon}_{n, \alpha}^2 &\coloneqq \frac{1}{\eta \widetilde{t}_{\epsilon, \alpha}} = c^{\prime \prime} \frac{1}{\eta \widehat{t}_{\epsilon, \alpha}} = c^{\prime \prime}\widehat{\epsilon}_{n, \alpha}^2
\end{align*}
for some positive numeric constants $c^{\prime}, c^{\prime \prime} \geq 1$, that do not depend on $n$, due to the fact that $\widehat{t}_{\epsilon, \alpha} \geq \overline{t}_{\epsilon, \alpha}$ and $\widehat{t}_{\epsilon, \alpha} \geq \widetilde{t}_{\epsilon, \alpha}$.

The following lemma decomposes the risk error into several parts that will be further analyzed in subsequent Lemmas \ref{variance_deviation}, \ref{bias_deviation}.

\begin{lemma} \label{first_lemma}
Recall the definition of $\tau_{\alpha}$ (\ref{t_alpha}), then \begin{equation*}
    \lVert f^{\tau_{\alpha}} - f^* \rVert_n^2 \leq 2B^2(\tau_{\alpha}) + 2 v(\tau_{\alpha}),
\end{equation*}
where $v(t) = \frac{1}{n}\sum_{i=1}^n (\gamma_i^{(t)})^2 \varepsilon_i^2, \ t > 0$, is the stochastic part of the variance. 
\end{lemma}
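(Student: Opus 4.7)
The bound is essentially an algebraic decomposition in the rotated basis, valid for any $t > 0$ (and then specialized to $t = \tau_{\alpha}$). My plan is as follows.

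First, I would pass to the rotated model $Z_i = G_i^* + \varepsilon_i$ of \eqref{rotated_model}, where the spectral filter estimator takes the simple diagonal form $G_i^t = \gamma_i^{(t)} Z_i$. Since the matrix whose columns are $\widehat{u}_1, \ldots, \widehat{u}_n$ is orthogonal, the empirical norm is preserved:
\begin{equation*}
    \lVert f^t - f^* \rVert_n^2 = \frac{1}{n}\lVert F^t - F^* \rVert^2 = \frac{1}{n}\sum_{i=1}^n \bigl(G_i^t - G_i^*\bigr)^2.
\end{equation*}

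Next, I would expand each summand via $G_i^t - G_i^* = \gamma_i^{(t)}(G_i^* + \varepsilon_i) - G_i^* = -(1-\gamma_i^{(t)})G_i^* + \gamma_i^{(t)} \varepsilon_i$, and apply the elementary inequality $(a+b)^2 \leq 2a^2 + 2b^2$ pointwise to obtain
\begin{equation*}
    \bigl(G_i^t - G_i^*\bigr)^2 \leq 2(1-\gamma_i^{(t)})^2 (G_i^*)^2 + 2(\gamma_i^{(t)})^2 \varepsilon_i^2.
\end{equation*}
Summing over $i$ and dividing by $n$ yields
\begin{equation*}
    \lVert f^t - f^* \rVert_n^2 \leq \frac{2}{n}\sum_{i=1}^n (1-\gamma_i^{(t)})^2 (G_i^*)^2 + \frac{2}{n}\sum_{i=1}^n (\gamma_i^{(t)})^2 \varepsilon_i^2.
\end{equation*}
The first sum equals $2 B^2(t)$ by definition; the second sum equals $2 v(t)$. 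Specializing to $t = \tau_{\alpha}$ gives the claim.

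There is essentially no technical obstacle: the statement is a pathwise (not in-expectation) decomposition that holds for every realization of the noise and every $t$, so the stopping rule $\tau_{\alpha}$ enters only as a value to plug in at the end. The only points worth double-checking are the orthogonality of the transformation $Z = U^\top Y$ (which ensures the empirical norm is preserved) and the fact that for finite-rank kernels the components $i > r$ cause no issue because $\gamma_i^{(t)} = 0$ and $G_i^* = 0$ (Lemma \ref{zero_coeff}), so those terms contribute nothing beyond $v(t)$ and $B^2(t)$ as defined.
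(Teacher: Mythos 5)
Your proof is correct and essentially the same as the paper's: both boil down to a pathwise application of $(a+b)^2 \le 2a^2 + 2b^2$, the paper doing it in vector form (expanding $\|f^{\tau_\alpha}-f^*\|_n^2 = B^2 + v + 2\langle \tilde b^2,\tilde v\rangle_n$ and bounding the cross term by $2|\langle x,y\rangle_n|\le \|x\|_n^2+\|y\|_n^2$), while you do it componentwise in the rotated basis. Your observation that the bound is deterministic in $t$ and hence immune to the randomness of $\tau_\alpha$ is also the right way to see it.
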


\begin{proof}[Proof of Lemma \ref{first_lemma}]

Let us define the noise vector $\varepsilon \coloneqq [\varepsilon_1, ..., \varepsilon_n]^{\top}$ and, for each $t > 0$, two vectors that correspond to the bias and variance parts: 
\begin{equation} \label{bias_variance_parts}
\Tilde{b}^2(t) \coloneqq (g_t(K_n)K_n - I_n)F^*, \ \ \ \  \Tilde{v}(t) \coloneqq g_t(K_n)K_n \varepsilon.
\end{equation}
It gives the following expressions for the stochastic part of the variance and bias:
\begin{equation} \label{n_th_scalar_pr_bias_var}
    v(t) = \langle \Tilde{v}(t), \Tilde{v}(t) \rangle_n, \ \ B^2(t) = \langle \Tilde{b}^2(t), \Tilde{b}^2(t) \rangle_n.    
\end{equation}
General expression for the $L_2(\mathbb{P}_n)$-norm error at $\tau_{\alpha}$ takes the form
\begin{equation} \label{l_2_p_n_norm}
    \lVert f^{\tau_{\alpha}} - f^* \rVert_n^2 = B^2(\tau_{\alpha}) + v(\tau_{\alpha}) + 2 \langle \Tilde{b}^2(\tau_{\alpha}), \Tilde{v}(\tau_{\alpha}) \rangle_n.
\end{equation}
%
%
Therefore, applying the inequality $2 \left| \langle x, y \rangle_n \right| \leq \lVert x \rVert_n^2 + \lVert y \rVert_n^2$ for any $x, y \in \mathbb{R}^n$, and (\ref{n_th_scalar_pr_bias_var}), we obtain
%
\begin{equation} \label{wo_expectation}
    \lVert f^{\tau_{\alpha}} - f^* \rVert_n^2 \leq 2 B^2(\tau_{\alpha}) + 2 v(\tau_{\alpha}).
\end{equation}

\end{proof}

\subsection{Two deviation inequalities for $\tau_{\alpha}$}

This is the first deviation inequality for $\tau_{\alpha}$ that will be used in Lemma \ref{variance_deviation} to control the variance term.

\begin{lemma} \label{lemma_deviation_bound_1}
Recall Definition \ref{definition_additional_stopping_times} of $\overline{t}_{\epsilon, \alpha}$, then under Assumptions \ref{a1}, \ref{a2}, \ref{additional_assumption_gd_krr}, and \ref{sufficient_smoothing},  
\begin{equation*}
    \mathbb{P}_{\varepsilon} \left( \tau_{\alpha} > \overline{t}_{\epsilon, \alpha} \right) \leq 5 \exp \left[ - c_1 \frac{R^2}{\sigma^2} n\widehat{\epsilon}_{n, \alpha}^{2(1 + \alpha)} \right],
\end{equation*}
where a positive constant $c_1$ depends only on $\mathcal{M}$.
\end{lemma}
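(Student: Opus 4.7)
The plan is to reduce the event $\{\tau_\alpha > \overline{t}_{\epsilon, \alpha}\}$ to an upward deviation of the smoothed empirical risk above its mean at $t = \overline{t}_{\epsilon, \alpha}$, and then apply the Hanson–Wright inequality \eqref{hanson-wright} together with Gaussian concentration. Since $t \mapsto R_{\alpha, t}$ is non-increasing, the event coincides with $\{R_{\alpha, \overline{t}_{\epsilon, \alpha}} > \sigma^2 \mathrm{tr}(K_n^\alpha)/n\}$. Using $\mathbb{E}_\varepsilon Z_i^2 = (G_i^*)^2 + \sigma^2$ and $\gamma_i^{(t)} \leq 1$, I would show
\[
\mathbb{E}_\varepsilon R_{\alpha, t} - \sigma^2 \mathrm{tr}(K_n^\alpha)/n \;=\; B_\alpha^2(t) - \frac{\sigma^2}{n}\sum_{i=1}^r \widehat{\mu}_i^\alpha \gamma_i^{(t)}(2-\gamma_i^{(t)}) \;\leq\; B_\alpha^2(t) - \widetilde{V}_\alpha(t),
\]
which at $t = \overline{t}_{\epsilon, \alpha}$ equals $-\widetilde{V}_\alpha(\overline{t}_{\epsilon, \alpha})/2$ by the defining identity $B_\alpha^2 = \widetilde V_\alpha/2$. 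Hence the event is contained in $\{R_{\alpha, \overline{t}_{\epsilon, \alpha}} - \mathbb{E}_\varepsilon R_{\alpha, \overline{t}_{\epsilon, \alpha}} \geq \widetilde{V}_\alpha(\overline{t}_{\epsilon, \alpha})/2\}$.

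Next, I would decompose the centered empirical risk, via $Z_i = G_i^* + \varepsilon_i$, into a diagonal Gaussian chaos $X_t = n^{-1}\sum_i \widehat{\mu}_i^\alpha (1-\gamma_i^{(t)})^2 (\varepsilon_i^2 - \sigma^2)$ and a linear Gaussian $Y_t = 2n^{-1}\sum_i \widehat{\mu}_i^\alpha (1-\gamma_i^{(t)})^2 G_i^* \varepsilon_i$, reducing the proof to tail bounds on $|X_{\overline{t}_{\epsilon, \alpha}}|$ and $|Y_{\overline{t}_{\epsilon, \alpha}}|$ at the level $\widetilde{V}_\alpha(\overline{t}_{\epsilon, \alpha})/4$. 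Applying \eqref{hanson-wright} to the diagonal matrix with entries $\widehat{\mu}_i^\alpha (1-\gamma_i^{(t)})^2/n$ gives $\|A\|_{\mathrm{op}} \leq 1/n$ and $\|A\|_F^2 \leq n^{-2}\sum_{i=1}^r \widehat{\mu}_i^{2\alpha}$. For $Y_t$, the variance admits the key bound $\mathrm{Var}(Y_t) \leq (4\sigma^2/n)\,B_\alpha^2(t) = (2\sigma^2/n)\,\widetilde{V}_\alpha(\overline{t}_{\epsilon, \alpha})$ at $t = \overline{t}_{\epsilon, \alpha}$, obtained by bounding $\widehat{\mu}_i^{2\alpha}(1-\gamma_i^{(t)})^4 \leq \widehat{\mu}_i^{\alpha}(1-\gamma_i^{(t)})^2$ inside the variance sum.

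The remaining step is converting both exponents into the claimed scale $nR^2 \widehat{\epsilon}_{n,\alpha}^{2(1+\alpha)}/\sigma^2$. The Frobenius norm is controlled via the second inequality in \eqref{useful_inequalities_smooth_} combined with Assumption \ref{sufficient_smoothing} and $\widehat{\mu}_i^{2\alpha}\leq \widehat{\mu}_i^\alpha$:
\[
\sum_{i=1}^r \widehat{\mu}_i^{2\alpha} \;\leq\; (1+\mathcal{M})\sum_{j=1}^{d_{n,\alpha}} \widehat{\mu}_j^{\alpha} \;\leq\; \frac{4(1+\mathcal{M})R^2 n}{\sigma^2}\,\widehat{\epsilon}_{n,\alpha}^{2(1+\alpha)}.
\]
For the deviation level, Assumption \ref{additional_assumption_gd_krr} yields $\widetilde{V}_\alpha(t) \geq (c\sigma^2 \eta t/R^2)\,\widehat{\mathcal{R}}_{n,\alpha}^2(1/\sqrt{\eta t}, \mathcal{H})$, and since $\overline{\epsilon}_{n,\alpha}\geq \widehat{\epsilon}_{n,\alpha}$, monotonicity of $\widehat{\mathcal{R}}_{n,\alpha}^2(\cdot,\mathcal{H})$ together with the equality case of the critical equation (giving $\widehat{\mathcal{R}}_{n,\alpha}^2(\widehat{\epsilon}_{n,\alpha},\mathcal{H}) = 4R^4 \widehat{\epsilon}_{n,\alpha}^{2(2+\alpha)}/\sigma^2$) produces the lower bound $\widetilde{V}_\alpha(\overline{t}_{\epsilon, \alpha}) \gtrsim R^2 \widehat{\epsilon}_{n,\alpha}^{2(1+\alpha)}$. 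Plugging these two estimates into the Hanson–Wright tail for $X_t$ and the Gaussian tail for $Y_t$, both exponents reach the target scale, and a union bound yields the claim.

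The main obstacle is the third step: one must check that the sub-Gaussian (Frobenius) regime of Hanson–Wright is the active one at the relevant deviation level and that Assumption \ref{sufficient_smoothing} is invoked with exactly the right exponent so that $\sum_i \widehat{\mu}_i^{2\alpha}$ scales as $R^2 n \widehat{\epsilon}_{n,\alpha}^{2(1+\alpha)}/\sigma^2$ and not worse. This is precisely the mechanism that forces $\alpha \geq \alpha_0$ and in which the constant $c_1$ inherits its dependence on $\mathcal{M}$; too little smoothing would let the tail sum $\sum_{i>d_{n,\alpha}}\widehat{\mu}_i^{2\alpha}$ dominate and spoil the exponent.
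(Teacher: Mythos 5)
Your proposal is correct and follows essentially the same route as the paper's proof: reduce $\{\tau_\alpha > \overline{t}_{\epsilon,\alpha}\}$ to an upward deviation of $R_{\alpha,\overline{t}_{\epsilon,\alpha}}$ above its mean by the margin $\tfrac12\widetilde V_\alpha(\overline{t}_{\epsilon,\alpha})$, split that deviation into a diagonal Gaussian chaos and a linear Gaussian term, bound the chaos via a Hanson--Wright/Bernstein inequality using the Frobenius estimate $\sum_i\widehat\mu_i^{2\alpha}\lesssim (1+\mathcal M)\sum_{j\leq d_{n,\alpha}}\widehat\mu_j^\alpha\lesssim\frac{R^2n}{\sigma^2}\widehat\epsilon_{n,\alpha}^{2(1+\alpha)}$, and bound the linear term via $\mathrm{Var}(Y_t)\leq\frac{4\sigma^2}{n}B_\alpha^2(t)$. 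The only cosmetic differences from the paper are that the paper cites a Bernstein-type lemma from Blanchard et al. rather than the Hanson--Wright formulation \eqref{hanson-wright}, and it splits the Frobenius sum at $d_{n,\alpha}$ rather than bounding $\sum_i\widehat\mu_i^{2\alpha}$ globally; one small remark on your stated ``main obstacle'': there is in fact no need to decide which Hanson--Wright regime is active, since with $\lVert A\rVert_{\mathrm{op}}\leq 1/n$ and $\delta\asymp R^2\widehat\epsilon_{n,\alpha}^{2(1+\alpha)}$ both terms in the minimum already scale as $nR^2\widehat\epsilon_{n,\alpha}^{2(1+\alpha)}/\sigma^2$.
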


\begin{proof}[Proof of Lemma \ref{lemma_deviation_bound_1}]
Set $\kappa_{\alpha} \coloneqq \sigma^2 \textnormal{tr} K_n^{\alpha} / n$, then due to the monotonicity of the smoothed empirical risk, for all $t \geq t_{\alpha}^*$,
\begin{equation*}
    \mathbb{P}_{\varepsilon} \left( \tau_{\alpha} > t \right) = \mathbb{P}_{\varepsilon} \left( R_{\alpha, t} - \mathbb{E}_{\varepsilon}R_{\alpha, t}  > \kappa_{\alpha} - \mathbb{E}_{\varepsilon} R_{\alpha, t}  \right). 
\end{equation*}
Consider 
\begin{equation}
    R_{\alpha, t} - \mathbb{E}_{\varepsilon}R_{\alpha, t} = \underbrace{\frac{\sigma^2}{n}\sum_{i=1}^n \widehat{\mu}_i^{\alpha} (1 - \gamma_i^{(t)})^2 \left(\frac{\varepsilon_i^2}{\sigma^2} - 1\right)}_{\Sigma_1} + \underbrace{\frac{2}{n}\sum_{i=1}^n \widehat{\mu}_i^{\alpha} (1 - \gamma_i^{(t)})^2G_i^* \varepsilon_i}_{\Sigma_2}.
\end{equation}
Define 
\begin{equation*}
    \Delta_{t, \alpha} \coloneqq \kappa_{\alpha} - \mathbb{E}_{\varepsilon}R_{\alpha, t} = -B_{\alpha}^2(t) - V_{\alpha}(t) + 2\widetilde{V}_{\alpha}(t),
\end{equation*}
where $\widetilde{V}_{\alpha}(t) = \frac{\sigma^2}{n} \sum_{i=1}^n \widehat{\mu}_i^{\alpha}\gamma_i^{(t)}$.

Further, set $t = \overline{t}_{\epsilon, \alpha}$, and recall that $\eta \overline{t}_{\epsilon, \alpha} = \frac{\eta \widehat{t}_{\epsilon, \alpha}}{c^\prime}$ for $c^\prime \geq 1$. This implies 
\begin{align*}
    \Delta_{\overline{t}_{\epsilon, \alpha}, \alpha} \geq \frac{1}{2}\widetilde{V}_{\alpha}(\overline{t}_{\epsilon, \alpha}) &\geq \frac{\sigma^2}{4 n }\sum_{i=1}^n \widehat{\mu}_i^{\alpha} \min \left\{ 1, \frac{\eta \widehat{t}_{\epsilon, \alpha}}{c^\prime} \widehat{\mu}_i \right\} \\ &= \frac{\sigma^2 \eta \widehat{t}_{\epsilon, \alpha}}{4 n c^{\prime}} \sum_{i=1}^n \widehat{\mu}_i^{\alpha} \min \left\{ \frac{c^\prime}{\eta \widehat{t}_{\epsilon, \alpha}}, \widehat{\mu}_i \right\}\\
    &\geq \frac{\sigma^2 \eta \widehat{t}_{\epsilon, \alpha}}{4 c^{\prime} R^2} \widehat{\mathcal{R}}_{n, \alpha}^2\left( \frac{1}{\sqrt{\eta \widehat{t}_{\epsilon, \alpha}}}, \mathcal{H} \right)\\
    &= \frac{R^2}{c^\prime}\widehat{\epsilon}_{n, \alpha}^{2(1+\alpha)}.
\end{align*}
Then for the event $A$ from Corollary \ref{corollary_for_assumption}, by standard concentration results on linear and quadratic sums of Gaussian random variables (see, e.g., \cite[Lemma 1]{laurent2000adaptive}), 
\begin{align}
    \mathbb{P}_{\varepsilon} \left(\Sigma_1 > \frac{\Delta_{\overline{t}_{\epsilon, \alpha}, \alpha}}{2} \mid A \right) &\leq \exp \left[ - \frac{\Delta_{\overline{t}_{\epsilon, \alpha}, \alpha}^2}{16(\lVert a(\overline{t}_{\epsilon, \alpha}) \rVert^2 + \frac{\Delta_{\overline{t}_{\epsilon, \alpha}, \alpha}}{2} \lVert a(\overline{t}_{\epsilon, \alpha}) \rVert_{\infty})} \right],\\
    \mathbb{P}_{\varepsilon} \left( \Sigma_2 > \frac{\Delta_{\overline{t}_{\epsilon, \alpha}, \alpha}}{2} \right) &\leq \exp \left[ - \frac{n \Delta_{\overline{t}_{\epsilon, \alpha}, \alpha}^2}{32 \sigma^2 B_{\alpha}^2(\overline{t}_{\epsilon, \alpha})} \right],
\end{align}
where $a_i(\overline{t}_{\epsilon, \alpha}) = \frac{\sigma^2}{n}\widehat{\mu}_i^{\alpha}(1 - \gamma_i^{(\overline{t}_{\epsilon, \alpha})})^2, \ i \in [n]$.

\vspace{0.2cm}

In what follows, we simplify the bounds above.

\vspace{0.2cm}

Firstly, recall that $B = 1$, which implies $\widehat{\mu}_1 \leq 1$, and  $\lVert a(\overline{t}_{\epsilon, \alpha}) \rVert_{\infty} \leq \frac{\sigma^2}{n}$, and 
\begin{align*}
    \frac{1}{2}\Delta_{\overline{t}_{\epsilon, \alpha}, \alpha} \leq \frac{3}{4} \widetilde{V}_{\alpha}(\overline{t}_{\epsilon, \alpha}) \leq \frac{3}{4}\widetilde{V}_{\alpha}(\widehat{t}_{\epsilon, \alpha}) &\leq \frac{3}{4R^2}\sigma^2 \eta \widehat{t}_{\epsilon, \alpha} \widehat{\mathcal{R}}_{n, \alpha}^2\left( \frac{1}{\sqrt{\eta \widehat{t}_{\epsilon, \alpha}}}, \mathcal{H} \right) \\ &= 3 R^2 \widehat{\epsilon}_{n, \alpha}^{2(1+\alpha)}.    
\end{align*}
Secondly, we will upper bound the Euclidean norm of $a(\overline{t}_{\epsilon, \alpha})$. Recall Corollary \ref{corollary_for_assumption} with $a = \frac{R^2}{\sigma^2}n \widehat{\epsilon}_{n, \alpha}^{2(1+\alpha)}$ and $\theta = 2$, the definition of the smoothed statistical dimension $d_{n, \alpha} = \min \{ j \in [n]: \widehat{\mu}_j \leq \widehat{\epsilon}_{n, \alpha}^2 \}$, and Ineq. (\ref{useful_inequalities_smooth_}): $\widehat{\epsilon}_{n, \alpha}^{2(1+\alpha)} \geq \frac{\sigma^2 \sum_{i=1}^{d_{n, \alpha}}\widehat{\mu}_i^{\alpha}}{4 R^2 n}$, which implies
\begin{align*}
    \lVert a(\overline{t}_{\epsilon, \alpha}) \rVert^2 &= \frac{\sigma^4}{n^2}\sum_{i=1}^n \widehat{\mu}_i^{2 \alpha}\left(1 - \gamma_i^{(\overline{t}_{\epsilon, \alpha})}\right)^4 \leq \frac{\sigma^4}{n^2} \left[ \sum_{i=1}^{d_{n, \alpha}} \widehat{\mu}_i^{\alpha} + \sum_{i=d_{n, \alpha} + 1}^n \widehat{\mu}_i^{2\alpha} \right]\\
    &\leq \frac{\sigma^4}{n^2} \left[ \frac{4 n R^2 \widehat{\epsilon}_{n, \alpha}^{2(1 + \alpha)}}{\sigma^2} + 3\mathcal{M}\sum_{i=1}^{d_{n, \alpha}}\widehat{\mu}_i^{2\alpha} + \frac{7 (3 \mathcal{M} + 1) R^2}{3\sigma^2}\widehat{\epsilon}_{n, \alpha}^{2(1+\alpha)} \right] \\
    &\leq \frac{\sigma^2 R^2}{n} \left[ 4 + 12 \mathcal{M} + 3(3\mathcal{M} + 1) \right] \widehat{\epsilon}_{n, \alpha}^{2(1+\alpha)}.
\end{align*}
Finally, using the upper bound $B_{\alpha}^2(\overline{t}_{\epsilon, \alpha}) \leq \frac{R^2}{(\eta \overline{t}_{\epsilon, \alpha})^{1+\alpha}} \leq R^2 (c^{\prime})^2 \widehat{\epsilon}_{n, \alpha}^{2(1+\alpha)}$ for all $\alpha \in [0, 1]$ and the fact that $\mathbb{P}_{\varepsilon}(A) = \mathbb{P}_{X_1, \ldots, X_n}\left( \mathbb{I}(A) \right) = \mathbb{P}_{X_1, \ldots, X_n}(A)$ for the event $A$ from Corollary \ref{corollary_for_assumption}, one gets
\begin{align*}
    \mathbb{P}_{\varepsilon}\left( \Sigma_1 > \frac{\Delta_{\overline{t}_{\epsilon, \alpha}, \alpha}}{2} \right) &\leq \mathbb{P}_{\varepsilon}\left( \Sigma_1 > \frac{\Delta_{\overline{t}_{\epsilon, \alpha}, \alpha}}{2} \mid A \right) + \mathbb{P}_{X_1, \ldots, X_n}\left( A^c \right),\\
    \mathbb{P}_{\varepsilon} \left( \tau_{\alpha} > \overline{t}_{\epsilon, \alpha} \right) &\leq 5 \ \exp \left[ - c_1 \frac{R^2}{\sigma^2}n \widehat{\epsilon}_{n, \alpha}^{2(1 + \alpha)} \right],
\end{align*}
for some positive numeric $c_1 > 0$ that depends only on $\mathcal{M}$. 

\end{proof}

What follows is the second deviation inequality for $\tau_{\alpha}$ that will be further used in Lemma \ref{bias_deviation} to control the bias term.

\begin{lemma} \label{lemma_deviation_bound_2}
Recall Definition \ref{definition_additional_stopping_times} of $\widetilde{t}_{\epsilon, \alpha}$, then under Assumptions \ref{a1}, \ref{a2}, \ref{additional_assumption_gd_krr}, and \ref{sufficient_smoothing},
\begin{equation} \label{deviation_bound_2}
\mathbb{P}_{\varepsilon} \left( \tau_{\alpha} < \widetilde{t}_{\epsilon, \alpha} \right) \leq 5 \ \exp \left[- c_2 \frac{R^2}{\sigma^2} n \widehat{\epsilon}_{n, \alpha}^{2(1+\alpha)} \right]    
\end{equation}
for a positive constant $c_2$ that depends only on $\mathcal{M}$.

\end{lemma}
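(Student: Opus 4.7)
The plan is to mirror the proof of Lemma \ref{lemma_deviation_bound_1}, swapping the right tail for the left tail throughout. Since $t \mapsto R_{\alpha, t}$ is non-increasing, monotonicity gives $\{\tau_{\alpha} < \widetilde{t}_{\epsilon,\alpha}\} \subset \{R_{\alpha,\widetilde{t}_{\epsilon,\alpha}} \leq \kappa_{\alpha}\}$ with $\kappa_{\alpha} = \sigma^2 \mathrm{tr}(K_n^{\alpha})/n$. Setting $\Delta^{\prime}_{t,\alpha} \coloneqq \mathbb{E}_{\varepsilon} R_{\alpha,t} - \kappa_{\alpha} = B_{\alpha}^2(t) + V_{\alpha}(t) - 2\widetilde{V}_{\alpha}(t)$, the defining identity $B_{\alpha}^2(\widetilde{t}_{\epsilon,\alpha}) = 3\widetilde{V}_{\alpha}(\widetilde{t}_{\epsilon,\alpha})$ reduces this gap to $\widetilde{V}_{\alpha}(\widetilde{t}_{\epsilon,\alpha}) + V_{\alpha}(\widetilde{t}_{\epsilon,\alpha}) \geq \widetilde{V}_{\alpha}(\widetilde{t}_{\epsilon,\alpha}) > 0$, so that
\begin{equation*}
\mathbb{P}_{\varepsilon}\left(R_{\alpha,\widetilde{t}_{\epsilon,\alpha}} \leq \kappa_{\alpha}\right) = \mathbb{P}_{\varepsilon}\left(R_{\alpha,\widetilde{t}_{\epsilon,\alpha}} - \mathbb{E}_{\varepsilon} R_{\alpha,\widetilde{t}_{\epsilon,\alpha}} \leq -\Delta^{\prime}_{\widetilde{t}_{\epsilon,\alpha},\alpha}\right).
\end{equation*}

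Next I would use the same centered decomposition $R_{\alpha,t} - \mathbb{E}_{\varepsilon} R_{\alpha,t} = \Sigma_1 + \Sigma_2$ as in the proof of Lemma \ref{lemma_deviation_bound_1}: a centered quadratic form in the $\varepsilon_i$'s with weights $a_i(t) = (\sigma^2/n)\widehat{\mu}_i^{\alpha}(1-\gamma_i^{(t)})^2$, plus a Gaussian linear form. By a union bound $\mathbb{P}_{\varepsilon}(-\Sigma_1 - \Sigma_2 \geq \Delta^{\prime}) \leq \mathbb{P}_{\varepsilon}(-\Sigma_1 \geq \Delta^{\prime}/2) + \mathbb{P}_{\varepsilon}(-\Sigma_2 \geq \Delta^{\prime}/2)$, and by symmetry of the Gaussian tails, the same Hanson--Wright-type bound and linear Gaussian concentration apply. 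The bounds $\lVert a(\widetilde{t}_{\epsilon,\alpha})\rVert_{\infty} \leq \sigma^2/n$, $\lVert a(\widetilde{t}_{\epsilon,\alpha})\rVert^2 \leq (4\sigma^2/n)(1+\mathcal{M})R^2 \widehat{\epsilon}_{n,\alpha}^{2(1+\alpha)}$ (using Assumption \ref{sufficient_smoothing} together with \eqref{useful_inequalities_smooth_} exactly as in Lemma \ref{lemma_deviation_bound_1}), and $B_{\alpha}^2(\widetilde{t}_{\epsilon,\alpha}) \leq R^2/(\eta\widetilde{t}_{\epsilon,\alpha})^{1+\alpha} \asymp R^2 \widehat{\epsilon}_{n,\alpha}^{2(1+\alpha)}$ (Lemma \ref{smooth_bias_upper_bound}) then produce the advertised $\exp[-c_2 (R^2/\sigma^2) n \widehat{\epsilon}_{n,\alpha}^{2(1+\alpha)}]$ bound once we show $\Delta^{\prime}_{\widetilde{t}_{\epsilon,\alpha},\alpha} \gtrsim R^2 \widehat{\epsilon}_{n,\alpha}^{2(1+\alpha)}$.

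The main obstacle is precisely this last lower bound on the gap, and it is where the symmetry with Lemma \ref{lemma_deviation_bound_1} breaks down: here the expression $B_{\alpha}^2 = 3\widetilde{V}_{\alpha}$ does not directly give a lower bound on $\widetilde{V}_{\alpha}$ via the upper bound on the bias. Instead, I would bound $\widetilde{V}_{\alpha}(\widetilde{t}_{\epsilon,\alpha})$ from below using the left half of Assumption \ref{additional_assumption_gd_krr}:
\begin{equation*}
\widetilde{V}_{\alpha}(\widetilde{t}_{\epsilon,\alpha}) \geq \frac{c\, \sigma^2}{n}\sum_{i=1}^r \widehat{\mu}_i^{\alpha}\min\{1, \eta \widetilde{t}_{\epsilon,\alpha}\widehat{\mu}_i\} = \frac{c\, \sigma^2 \eta \widetilde{t}_{\epsilon,\alpha}}{R^2}\,\widehat{\mathcal{R}}_{n,\alpha}^2\!\left(\widetilde{\epsilon}_{n,\alpha}, \mathcal{H}\right).
\end{equation*}
Since $\widetilde{t}_{\epsilon,\alpha} \leq \widehat{t}_{\epsilon,\alpha}$ forces $\widetilde{\epsilon}_{n,\alpha} \geq \widehat{\epsilon}_{n,\alpha}$, and $\epsilon \mapsto \widehat{\mathcal{R}}_{n,\alpha}^2(\epsilon,\mathcal{H})$ is non-decreasing, I can replace $\widetilde{\epsilon}_{n,\alpha}$ by $\widehat{\epsilon}_{n,\alpha}$ and then use the equality case of the smoothed critical inequality to get $\widehat{\mathcal{R}}_{n,\alpha}^2(\widehat{\epsilon}_{n,\alpha},\mathcal{H}) = (4R^4/\sigma^2)\widehat{\epsilon}_{n,\alpha}^{2(2+\alpha)}$. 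Combining with $\eta \widetilde{t}_{\epsilon,\alpha} = 1/\widetilde{\epsilon}_{n,\alpha}^2 \asymp 1/\widehat{\epsilon}_{n,\alpha}^2$ yields $\widetilde{V}_{\alpha}(\widetilde{t}_{\epsilon,\alpha}) \gtrsim R^2 \widehat{\epsilon}_{n,\alpha}^{2(1+\alpha)}$, so $\Delta^{\prime}_{\widetilde{t}_{\epsilon,\alpha},\alpha}$ has the required magnitude. Plugging these estimates into the two Gaussian tail bounds and collecting the two exponentials into one (with a worse numeric constant $c_2$) delivers \eqref{deviation_bound_2}.
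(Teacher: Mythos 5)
Your plan and execution mirror the paper's proof of this lemma essentially step for step: same monotonicity argument, same $\Sigma_1+\Sigma_2$ decomposition, same bounds on $\lVert a(\widetilde{t}_{\epsilon,\alpha})\rVert_\infty$, $\lVert a(\widetilde{t}_{\epsilon,\alpha})\rVert^2$, and $B_\alpha^2(\widetilde{t}_{\epsilon,\alpha})$, and the same way of lower bounding $\widetilde{V}_\alpha(\widetilde{t}_{\epsilon,\alpha})$ via the lower half of Assumption~\ref{additional_assumption_gd_krr} and the equality case of the smoothed critical inequality. The one thing worth flagging is that the ``asymmetry'' you describe between this lemma and Lemma~\ref{lemma_deviation_bound_1} is a red herring: in Lemma~\ref{lemma_deviation_bound_1} the paper also does not obtain its lower bound on $\Delta$ from the defining identity $B_\alpha^2=\tfrac12\widetilde{V}_\alpha$ together with the bias upper bound (which would go the wrong way), but rather lower-bounds $\widetilde{V}_\alpha$ directly via $\gamma_i^{(t)}\ge c\min\{1,\eta t\widehat\mu_i\}$ and the critical-radius equality --- exactly the route you arrive at here --- so the two proofs really are symmetric.
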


\begin{proof}[Proof of Lemma \ref{lemma_deviation_bound_2}]
Set $\kappa_{\alpha} \coloneqq \sigma^2 \textnormal{tr}K_n^{\alpha} / n$. Note that $\widetilde{t}_{\epsilon, \alpha} \leq t_{\alpha}^*$ by construction. 

Further, for all $t \leq t_{\alpha}^*$, due to the monotonicity of $R_{\alpha, t}$,
\begin{align*}
    \mathbb{P}_{\varepsilon}\Big( \tau_{\alpha} < t \Big) &= \mathbb{P}_{\varepsilon}\Big( R_{\alpha, t} - \mathbb{E}_{\varepsilon}R_{\alpha, t} \leq - (\mathbb{E}_{\varepsilon}R_{\alpha, t} - \kappa_{\alpha}) \Big)\\
    &\leq \mathbb{P}_{\varepsilon}\bigg( \underbrace{\frac{\sigma^2}{n} \sum_{i=1}^n \widehat{\mu}_i^{\alpha} (1 - \gamma_i^{(t)})^2 \Big( \frac{\varepsilon_i^2}{\sigma^2} - 1 \Big)}_{\Sigma_1} \leq - \frac{\mathbb{E}_{\varepsilon}R_{\alpha, t} - \kappa_{\alpha}}{2}\bigg)\\
    &+ \mathbb{P}_{\varepsilon} \bigg( \underbrace{\frac{2}{n}\sum_{i=1}^n \widehat{\mu}_i^{\alpha} (1 - \gamma_i^{(t)})^2 G_i^* \varepsilon_i}_{\Sigma_2} \leq - \frac{\mathbb{E}_{\varepsilon}R_{\alpha, t} - \kappa_{\alpha}}{2} \bigg).
\end{align*}

Consider $\Delta_{t, \alpha} \coloneqq \mathbb{E}_{\varepsilon}R_{\alpha, t} - \kappa_{\alpha} = B_{\alpha}^2(t) + V_{\alpha}(t) - 2 \widetilde{V}_{\alpha}(t)$. At $t = \widetilde{t}_{\epsilon, \alpha}$, we have $B_{\alpha}^2(t) = 3 \widetilde{V}_{\alpha}(t)$, thus
\begin{equation*}
    \Delta_{\widetilde{t}_{\epsilon, \alpha}, \alpha} \geq \widetilde{V}_{\alpha}(\widetilde{t}_{\epsilon, \alpha}).
\end{equation*}

Then for the event $A$ from Corollary \ref{corollary_for_assumption}, by standard concentration results on linear and quadratic sums of Gaussian random variables (see, e.g., \cite[Lemma 1]{laurent2000adaptive}),
\begin{align} \label{concentration_bounds}
    \begin{split}
    \mathbb{P}_{\varepsilon} \left( \Sigma_1 \leq - \frac{\Delta_{\widetilde{t}_{\epsilon, \alpha}, \alpha}}{2} \mid A \right) &\leq \exp \left[ - \frac{\widetilde{V}_{\alpha}^2(\widetilde{t}_{\epsilon, \alpha})}{16 \lVert a(\widetilde{t}_{\epsilon, \alpha}) \rVert^2} \right], \\
    \mathbb{P}_{\varepsilon} \left( \Sigma_2 \leq - \frac{\Delta_{\widetilde{t}_{\epsilon, \alpha}, \alpha}}{2} \right) &\leq \exp \left[ - \frac{- n \widetilde{V}_{\alpha}^2(\widetilde{t}_{\epsilon, \alpha})}{32 \sigma^2 B_{\alpha}^2(\widetilde{t}_{\epsilon, \alpha})} \right],
    \end{split}
\end{align}
where $a_i(\widetilde{t}_{\epsilon, \alpha}) = \frac{\sigma^2}{n} \widehat{\mu}_i^{\alpha}(1 - \gamma_i^{(\widetilde{t}_{\epsilon, \alpha})}), \ i \in [n]$. 

\vspace{0.2cm}

In what follows, we simplify the bounds above.

\vspace{0.2cm}

First, we deal with the Euclidean norm of $a_i(\widetilde{t}_{\epsilon, \alpha}), \ i \in [n]$. By $\widehat{\mu}_1 \leq 1$ and Corollary \ref{corollary_for_assumption} with $a = \frac{R^2}{\sigma^2}n \widehat{\epsilon}_{n, \alpha}^{2(1 + \alpha)}$ and $\theta = 2$, and Ineq. (\ref{useful_inequalities_smooth_}), it gives us
\begin{align} \label{a_tilde_t_bound}
    \begin{split}
    \lVert a(\widetilde{t}_{\epsilon, \alpha}) \rVert^2 = \frac{\sigma^4}{n^2} \sum_{i=1}^n \widehat{\mu}_i^{2 \alpha} (1 - \gamma_i^{(\widetilde{t}_{\epsilon, \alpha})})^4 &\leq \frac{\sigma^4}{n^2} \left[ \sum_{i=1}^{d_{n, \alpha}} \widehat{\mu}_i^{\alpha} + \sum_{i=d_{n, \alpha} + 1}^n \widehat{\mu}_i^{2\alpha} \right]\\
    &\leq \left[ 4 + 12 \mathcal{M} + 3(3\mathcal{M} + 1) \right] \frac{\sigma^2}{n}R^2  \widehat{\epsilon}_{n, \alpha}^{2(1+\alpha)}.
    \end{split}
\end{align}

Recall that $\eta \widetilde{t}_{\epsilon, \alpha} = \frac{\eta \widehat{t}_{\epsilon, \alpha}}{c^{\prime \prime}}$ for $c^{\prime \prime} \geq 1$. Therefore, it is sufficient to lower bound $\widetilde{V}_{\alpha}(\widetilde{t}_{\epsilon, \alpha})$ as follows.
\begin{align*}
    \widetilde{V}_{\alpha}(\widetilde{t}_{\epsilon, \alpha}) \geq \frac{\sigma^2}{2n} \sum_{i=1}^n \widehat{\mu}_i^{\alpha} \min \{ 1, \frac{\eta \widehat{t}_{\epsilon, \alpha}}{c^{\prime \prime}} \widehat{\mu}_i \} &= \frac{\sigma^2 \eta \widehat{t}_{\epsilon, \alpha}}{2 n c^{\prime \prime}} \sum_{i=1}^n \widehat{\mu}_i^{\alpha} \min \left\{ \frac{c^{\prime \prime}}{\eta \widehat{t}_{\epsilon, \alpha}}, \widehat{\mu}_i \right\} \\ 
    &\geq \frac{\sigma^2 \eta \widehat{t}_{\epsilon, \alpha}}{2 R^2 c^{\prime \prime}}\widehat{\mathcal{R}}_{n, \alpha}^2 \left( \frac{1}{\sqrt{\eta \widehat{t}_{\epsilon, \alpha}}}, \mathcal{H} \right)\\
    &= \frac{2 R^2}{c^{\prime \prime}}\widehat{\epsilon}_{n, \alpha}^{2(1 + \alpha)}. 
\end{align*}
By using the bound $B_{\alpha}^2(\widetilde{t}_{\epsilon, \alpha}) \leq \frac{R^2}{(\eta \widetilde{t}_{\epsilon, \alpha})^{1+\alpha}} \leq R^2 (c^{\prime \prime})^2 \widehat{\epsilon}_{n, \alpha}^{2(1+\alpha)}$, inserting this expression with (\ref{a_tilde_t_bound}) into (\ref{concentration_bounds}), and using the fact that $\mathbb{P}_{\varepsilon}(A) = \mathbb{P}_{X_1, \ldots, X_n}\left( \mathbb{I}(A) \right) = \mathbb{P}_{X_1, \ldots, X_n}(A)$ for the event $A$ from Corollary \ref{corollary_for_assumption}, we have
\begin{align*}
    \mathbb{P}_{\varepsilon}\left( \Sigma_1 \leq -\frac{\Delta_{\widetilde{t}_{\epsilon, \alpha}, \alpha}}{2} \right) &\leq \mathbb{P}_{\varepsilon}\left( \Sigma_1 \leq -\frac{\Delta_{\widetilde{t}_{\epsilon, \alpha}, \alpha}}{2} \mid A \right) + \mathbb{P}_{X_1, \ldots, X_n}\left( A^c \right),\\
    \mathbb{P}_{\varepsilon} \left( \tau_{\alpha} < \widetilde{t}_{\epsilon, \alpha} \right) &\leq 5 \exp \left[-c_2 \frac{R^2}{\sigma^2}n \widehat{\epsilon}_{n, \alpha}^{2(1+\alpha)} \right],
\end{align*}
where $c_2$ depends only on $\mathcal{M}$.
\end{proof}

\subsection{Bounding the stochastic part of the variance term at $\tau_{\alpha}$}

\begin{lemma} \label{variance_deviation}
Under Assumptions \ref{a1}, \ref{a2}, \ref{additional_assumption_gd_krr}, and \ref{sufficient_smoothing}, for any regular kernel, the stochastic part of the variance at $\tau_{\alpha}$ is bounded as follows.
\begin{equation*}
v(\tau_{\alpha}) \leq 8 (1 + C) R^2 \widehat{\epsilon}_{n, \alpha}^2
\end{equation*}
with probability at least $1 - 6 \exp \Big[ -c_1 n \frac{R^2}{\sigma^2} \widehat{\epsilon}_{n, \alpha}^{2(1+\alpha)} \Big]$, where a constant $c_1$ depends only on $\mathcal{M}$.
\end{lemma}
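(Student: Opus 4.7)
The plan is to exploit monotonicity of $t\mapsto v(t)$ together with the upper deviation bound on $\tau_{\alpha}$ from Lemma~\ref{lemma_deviation_bound_1}, which reduces the task to controlling $v$ at the deterministic time $\overline{t}_{\epsilon,\alpha}$. Since each $\gamma_i^{(t)}$ is non-decreasing in $t$, so is $v(t)=n^{-1}\sum_i(\gamma_i^{(t)})^2\varepsilon_i^2$; hence on the event $\{\tau_{\alpha}\leq \overline{t}_{\epsilon,\alpha}\}$, which has probability at least $1-2\exp[-c_1 nR^2\widehat{\epsilon}_{n,\alpha}^{2(1+\alpha)}/\sigma^2]$ by Lemma~\ref{lemma_deviation_bound_1}, one has $v(\tau_{\alpha})\leq v(\overline{t}_{\epsilon,\alpha})$. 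The remaining work is then to bound $v(\overline{t}_{\epsilon,\alpha})$.

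I would split $v(\overline{t}_{\epsilon,\alpha})\leq V(\overline{t}_{\epsilon,\alpha})+|v(\overline{t}_{\epsilon,\alpha})-V(\overline{t}_{\epsilon,\alpha})|$ and handle the deterministic part first. Using $(\gamma_i^{(t)})^2\leq\gamma_i^{(t)}\leq\min\{1,\eta t\widehat{\mu}_i\}$, the sum is split at the smoothed statistical dimension $d_{n,\alpha}$: bound the first $d_{n,\alpha}$ terms by $1$, and on the tail use $\gamma_i^{(\overline{t}_{\epsilon,\alpha})}\leq\eta\overline{t}_{\epsilon,\alpha}\widehat{\mu}_i=\widehat{\mu}_i/(c'\widehat{\epsilon}_{n,\alpha}^2)$ together with Assumption~\ref{assumption_on_variance}, yielding
\begin{equation*}
V(\overline{t}_{\epsilon,\alpha})\leq \frac{\sigma^2}{n}\Bigl(d_{n,\alpha}+\eta\overline{t}_{\epsilon,\alpha}\!\!\sum_{i>d_{n,\alpha}}\!\!\widehat{\mu}_i\Bigr)\leq\frac{(1+\mathcal{A})\sigma^2 d_{n,\alpha}}{n}.
\end{equation*}
To convert this into an $R^2\widehat{\epsilon}_{n,\alpha}^2$-bound, I would invoke the second inequality in (\ref{useful_inequalities_smooth_}) together with the trivial lower bound $\widehat{\mu}_j^{\alpha}\geq\widehat{\epsilon}_{n,\alpha}^{2\alpha}$ for $j\leq d_{n,\alpha}$, which gives $\sigma^2 d_{n,\alpha}/n\leq 4R^2\widehat{\epsilon}_{n,\alpha}^2$ and therefore $V(\overline{t}_{\epsilon,\alpha})\leq 4(1+\mathcal{A})R^2\widehat{\epsilon}_{n,\alpha}^2$.

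For the stochastic deviation I would apply the Hanson–Wright inequality (\ref{hanson-wright_in_t}) at $t=\overline{t}_{\epsilon,\alpha}$ with $\delta=V(\overline{t}_{\epsilon,\alpha})$. Because $V(t)\leq (\sigma^2/R^2)\eta t\,\widehat{\mathcal{R}}_n^2(1/\sqrt{\eta t},\mathcal{H})$, the minimum in the exponent is attained by the second term, and one has to upper bound $\widehat{\mathcal{R}}_n^2(\overline{\epsilon}_{n,\alpha},\mathcal{H})$; the same split-at-$d_{n,\alpha}$ argument applied to this quantity, combined with $\sigma^2 d_{n,\alpha}/n\lesssim R^2\widehat{\epsilon}_{n,\alpha}^2$, gives $\eta\overline{t}_{\epsilon,\alpha}\widehat{\mathcal{R}}_n^2\lesssim R^4\widehat{\epsilon}_{n,\alpha}^2/\sigma^2$, so the exponent is of order $nR^2\widehat{\epsilon}_{n,\alpha}^2/\sigma^2$. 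Since $\widehat{\epsilon}_{n,\alpha}^2\leq 1$, this exponent is at least $c_2 nR^2\widehat{\epsilon}_{n,\alpha}^{2(1+\alpha)}/\sigma^2$, which matches the form required by the statement. Combining, $v(\overline{t}_{\epsilon,\alpha})\leq 2V(\overline{t}_{\epsilon,\alpha})\leq 8(1+\mathcal{A})R^2\widehat{\epsilon}_{n,\alpha}^2$ with that probability, and a union bound with the event from Lemma~\ref{lemma_deviation_bound_1} yields the claim with at most $3$ exponential tails (absorbing the Hanson–Wright factor $2$ with the factor $2$ from Lemma~\ref{lemma_deviation_bound_1} up to the choice of $c_1$).

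The main obstacle is the bookkeeping in step three: ensuring the Hanson–Wright exponent actually scales as $nR^2\widehat{\epsilon}_{n,\alpha}^{2(1+\alpha)}/\sigma^2$ rather than deteriorating. The two key ingredients that make this work are (i) the inequality $V(t)\lesssim (\sigma^2/R^2)\eta t\widehat{\mathcal{R}}_n^2$, forcing the minimum in (\ref{hanson-wright_in_t}) to be realized by the sub-Gaussian (quadratic) branch, and (ii) the translation between the smoothed and non-smoothed complexities at $\overline{\epsilon}_{n,\alpha}$ via Assumption~\ref{assumption_on_variance}. Once these two are in place, the remaining computations are routine.
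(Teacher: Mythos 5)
Your overall strategy is the paper's: reduce to $v(\overline{t}_{\epsilon,\alpha})$ via monotonicity and Lemma~\ref{lemma_deviation_bound_1}, then apply the Hanson--Wright bound \eqref{hanson-wright_in_t}. Your deterministic bound $V(\overline{t}_{\epsilon,\alpha})\leq 4(1+\mathcal{A})R^2\widehat{\epsilon}_{n,\alpha}^2$ via the split at $d_{n,\alpha}$ and Assumption~\ref{assumption_on_variance} is correct. However, the concentration step has a genuine gap that stems from your choice $\delta=V(\overline{t}_{\epsilon,\alpha})$.

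You correctly observe that with this $\delta$ the ratio $R^2\delta/(\sigma^2\eta\overline{t}_{\epsilon,\alpha}\widehat{\mathcal{R}}_n^2)$ is at most $1$, so the $\min$ in \eqref{hanson-wright_in_t} is realized by the \emph{quadratic} branch, and the exponent becomes
\begin{equation*}
- \frac{c n R^2\,V(\overline{t}_{\epsilon,\alpha})^2}{\sigma^4\,\eta\overline{t}_{\epsilon,\alpha}\widehat{\mathcal{R}}_n^2\bigl(1/\sqrt{\eta\overline{t}_{\epsilon,\alpha}},\mathcal{H}\bigr)} .
\end{equation*}
Upper bounding the denominator by $4(1+\mathcal{A})R^4\widehat{\epsilon}_{n,\alpha}^2/\sigma^2$ (which you do, correctly) turns this into $-c' n V(\overline{t}_{\epsilon,\alpha})^2/(R^2\sigma^2\widehat{\epsilon}_{n,\alpha}^2)$, so to reach the claimed exponent $-c_1 nR^2\widehat{\epsilon}_{n,\alpha}^{2(1+\alpha)}/\sigma^2$ you now need a \emph{lower} bound of the form $V(\overline{t}_{\epsilon,\alpha})\gtrsim R^2\widehat{\epsilon}_{n,\alpha}^{2+\alpha}$. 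This is never established in your argument, and under the stated assumptions the best available lower bound via the critical equality and Assumption~\ref{assumption_on_variance} is only $V(\overline{t}_{\epsilon,\alpha})\gtrsim R^2\widehat{\epsilon}_{n,\alpha}^{2(1+\alpha)}$, which after squaring yields exponent $-c'' nR^2\widehat{\epsilon}_{n,\alpha}^{2+4\alpha}/\sigma^2$, \emph{weaker} than the claim (the power of $\widehat{\epsilon}_{n,\alpha}$ is too high, so the bound on the bad-event probability is larger). Your concluding sentence that ``the exponent is of order $nR^2\widehat{\epsilon}_{n,\alpha}^2/\sigma^2$'' implicitly uses both an upper and a matching lower bound for $V(\overline{t}_{\epsilon,\alpha})$; only the upper bound is justified.

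The fix is to choose a \emph{deterministic} $\delta$ of the right magnitude so that the $\min$ lands in the linear (sub-exponential) branch instead. The paper sets $\delta=\tfrac{\sigma^2\eta\widehat{t}_{\epsilon,\alpha}}{R^2}\widehat{\mathcal{R}}_n^2\bigl(1/\sqrt{\eta\widehat{t}_{\epsilon,\alpha}},\mathcal{H}\bigr)$, which by monotonicity of $t\mapsto\eta t\widehat{\mathcal{R}}_n^2(1/\sqrt{\eta t},\mathcal{H})$ and $\overline{t}_{\epsilon,\alpha}\leq\widehat{t}_{\epsilon,\alpha}$ forces the $\min$ to equal $1$, so the exponent is simply $-cn\delta/\sigma^2$. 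Since $\delta\geq\tfrac{\sigma^2\eta\widehat{t}_{\epsilon,\alpha}}{R^2}\widehat{\mathcal{R}}_{n,\alpha}^2(\cdots)=4R^2\widehat{\epsilon}_{n,\alpha}^{2(1+\alpha)}$, the exponent is immediately of the right order, and Lemma~\ref{connection_t_epsilon} gives $\delta\leq 4(1+\mathcal{A})R^2\widehat{\epsilon}_{n,\alpha}^2$ for the size bound. Equivalently, you could simply set $\delta=4(1+\mathcal{A})R^2\widehat{\epsilon}_{n,\alpha}^2$ (your own upper bound on $V$): then $\delta$ dominates $\tfrac{\sigma^2}{R^2}\eta\overline{t}_{\epsilon,\alpha}\widehat{\mathcal{R}}_n^2$, so the $\min$ is $1$, the exponent is $-cn\delta/\sigma^2\asymp -nR^2\widehat{\epsilon}_{n,\alpha}^2/\sigma^2$, and $v(\overline{t}_{\epsilon,\alpha})\leq V+\delta\leq 8(1+\mathcal{A})R^2\widehat{\epsilon}_{n,\alpha}^2$. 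The point is that $\delta$ should not be tied to (the possibly small) $V(\overline{t}_{\epsilon,\alpha})$.
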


\begin{proof}[Proof of Lemma \ref{variance_deviation}]

$\mathbb{P}_{\varepsilon} \left( \tau_{\alpha} > \overline{t}_{\epsilon, \alpha} \right) \leq 5 \exp \left[ - c_1 \frac{R^2}{\sigma^2} n\widehat{\epsilon}_{n, \alpha}^{2(1 + \alpha)} \right]$ due to Lemma \ref{lemma_deviation_bound_1}. Therefore, thanks to the monotonicity of $\gamma_i^{(t)}$ in $t$, with probability at least $1 - 5 \ \exp \left[ - c_1 \frac{R^2}{\sigma^2} n \widehat{\epsilon}_{n, \alpha}^{2(1 + \alpha)} \right]$, $v(\tau_{\alpha}) \leq v(\overline{t}_{\epsilon, \alpha})$.

After that, due to the concentration inequality (\ref{hanson-wright_in_t}), 
\begin{equation*}
    \mathbb{P}_{\varepsilon} \Big( |v(\overline{t}_{\epsilon, \alpha}) - V(\overline{t}_{\epsilon, \alpha})| \geq \delta \Big) \leq 2 \exp \left[ - \frac{c n \delta}{\sigma^2} \min \left\{ 1, \frac{R^2 \delta}{\sigma^2 \eta \overline{t}_{\epsilon,  \alpha} \widehat{\mathcal{R}}_n^2 (\frac{1}{\sqrt{\eta \overline{t}_{\epsilon, \alpha}}}, \mathcal{H})} \right\}  \right].
\end{equation*}
Now, by setting $\delta = \frac{\sigma^2 \eta \widehat{t}_{\epsilon, \alpha}}{R^2}\widehat{\mathcal{R}}_n^2 \Big( \frac{1}{\sqrt{\eta \widehat{t}_{\epsilon, \alpha}}}, \mathcal{H} \Big) \geq \frac{\sigma^2 \eta \widehat{t}_{\epsilon, \alpha}}{R^2}\widehat{\mathcal{R}}_{n, \alpha}^2 \Big( \frac{1}{\sqrt{\eta \widehat{t}_{\epsilon, \alpha}}}, \mathcal{H} \Big)$ and recalling Lemma \ref{connection_t_epsilon}, it yields 
\begin{align}
    \begin{split}
    v(\overline{t}_{\epsilon, \alpha}) &\leq V(\overline{t}_{\epsilon, \alpha}) + \delta\\
    &\leq \widetilde{V}(\widehat{t}_{\epsilon, \alpha}) + 4(1 + C) R^2 \widehat{\epsilon}_{n, \alpha}^2 \\
    &\leq \frac{\sigma^2 \eta \widehat{t}_{\epsilon, \alpha}}{R^2} \widehat{\mathcal{R}}_n^2 \left( \frac{1}{\sqrt{\eta \widehat{t}_{\epsilon, \alpha}}}, \mathcal{H} \right) + 4 (1 + C) R^2 \widehat{\epsilon}_{n, \alpha}^2 \\
    &\leq 8 (1 + C) R^2 \widehat{\epsilon}_{n, \alpha}^2
    \end{split}
\end{align}
with probability at least $1 - \exp \Big[ - c n \frac{4 R^2}{\sigma^2} \widehat{\epsilon}_{n, \alpha}^{2(1+\alpha)} \Big]$. Combining all the pieces together, we get 
\begin{equation}
    v(\tau_{\alpha}) \leq 8 (1 + C) R^2 \widehat{\epsilon}_{n, \alpha}^2
\end{equation}
with probability at least $1 - 6 \exp \Big[ - c_1 n \frac{R^2}{\sigma^2} \widehat{\epsilon}_{n, \alpha}^{2(1+\alpha)}\Big]$.
\end{proof}

\subsection{Bounding the bias term at $\tau_{\alpha}$}

\begin{lemma} \label{bias_deviation}
Under Assumptions \ref{a1}, \ref{a2}, \ref{additional_assumption_gd_krr}, and \ref{sufficient_smoothing},
\begin{equation}
    B^2(\tau_{\alpha}) \leq c^{\prime \prime}R^2 \widehat{\epsilon}_{n, \alpha}^2
\end{equation}
with probability at least $1 - 5 \exp \Big[ - c_2 \frac{R^2}{\sigma^2}n \widehat{\epsilon}_{n, \alpha}^{2(1+\alpha)} \Big]$ for a positive numeric constant $c^{\prime \prime} \geq 1$ and constant $c_2$ that depends only on $\mathcal{M}$.
\end{lemma}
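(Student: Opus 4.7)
The plan is to leverage the left deviation inequality already provided by Lemma~\ref{lemma_deviation_bound_2}, combined with the monotonicity of the (non-smoothed) bias and the standard upper bound from Lemma~\ref{gamma_bounds}. The argument is very short because all the heavy lifting has been done in the preceding lemma.

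First I would invoke Lemma~\ref{lemma_deviation_bound_2} to obtain
\begin{equation*}
    \mathbb{P}_{\varepsilon}\bigl(\tau_{\alpha} < \widetilde{t}_{\epsilon,\alpha}\bigr) \leq 2\exp\Bigl[-c_2 \frac{R^2}{\sigma^2} n\widehat{\epsilon}_{n,\alpha}^{2(1+\alpha)}\Bigr],
\end{equation*}
which means that on the complementary event (of the required probability), $\tau_{\alpha} \geq \widetilde{t}_{\epsilon,\alpha}$. Since $t \mapsto \gamma_i^{(t)}$ is non-decreasing in $t$ (as noted after Definition~\ref{def.spectral.filters}), the map $t \mapsto (1-\gamma_i^{(t)})^2 (G_i^*)^2$ is non-increasing, so $B^2(t) = \frac{1}{n}\sum_{i=1}^r (1-\gamma_i^{(t)})^2 (G_i^*)^2$ is itself a non-increasing function of $t$. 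Consequently, on that event,
\begin{equation*}
    B^2(\tau_{\alpha}) \leq B^2(\widetilde{t}_{\epsilon,\alpha}).
\end{equation*}

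Next I would control $B^2(\widetilde{t}_{\epsilon,\alpha})$ using the deterministic upper bound $B^2(t) \leq R^2/(\eta t)$ from Lemma~\ref{gamma_bounds} (which is the $\alpha=0$ specialisation of Lemma~\ref{smooth_bias_upper_bound}). Recalling from Definition~\ref{definition_additional_stopping_times} and the discussion immediately after it that $\widetilde{\epsilon}_{n,\alpha}^2 = 1/(\eta \widetilde{t}_{\epsilon,\alpha}) = c^{\prime\prime} \widehat{\epsilon}_{n,\alpha}^2$ for some numeric constant $c^{\prime\prime} \geq 1$, I obtain
\begin{equation*}
    B^2(\widetilde{t}_{\epsilon,\alpha}) \leq \frac{R^2}{\eta \widetilde{t}_{\epsilon,\alpha}} = R^2 \widetilde{\epsilon}_{n,\alpha}^2 = c^{\prime\prime} R^2 \widehat{\epsilon}_{n,\alpha}^2,
\end{equation*}
and combining this with the previous display yields the claim.

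There is no real obstacle here: the subtle part has been absorbed into Lemma~\ref{lemma_deviation_bound_2}, whose proof required the smoothing assumption (Assumption~\ref{sufficient_smoothing}) to control $\|a(\widetilde{t}_{\epsilon,\alpha})\|^2$ via the smoothed statistical dimension. The only thing to double-check is that the constant $c^{\prime\prime}$ introduced after Definition~\ref{definition_additional_stopping_times} does indeed satisfy $c^{\prime\prime} \geq 1$, which is immediate from $\widehat{t}_{\epsilon,\alpha} \geq \widetilde{t}_{\epsilon,\alpha}$, and that the bound $B^2(t) \leq R^2/(\eta t)$ is valid for both gradient descent and kernel ridge regression, which is exactly the content of Lemma~\ref{gamma_bounds} under Assumptions~\ref{a1} and~\ref{a2}. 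The probability bound and the dependence of $c_2$ on $\mathcal{M}$ are inherited directly from Lemma~\ref{lemma_deviation_bound_2}.
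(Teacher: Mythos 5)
Your proof is correct and follows exactly the paper's own argument: invoke Lemma~\ref{lemma_deviation_bound_2} to bound $\mathbb{P}_\varepsilon(\tau_\alpha < \widetilde{t}_{\epsilon,\alpha})$, use monotonicity of $B^2$ to get $B^2(\tau_\alpha)\le B^2(\widetilde{t}_{\epsilon,\alpha})$ on the complementary event, then apply the bound $B^2(t)\le R^2/(\eta t)$ and the identification $1/(\eta\widetilde{t}_{\epsilon,\alpha})=c''\widehat{\epsilon}_{n,\alpha}^2$ with $c''\ge 1$. Nothing to add.
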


\begin{proof}[Proof of Lemma \ref{bias_deviation}]
$\mathbb{P}_{\varepsilon} \left( \tau_{\alpha} < \widetilde{t}_{\epsilon, \alpha} \right) \leq 5 \exp \left[ -c_2 \frac{R^2}{\sigma^2} n \widehat{\epsilon}_{n, \alpha}^{2(1 + \alpha)} \right]$ due to Lemma \ref{lemma_deviation_bound_2}. Therefore, thanks to the monotonicity of the bias term, with probability at least $1 - 5 \exp \left[ - c_2 \frac{R^2}{\sigma^2}n\widehat{\epsilon}_{n, \alpha}^{2(1 + \alpha)} \right]$, $B^2(\tau_{\alpha}) \leq B^2(\widetilde{t}_{\epsilon, \alpha}) \leq \frac{R^2}{\eta \widetilde{t}_{\epsilon, \alpha}} = c^{\prime \prime}R^2 \widehat{\epsilon}_{n, \alpha}^2.$
\end{proof}

\section{Proof of Theorem \ref{th:3}} \label{proof_for_smoothed_norm}
From Lemmas \ref{first_lemma}, \ref{variance_deviation}, and \ref{bias_deviation}, we get 
\begin{equation}
    \lVert f^{\tau_{\alpha}} - f^* \rVert_n^2 \leq 2 c^{\prime \prime}R^2 \widehat{\epsilon}_{n, \alpha}^2 + 16(1 + C)R^2 \widehat{\epsilon}_{n, \alpha}^2
\end{equation}
with probability at least $1 - 11 \exp \left[ - c_1 \frac{R^2}{\sigma^2}n \widehat{\epsilon}_{n, \alpha}^{2(1+\alpha)} \right]$, where $c_1$ depends only on $\mathcal{M}$. Moreover, by taking the expectation in Ineq. (\ref{wo_expectation}), it yields 
\begin{equation*}
    \mathbb{E}_{\varepsilon}\lVert f^{\tau_{\alpha}} - f^* \rVert_n^2 \leq 2 \mathbb{E}_{\varepsilon}[B^2(\tau_{\alpha})] + 2 \mathbb{E}_{\varepsilon}[v(\tau_{\alpha})].
\end{equation*}
Let us upper bound $\mathbb{E}_{\varepsilon}\left[ B^2(\tau_{\alpha}) \right]$ and $\mathbb{E}_{\varepsilon}\left[ v(\tau_{\alpha}) \right]$. First, define $\widetilde{a} \coloneqq B^2(\widetilde{t}_{\epsilon, \alpha})$, thus
\begin{align}
    \begin{split}
    \mathbb{E}_{\varepsilon}\left[ B^2(\tau_{\alpha}) \right] &= \mathbb{P}_{\varepsilon}\Big( B^2(\tau_{\alpha}) > \widetilde{a} \Big) \mathbb{E}_{\varepsilon}\Big[ B^2(\tau_{\alpha}) \mid B^2(\tau_{\alpha}) > \widetilde{a} \Big]\\
    &+ \mathbb{P}_{\varepsilon}\Big( B^2(\tau_{\alpha}) \leq \widetilde{a} \Big) \mathbb{E}_{\varepsilon}\Big[ B^2(\tau_{\alpha}) \mid B^2(\tau_{\alpha}) \leq \widetilde{a} \Big].
    \end{split}
\end{align}
Defining $\delta_1 \coloneqq 5 \exp \left[ -c_2 \frac{R^2}{\sigma^2}n \widehat{\epsilon}_{n, \alpha}^{2(1+\alpha)} \right]$ from Lemma \ref{bias_deviation} and using the upper bound $B^2(t) \leq R^2$
for any $t > 0$ gives the following.
\begin{equation}
    \mathbb{E}_{\varepsilon}\left[ B^2(\tau_{\alpha}) \right] \leq R^2 \delta_1 + B^2(\widetilde{t}_{\epsilon, \alpha}) \leq R^2 \left( \delta_1 + c^{\prime \prime}\widehat{\epsilon}_{n, \alpha}^2 \right).
\end{equation}
As for $\mathbb{E}_{\varepsilon}\left[ v(\tau_{\alpha}) \right]$,
\begin{align}
    \begin{split}
        \mathbb{E}_{\varepsilon}\left[ v(\tau_{\alpha}) \right] &= \mathbb{E}_{\varepsilon} \left[ v(\tau_{\alpha}) \mathbb{I}\left\{ v(\tau_{\alpha}) \leq 8(1 + C)R^2 \widehat{\epsilon}_{n, \alpha}^2 \right\} \right] \\
        &+ \mathbb{E}_{\varepsilon} \left[ v(\tau_{\alpha}) \mathbb{I} \left\{ v(\tau_{\alpha}) > 8(1 + C)R^2 \widehat{\epsilon}_{n, \alpha}^2 \right\} \right],
    \end{split}
\end{align}
and due to Lemma \ref{variance_deviation} and Cauchy-Schwarz inequality,
\begin{align} 
    &\mathbb{E}_{\varepsilon} \left[ v(\tau_{\alpha}) \right] \leq 8(1 + C)R^2 \widehat{\epsilon}_{n, \alpha}^2 + \mathbb{E}_{\varepsilon} \Big[ v(\tau_{\alpha}) \mathbb{I}\Big\{ v(\tau_{\alpha}) > 8(1 + C)R^2 \widehat{\epsilon}_{n, \alpha}^2 \Big\} \Big] \notag  \\
    &\leq 8(1+C)R^2 \widehat{\epsilon}_{n, \alpha}^2  + \sqrt{\mathbb{E}_{\varepsilon}v^2(\tau_{\alpha})} \sqrt{\mathbb{E}_{\varepsilon} \left[ \mathbb{I} \left\{ v(\tau_{\alpha}) > 8(1 + C)R^2 \widehat{\epsilon}_{n, \alpha}^2  \right\} \right]} \label{variance_inserting_to_}.
\end{align}
Notice that $v^2(\tau_{\alpha}) \leq \frac{1}{n^2}\left[ \sum_{i=1}^n \varepsilon_i^2 \right]^2$, and 
\begin{equation}
    \mathbb{E}_{\varepsilon}\left[ v^2(\tau_{\alpha}) \right] \leq \frac{1}{n^2} \left[ \sum_{i=1}^n \mathbb{E}_{\varepsilon}\varepsilon_i^4 + 2 \sum_{i<j} \mathbb{E}_{\varepsilon} \left( \varepsilon_i^2 \varepsilon_j^2 \right) \right] \leq \frac{3 \sigma^4}{n^2}n^2 \leq 3 \sigma^4.    
\end{equation}
At the same time, thanks to Lemma \ref{variance_deviation},
\begin{equation*}
    \mathbb{E}_{\varepsilon}\left[ \mathbb{I} \left\{ v(\tau_{\alpha}) > 8(1 + C)R^2 \widehat{\epsilon}_{n, \alpha}^2 \right\} \right] \leq 6 \exp \left( - c_1 n \frac{R^2}{\sigma^2}\widehat{\epsilon}_{n, \alpha}^{2(1+\alpha)} \right).
\end{equation*}
Thus, by inserting the last two inequalities into (\ref{variance_inserting_to_}), it gives
\begin{equation*}
\mathbb{E}_{\varepsilon}\left[ v(\tau_{\alpha}) \right] \leq 8(1 + C)R^2 \widehat{\epsilon}_{n, \alpha}^2 + 5 \sigma^2 \exp \left( - c_1 n \frac{R^2}{\sigma^2}\widehat{\epsilon}_{n, \alpha}^{2(1+\alpha)} \right) .    
\end{equation*}
%
Finally, summing up all the terms together,
\begin{align*}
\mathbb{E}_{\varepsilon} \lVert f^{\tau_{\alpha}} - f^* \rVert_n^2 &\leq  \left[16(1 + C) + 2c^{\prime \prime} \right] R^2 \widehat{\epsilon}_{n, \alpha}^2\\
&+ 20 \max \{ \sigma^2, R^2 \}\exp \left( -c_1 n \frac{R^2}{\sigma^2}\widehat{\epsilon}_{n, \alpha}^{2(1+\alpha)} \right),
\end{align*}
where a constant $c_1$ depends only on $\mathcal{M}$, constant $c^{\prime \prime}$ is numeric.

\section{Proof of Theorem \ref{th:2}} \label{proof_for_change_of_norm}
%

We will use the definition of $\tau$ (\ref{tau}) with the threshold $\kappa \coloneqq \frac{r \sigma^2}{n} $ so that, due to the monotonicity of the "reduced" empirical risk $\widetilde{R}_t$,
\begin{equation*}
    \mathbb{P}_{\varepsilon}\left( \tau > t \right) = \mathbb{P}_{\varepsilon} \Big( \widetilde{R}_t - \mathbb{E}_{\varepsilon}\widetilde{R}_t > \underbrace{\kappa - \mathbb{E}_{\varepsilon}\widetilde{R}_t}_{\Delta_{t}} \Big),
\end{equation*}
where
\begin{equation} \label{eq:delta_t}
    \Delta_t = -B^2(t) - V(t) + \underbrace{\frac{2 \sigma^2}{n}\sum_{i=1}^r \gamma_i^{(t)}}_{2 \widetilde{V}(t)}.
\end{equation}
Assume that $\Delta_t \geq 0$. Remark that
\begin{equation}
    \widetilde{R}_t - \mathbb{E}_{\varepsilon}\widetilde{R}_t = \underbrace{\frac{\sigma^2}{n}\sum_{i=1}^r (1 - \gamma_{i}^{(t)})^2 \left(\frac{\varepsilon_i^2}{\sigma^2} - 1\right)}_{\Sigma_1} + \underbrace{\frac{2}{n}\sum_{i=1}^r (1 - \gamma_i^{(t)})^2 G_i^* \varepsilon_i}_{\Sigma_2}.
\end{equation}
By applying \cite[Lemma 1]{laurent2000adaptive} to $\Sigma_1$, it yields 
\begin{equation} \label{fixed_rank_Lemma}
    \mathbb{P}_{\varepsilon} \left( \Sigma_1 > \frac{\Delta_t}{2} \right) \leq \exp \left[ \frac{-\Delta_t^2 / 4}{4 (\lVert a(t) \rVert^2 + \frac{\Delta_t}{2} \lVert a(t) \rVert_{\infty})} \right],
\end{equation}
where $a_i(t) \coloneqq \frac{\sigma^2}{n} (1 - \gamma_i^{(t)})^2, \ i \in [r]$. In addition, \cite[Proposition 2.5]{wainwright2019high} gives us 
\begin{equation} \label{fixed_rank_sum}
    \mathbb{P}_{\varepsilon} \left( \Sigma_2 > \frac{\Delta_t}{2} \right) \leq \exp \left[ - \frac{n \Delta_t^2}{32 \sigma^2 B^2(t)} \right].
\end{equation}
Define a stopping time $\overline{t}_{\epsilon}$ as follows.
\begin{equation}
    \overline{t}_{\epsilon} \coloneqq \inf \left\{ t > 0: B^2(t) = \frac{1}{2} \widetilde{V}(t) \right\}.
\end{equation}
Note that $\overline{t}_{\epsilon}$ serves as an upper bound on $t^*$ and as a lower bound on $\widehat{t}_{\epsilon}$. Moreover, $\overline{t}_{\epsilon}$ satisfies the critical inequality (\ref{fixed_point_rademacher_in_t}). Therefore, due to Lemma \ref{smallest_positive_solution} and continuity of (\ref{fixed_point_rademacher_in_t}) in $t$, there is a positive numeric constant $c^\prime \geq 1$, that do not depend on $n$, such that $\frac{1}{\eta \overline{t}_{\epsilon}} = c^\prime \frac{1}{\eta \widehat{t}_{\epsilon}}$. 

In what follows, we simplify two high probability bounds (\ref{fixed_rank_Lemma}) and (\ref{fixed_rank_sum}) at $t = \overline{t}_{\epsilon}$.

\vspace{0.2cm}

Since applying \cite[Section 4.3]{raskutti2014early}, $\widehat{\epsilon}_n^2 = c \frac{r \sigma^2}{n R^2}$, one can bound $\lVert a(\overline{t}_{\epsilon}) \rVert^2$ as follows.
\begin{equation}
    \lVert a(\overline{t}_{\epsilon}) \rVert^2 = \frac{\sigma^4}{n^2}\sum_{i=1}^r (1 - \gamma_i^{(\overline{t}_{\epsilon})})^4 \leq \frac{r \sigma^4}{n^2} = \frac{R^2 \sigma^2 \widehat{\epsilon}_n^2}{c n}.
\end{equation}
Remark that in (\ref{fixed_rank_Lemma}) $\lVert a(\overline{t}_{\epsilon}) \rVert_{\infty} = \frac{\sigma^2}{n} \underset{i \in [r]}{\max} \Big[ (1 - \gamma_i^{(\overline{t}_{\epsilon})}) \Big] \leq \frac{\sigma^2}{n}$,
and
\begin{equation*}
    \frac{\Delta_{\overline{t}_{\epsilon}}}{2} \leq 
    \frac{3}{4} \widetilde{V}(\overline{t}_{\epsilon}) \leq \frac{3}{4} \widetilde{V}(\widehat{t}_{\epsilon}) \leq \frac{3}{4} \frac{\sigma^2}{R^2} \eta \widehat{t}_{\epsilon} \widehat{\mathcal{R}}_{n}^2\left(\frac{1}{\sqrt{\eta \widehat{t}_{\epsilon}}}, \mathcal{H}\right) = 3R^2 \widehat{\epsilon}_n^2.
\end{equation*}
As for a lower bound on $\Delta_{\overline{t}_{\epsilon}}$,
\begin{align*}
    \Delta_{\overline{t}_{\epsilon}} \geq \frac{1}{2} \widetilde{V}(\overline{t}_{\epsilon}) \geq \frac{\sigma^2}{4n} \sum_{i=1}^r \min \left\{1, \frac{\eta \widehat{t}_{\epsilon}}{c^\prime} \widehat{\mu}_i \right\} &= \frac{\sigma^2 \eta \widehat{t}_{\epsilon}}{4n c^\prime} \sum_{i=1}^r \min \left\{ \frac{c^\prime}{\eta \widehat{t}_{\epsilon}}, \widehat{\mu}_i \right\}\\
    &\geq \frac{R^2}{c^\prime} \widehat{\epsilon}_n^2.
\end{align*}

By knowing that $B^2(\overline{t}_{\epsilon}) \leq \frac{R^2}{\eta \overline{t}_{\epsilon}} =  c^\prime R^2 \widehat{\epsilon}_n^2$ and summing up bounds (\ref{fixed_rank_Lemma}), (\ref{fixed_rank_sum}) with $t = \overline{t}_{\epsilon}$, it yields the following.

\begin{equation}  \label{high_proba_bound_finite_rank}
    \mathbb{P}_{\varepsilon}\left( \tau > \overline{t}_{\epsilon} \right) \leq 2 \ \exp \left[ - C \frac{R^2}{\sigma^2} n \widehat{\epsilon}_n^2 \right].
\end{equation}
From \cite[Lemma 9]{raskutti2014early}, $\lVert f^{\overline{t}_{\epsilon}} \rVert_{\mathcal{H}} \leq \sqrt{7}R$ with probability at least $1 - 4 \ \textnormal{exp}\Big[ - c_3 \frac{R^2}{\sigma^2} n \widehat{\epsilon}_n^2 \Big]$. Thus, Ineq. (\ref{high_proba_bound_finite_rank}) allows to say: 
\begin{equation*}
\lVert f^{\tau} \rVert_{\mathcal{H}} \leq \sqrt{7} R \textnormal{ with probability at least } 1 - 6 \ \exp \left( -\Tilde{c}_3 \frac{R^2}{\sigma^2} n \widehat{\epsilon}_n^2 \right) \textnormal{ for } \Tilde{c}_3 > 0.
\end{equation*}
It implies that $\lVert f^{\tau} - f^* \rVert_{\mathcal{H}} \leq \lVert f^{\tau} \rVert_{\mathcal{H}} + \lVert f^* \rVert_{\mathcal{H}} \leq \left( 1 + \sqrt{7} \right) R$ 
with the same probability. Thus, according to \cite[\textnormal{Theorem 14.1}]{wainwright2019high}, for some positive numeric constants $c_1, \Tilde{c}_{4}, \Tilde{c}_{5}:$
\begin{equation*}
    \lVert f^{\tau} - f^* \rVert_2^2 \leq 2 \lVert f^{\tau} - f^* \rVert_n^2 + c_1 R^2 \epsilon_n^2
\end{equation*}
with probability (w.r.t. $\varepsilon$) at least $1 - 6 \ \textnormal{exp}\left[ - \Tilde{c}_{3}\frac{R^2}{\sigma^2} n \widehat{\epsilon}_n^2 \right]$ and with probability (w.r.t. $\{ x_i \}_{i=1}^n$) at least $1 - \Tilde{c}_{4} \ \textnormal{exp}\left[ -\Tilde{c}_{5} \frac{R^2}{\sigma^2} n \epsilon_n^2 \right]$.

Moreover, the same arguments (with $\alpha = 0$ and without Assumption \ref{sufficient_smoothing}) as in the proof of Theorem \ref{th:3}, \cite[Proposition 14.25]{wainwright2019high} and \cite[Section 4.3.1]{raskutti2014early} yield
\begin{equation}
    \lVert f^{\tau} - f^* \rVert_n^2 \leq c_u R^2 \widehat{\epsilon}_n^2 \leq \widetilde{c}_u R^2 \epsilon_n^2 \lesssim \frac{r \sigma^2}{n}
\end{equation}
with probability at least $1 - c_1 \exp \left[ - c_2 \frac{R^2}{\sigma^2}n \epsilon_n^2 \right]$.
Then by the Cauchy-Schwarz inequality,
\begin{align*}
    \mathbb{E}\lVert f^{\tau} - f^* \rVert_2^2 &= \mathbb{E}\left[ \lVert f^{\tau} - f^* \rVert_2^2 \mathbb{I}\left\{ \lVert f^{\tau} - f^* \rVert_2^2 \leq \frac{c r \sigma^2}{n} \right\} \right] +\\
    &+ \mathbb{E}\left[ \lVert f^{\tau} - f^* \rVert_2^2 \mathbb{I}\left\{ \lVert f^{\tau} - f^* \rVert_2^2 > \frac{c r \sigma^2}{n} \right\} \right]\\
    &\leq \frac{cr\sigma^2}{n} + \sqrt{\mathbb{E}\lVert f^{\tau} - f^* \rVert_2^4}\sqrt{\mathbb{P}\left( \lVert f^{\tau} - f^* \rVert_2^2 > \frac{c r \sigma^2}{n} \right)}.
\end{align*}

Since $f^{\tau} = g_{\lambda(\tau)}(\Sigma_n)S_n^* Y$, where the empirical covariance operator 
\begin{align*}
    \Sigma_n &= \frac{1}{n}\sum_{i=1}^n \mathbb{K}(\cdot, x_i) \otimes \mathbb{K}(\cdot, x_i),\\
    \Sigma_n &= S_n^* S_n.
\end{align*}
and $\gamma_i^{(\tau)} = \widehat{\mu}_i g_{\lambda(\tau)}(\widehat{\mu}_i) \leq 1$, one has
\begin{equation*}
    f^* - f^{\tau} = (I - g_{\lambda(\tau)}(\Sigma_n)\Sigma_n)f^* - g_{\lambda(\tau)}(\Sigma_n)S_n^* \varepsilon,
\end{equation*}
and due to the definition of $\tau$,
\begin{align*}
    \sigma^2 = \lVert (I - g_{\lambda(\tau)}(\Sigma_n)S_n^*)Y \rVert_n^2.
\end{align*}
We know that
\begin{align*}
    \lVert f^{\tau} - f^* \rVert_2^2 &\leq \mu_1 \lVert (I - g_{\lambda(\tau)}(\Sigma_n)\Sigma_n)f^* - g_{\lambda(\tau)}(\Sigma_n)S_n^* \varepsilon \rVert_{\mathcal{H}}^2 \\
    &\leq \lVert (I - g_{\lambda(\tau)}(\Sigma_n)\Sigma_n)f^* - g_{\lambda(\tau)}(\Sigma_n)S_n^* \varepsilon \rVert_{\mathcal{H}}^2,
\end{align*}
and
\begin{align*}
    \sigma^2 &= \lVert (I - S_n g_{\lambda(\tau)}(\Sigma_n)S_n^*)S_n f^* \rVert_n^2 + \lVert (I - S_n g_{\lambda(\tau)}(\Sigma_n)S_n^*)\varepsilon \rVert_n^2 \\
    &+ \underbrace{2\langle (I - S_n g_{\lambda(\tau)}(\Sigma_n)S_n^*)S_n f^*, (I - S_n g_{\lambda(\tau)}(\Sigma_n)S_n^*)\varepsilon \rangle_n}_{\mathcal{A}_n}.
\end{align*}
Further,
\begin{align*}
\lVert (I - g_{\lambda(\tau)}(\Sigma_n)\Sigma_n)f^* &- g_{\lambda(\tau)}(\Sigma_n)S_n^* \varepsilon \rVert_{\mathcal{H}}^2 = \lVert (I - g_{\lambda(\tau)}(\Sigma_n)\Sigma_n)f^* \rVert_{\mathcal{H}}^2 \\ &+ \lVert g_{\lambda(\tau)}(\Sigma_n)S_n^* \varepsilon \rVert_{\mathcal{H}}^2 \\ &- \underbrace{2 \langle (I - g_{\lambda(\tau)}(\Sigma_n)\Sigma_n)f^*, g_{\lambda(\tau)}(\Sigma_n)S_n^* \varepsilon \rangle_{\mathcal{H}}}_{\mathcal{A}_{\mathcal{H}}}.
\end{align*}
Thus, subtracting the empirical term from the RKHS term, one gets
\begin{equation*}
    \lVert (I - g_{\lambda(\tau)}(\Sigma_n)\Sigma_n)f^* - g_{\lambda(\tau)}(\Sigma_n)S_n^* \varepsilon \rVert_{\mathcal{H}}^2 - \sigma^2 = \underbrace{-(\mathcal{A}_{\mathcal{H}} + \mathcal{A}_n)}_{\Delta \mathcal{A}} + \text{norm discrepancy},
\end{equation*}
where $\text{norm discrepancy} = \lVert g_{\lambda(\tau)}(\Sigma_n)S_n^* \varepsilon \rVert_{\mathcal{H}}^2 - \lVert (I - S_n g_{\lambda(\tau)}(\Sigma_n)S_n^*)\varepsilon \rVert_n^2$.

Firstly, $S_n g_{\lambda(\tau)}(\Sigma_n)S_n^* = K_n g_{\lambda(\tau)}(K_n)$, and
\begin{equation*}
    \lVert g_{\lambda(\tau)}(\Sigma_n)S_n^* \varepsilon \rVert_{\mathcal{H}}^2 = \frac{1}{n}\varepsilon^\top K_n^2 [g_{\lambda(\tau)}(K_n)]^2 \varepsilon.
\end{equation*}
Secondly,
\begin{align*}
   \Delta \mathcal{A} &= -2 \langle (I - g_{\lambda(\tau)}(\Sigma_n)\Sigma_n)f^*, g_{\lambda(\tau)}(\Sigma_n)S_n^* \varepsilon \rangle_{\mathcal{H}} \\ 
   &- 2 \langle (I - S_n g_{\lambda(\tau)}(\Sigma_n)S_n^*)S_n f^*, (I - S_n g_{\lambda(\tau)}(\Sigma_n)S_n^*)\varepsilon \rangle_n.
\end{align*}
Thirdly, since $\langle h, S_n^* \mathbf{l} \rangle_{\mathcal{H}} = \langle S_n h, \mathbf{l} \rangle_n$ for any $h \in \mathcal{H}$ and $\mathbf{l} \in \mathbb{R}^n$, and $\langle h, h \rangle_{\mathcal{H}} = \langle S_n h, S_n h \rangle_n$, one gets
\begin{align*}
    \langle (I - g_{\lambda(\tau)}(\Sigma_n)\Sigma_n)f^*, & g_{\lambda(\tau)}(\Sigma_n)S_n^* \varepsilon \rangle_{\mathcal{H}} =  \langle S_n (I - g_{\lambda(\tau)}(\Sigma_n)\Sigma_n)f^*, S_n g_{\lambda(\tau)}(\Sigma_n)S_n^* \varepsilon \rangle_n \\ 
    &= \langle \underbrace{(I - K_n g_{\lambda(\tau)}(K_n))F^*}_{\Tilde{b}_{\lambda(\tau)}^2}, K_n g_{\lambda(\tau)}(K_n)\varepsilon \rangle_n \\
    &\leq \lVert \Tilde{b}_{\lambda(\tau)}^2 \rVert_n \lVert \varepsilon \rVert_n \\
    &\leq  R \lVert \varepsilon \rVert_n.
\end{align*}
Fourthly,
\begin{equation*}
    2 \langle (I - K_n g_{\lambda(\tau)}(K_n))F^*, (I - K_n g_{\lambda (\tau)}(K_n))\varepsilon \rangle_n \leq 2 \lVert \Tilde{b}_{\lambda(\tau)}^2 \rVert_n \lVert \varepsilon \rVert_n \leq 2 R \lVert \varepsilon \rVert_n.
\end{equation*}
Combining everything together and using $\gamma_i^{(\tau)} \leq 1$ for each $i$,
\begin{equation*}
    \lVert (I - g_{\lambda(\tau)}(\Sigma_n)\Sigma_n)f^* - g_{\lambda(\tau)}(\Sigma_n)S_n^* \varepsilon \rVert_{\mathcal{H}}^2 - \sigma^2 \leq \lVert \varepsilon \rVert_n^2 + 4 R \lVert \varepsilon \rVert_n.
\end{equation*}
The last equation implies that
\begin{align*}
    \lVert f^{\tau} - f^* \rVert_2^2 &\leq \sigma^2 + \lVert \varepsilon \rVert_n^2 + 4R \lVert \varepsilon \rVert_n, \\
    \mathbb{E}\lVert f^{\tau} - f^* \rVert_2^4 &\leq 3 \sigma^4 + 16R^2 \sigma^2 + \mathbb{E}\lVert \varepsilon \rVert_n^4 + 8R\mathbb{E}\lVert \varepsilon \rVert_n^3 + 8R\sigma^2 \mathbb{E}\lVert \varepsilon \rVert_n\\
    &\leq 6 \sigma^4 + 24 R \sigma^3 + 16 R^2 \sigma^2.
\end{align*}
As a consequence of the last inequality,
\begin{equation*}
    \mathbb{E}\lVert f^{\tau} - f^* \rVert_2^2 \leq \frac{\widetilde{c} r \sigma^2}{n} + C(\sigma, R)\exp(-cr).
\end{equation*}

\section{Auxiliary results} \label{auxiliary}

\begin{lemma}{\cite[Appendix D]{raskutti2014early}} \label{smallest_positive_solution}
Under Assumptions \ref{a1} and \ref{a2}, for any $\alpha \in [0, 1]$, the function $\epsilon \mapsto \frac{\widehat{\mathcal{R}}_{n, \alpha}(\epsilon, \mathcal{H})}{\epsilon}$ is non-increasing (as a function of $\epsilon$) on the interval $(0, +\infty)$, and consequently, for any numeric constant $c > 0$, 
\begin{equation} \label{aux_f_p_ineq}
    \frac{\widehat{\mathcal{R}}_{n, \alpha}(\epsilon, \mathcal{H})}{\epsilon} \leq c \frac{R^2}{\sigma}\epsilon^{1+\alpha}
\end{equation}
has a smallest positive solution. In addition to that, $\widehat{\epsilon}_{n, \alpha}$ (\ref{empirical_rademacher_complexity_def}) exists, is unique, and satisfies equality in Eq. (\ref{aux_f_p_ineq}).
\end{lemma}

\begin{lemma} \label{connection_t_epsilon}
Under Assumptions \ref{a1}, \ref{a2}, \ref{additional_assumption_gd_krr}, any regular kernel and $\widehat{t}_{\epsilon, \alpha}$ from Definition \ref{t_epsilon_alpha_def} satisfy 
\begin{equation}
    \frac{\sigma^2 \eta \widehat{t}_{\epsilon, \alpha}}{4 R^2} \widehat{\mathcal{R}}_n^2 \left( \frac{1}{\sqrt{\eta \widehat{t}_{\epsilon, \alpha}}}, \mathcal{H} \right) \leq \frac{(1 + C) R^2}{\eta \widehat{t}_{\epsilon, \alpha}}.
\end{equation}

Thus, $\widehat{t}_{\epsilon, \alpha}$ provides a smallest positive solution to the non-smooth version of the critical inequality.
\end{lemma}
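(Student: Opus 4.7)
The plan is to reduce the claim to a statement at the single point $\epsilon^2 = 1/(\eta\widehat{t}_{\epsilon,\alpha}) = \widehat{\epsilon}_{n,\alpha}^2$. Using the explicit form of the (unsmoothed) empirical Rademacher complexity from Eq.~\eqref{empirical_rademacher_complexity_def}, the desired inequality is equivalent to
\[
\frac{\sigma^2}{4 n \widehat{\epsilon}_{n,\alpha}^2} \sum_{i=1}^r \min\{\widehat{\mu}_i,\widehat{\epsilon}_{n,\alpha}^2\} \;\leq\; (1+\mathcal{A})\, R^2\, \widehat{\epsilon}_{n,\alpha}^2,
\]
so all the work is to upper bound $\widehat{\mathcal{R}}_n^2(\widehat{\epsilon}_{n,\alpha},\mathcal{H})$ by a constant multiple of $R^2\widehat{\epsilon}_{n,\alpha}^2$ times the smoothed statistical dimension.

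First, I would split the sum at the smoothed statistical dimension $d_{n,\alpha}$. For $i < d_{n,\alpha}$ one has $\widehat{\mu}_i > \widehat{\epsilon}_{n,\alpha}^2$, so $\min\{\widehat{\mu}_i,\widehat{\epsilon}_{n,\alpha}^2\} = \widehat{\epsilon}_{n,\alpha}^2$ and the head contributes at most $d_{n,\alpha}\,\widehat{\epsilon}_{n,\alpha}^2$. For $i \geq d_{n,\alpha}$ the minimum equals $\widehat{\mu}_i$; the boundary term satisfies $\widehat{\mu}_{d_{n,\alpha}} \leq \widehat{\epsilon}_{n,\alpha}^2$, and the remaining tail is controlled by Assumption~\ref{assumption_on_variance}, giving $\sum_{i > d_{n,\alpha}} \widehat{\mu}_i \leq \mathcal{A}\, d_{n,\alpha}\,\widehat{\epsilon}_{n,\alpha}^2$. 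Adding the two contributions yields
\[
\widehat{\mathcal{R}}_n^2(\widehat{\epsilon}_{n,\alpha},\mathcal{H}) \;\leq\; \frac{R^2(1+\mathcal{A})\, d_{n,\alpha}\, \widehat{\epsilon}_{n,\alpha}^2}{n}.
\]

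Second, the plan is to bound $d_{n,\alpha}$ by means of the smoothed critical equality. Because $\widehat{\epsilon}_{n,\alpha}$ achieves equality in Ineq.~\eqref{fixed_point_smooth}, the second relation of \eqref{useful_inequalities_smooth_} reads $\sum_{j=1}^{d_{n,\alpha}} \widehat{\mu}_j^\alpha \leq 4 R^2 n \widehat{\epsilon}_{n,\alpha}^{2(1+\alpha)} / \sigma^2$. On the other hand, monotonicity of the eigenvalues and the definition of $d_{n,\alpha}$ give $\widehat{\mu}_j^\alpha \geq \widehat{\epsilon}_{n,\alpha}^{2\alpha}$ for the indices in the head of this sum, which produces the lower bound $\sum_{j=1}^{d_{n,\alpha}} \widehat{\mu}_j^\alpha \geq d_{n,\alpha}\, \widehat{\epsilon}_{n,\alpha}^{2\alpha}$. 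Combining these two yields $\sigma^2 d_{n,\alpha}/(4n) \leq R^2\,\widehat{\epsilon}_{n,\alpha}^2$, which, plugged into the previous display, gives exactly the claim after re-expressing $\widehat{\epsilon}_{n,\alpha}^2 = 1/(\eta\widehat{t}_{\epsilon,\alpha})$. The concluding remark that $\widehat{t}_{\epsilon,\alpha}$ is the smallest positive solution of the non-smooth critical inequality then follows from the monotonicity of both sides in $t$ already established in Lemma~\ref{smallest_positive_solution}.

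The main delicate point is the treatment of the boundary index $d_{n,\alpha}$: depending on whether $\widehat{\mu}_{d_{n,\alpha}}$ is strictly smaller than $\widehat{\epsilon}_{n,\alpha}^2$ or not, the lower bound $\sum_{j=1}^{d_{n,\alpha}}\widehat{\mu}_j^\alpha \geq d_{n,\alpha}\,\widehat{\epsilon}_{n,\alpha}^{2\alpha}$ may be off by one index, producing an additive residual of order $\sigma^2/(4n)$. This is benign: in the regime where $d_{n,\alpha}\geq 1$, the second part of \eqref{useful_inequalities_smooth_} combined with $\widehat{\mu}_1\leq 1$ already forces $R^2\widehat{\epsilon}_{n,\alpha}^2 \gtrsim \sigma^2/n$, so the stray term can be absorbed into the constant $(1+\mathcal{A})$ without altering the statement.
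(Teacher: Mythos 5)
Your proof is correct and follows essentially the same path as the paper's. Both arguments evaluate the left-hand side at $\widehat{\epsilon}_{n,\alpha}^2 = 1/(\eta\widehat{t}_{\epsilon,\alpha})$, split the sum $\sum_i \min\{\widehat{\mu}_i,\widehat{\epsilon}_{n,\alpha}^2\}$ at $d_{n,\alpha}$, control the tail $\sum_{i>d_{n,\alpha}}\widehat{\mu}_i$ through Assumption~\ref{assumption_on_variance}, and use the eigenvalue lower bound $\widehat{\mu}_i^\alpha\geq\widehat{\epsilon}_{n,\alpha}^{2\alpha}$ on the head together with the smoothed critical equality to control $d_{n,\alpha}$. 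The difference is one of presentation: the paper keeps the smoothed critical equality in ratio form (Eqs.~\eqref{auxil_equations}--\eqref{useful_lines_bounding_}), drops a positive tail term from the denominator, and compares the remaining numerator and denominator term by term; you instead extract an explicit bound $d_{n,\alpha}\lesssim R^2 n\widehat{\epsilon}_{n,\alpha}^2/\sigma^2$ from the second part of Ineq.~\eqref{useful_inequalities_smooth_} and plug it into a clean decomposition of $\widehat{\mathcal{R}}_n^2(\widehat{\epsilon}_{n,\alpha},\mathcal{H})\leq (1+\mathcal{A})R^2 d_{n,\alpha}\widehat{\epsilon}_{n,\alpha}^2/n$. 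The two are algebraically equivalent; yours is arguably a bit more transparent about why the smoothed critical equality constrains the statistical dimension. You also explicitly flag the off-by-one blur at the boundary index $d_{n,\alpha}$ (where $\widehat{\mu}_{d_{n,\alpha}}\leq\widehat{\epsilon}_{n,\alpha}^2$, so $\widehat{\mu}_{d_{n,\alpha}}^\alpha\geq\widehat{\epsilon}_{n,\alpha}^{2\alpha}$ can fail); the paper's term-by-term comparison in the step $\sum_{i\leq d_{n,\alpha}}\widehat{\epsilon}_{n,\alpha}^{4+2\alpha}\leq\widehat{\epsilon}_{n,\alpha}^4\sum_{i\leq d_{n,\alpha}}\widehat{\mu}_i^\alpha$ has the same slight imprecision, so acknowledging it is a point in your favor rather than a genuine divergence. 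One caution: your absorption argument at the end relies on $R^2\widehat{\epsilon}_{n,\alpha}^2\gtrsim\sigma^2/n$, which the cited inequalities only establish when $d_{n,\alpha}\geq 2$; when $d_{n,\alpha}=1$ the head of the eigenvalue sum gives no such lower bound, and the same subtlety is tacitly present in the paper's version.
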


\begin{proof}[Proof of Lemma \ref{connection_t_epsilon}]
First, we recall that $\frac{\sigma^2 \eta \widehat{t}_{\epsilon, \alpha}}{4R^2} \widehat{\mathcal{R}}_{n, \alpha}^2 \left( \frac{1}{\sqrt{\eta \widehat{t}_{\epsilon, \alpha}}}, \mathcal{H} \right) = R^2 \widehat{\epsilon}_{n, \alpha}^{2(1+\alpha)}$. Then for $d_{n, \alpha} = \min \{ j \in [n]: \widehat{\mu}_j \leq \widehat{\epsilon}_{n, \alpha}^2 \}$,
\begin{align} \label{auxil_equations}
    \begin{split}
    \frac{\sigma^2 \eta \widehat{t}_{\epsilon, \alpha}}{4 R^2} \widehat{\mathcal{R}}_{n, \alpha}^2 \left( \frac{1}{\sqrt{\eta \widehat{t}_{\epsilon, \alpha}}}, \mathcal{H} \right) &= \frac{\sigma^2}{4 n \widehat{\epsilon}_{n, \alpha}^2} \sum_{i=1}^n \widehat{\mu}_i^{\alpha} \min \{ \widehat{\mu}_i, \widehat{\epsilon}_{n, \alpha}^2 \} \\
    &= \frac{\sigma^2}{4 n \widehat{\epsilon}_{n, \alpha}^2}\left[ \widehat{\epsilon}_{n, \alpha}^2 \sum_{i=1}^{d_{n, \alpha}} \widehat{\mu}_i^{\alpha} + \sum_{i=d_{n, \alpha} + 1}^n \widehat{\mu}_i^{1+\alpha} \right] \\
    &= R^2 \widehat{\epsilon}_{n, \alpha}^{2(1+\alpha)}.
    \end{split}
\end{align}
The last two lines of \eqref{auxil_equations} yield $\frac{\sigma^2}{4 n \widehat{\epsilon}_{n, \alpha}^2} = \frac{R^2 \widehat{\epsilon}_{n, \alpha}^{2(1+\alpha)}}{\widehat{\epsilon}_{n, \alpha}^2 \sum_{i=1}^{d_{n, \alpha}} \widehat{\mu}_i^{\alpha} + \sum_{i=d_{n, \alpha} + 1}^n \widehat{\mu}_i^{1+\alpha}}$.

Second, consider the left-hand part of the non-smooth version of the critical inequality (\ref{critical_inequality_in_t}) at $t = \widehat{t}_{\epsilon, \alpha}$:
\begin{align} \label{useful_lines_bounding_}
    \begin{split}
    \frac{\sigma^2 \eta \widehat{t}_{\epsilon, \alpha}}{4 R^2} \widehat{\mathcal{R}}_n^2 \left( \frac{1}{\sqrt{\eta \widehat{t}_{\epsilon, \alpha}}}, \mathcal{H} \right) &= \frac{\sigma^2}{4 n \widehat{\epsilon}_{n, \alpha}^2} \sum_{i=1}^n \min \{ \widehat{\mu}_i, \widehat{\epsilon}_{n, \alpha}^2 \}\\
    &\leq R^2 \frac{\sum_{i=1}^{d_{n, \alpha}} \widehat{\epsilon}_{n, \alpha}^{4+2\alpha} + \widehat{\epsilon}_{n, \alpha}^{2(1+\alpha)} \sum_{i = d_{n, \alpha} + 1}^n \widehat{\mu}_i}{\widehat{\epsilon}_{n, \alpha}^2 \sum_{i=1}^{d_{n, \alpha}} \widehat{\mu}_i^{\alpha} }.
    \end{split}
\end{align}
Notice that $\widehat{\mu}_i \geq \widehat{\epsilon}_{n, \alpha}^2$, and $\widehat{\mu}_i^{\alpha} \geq \widehat{\epsilon}_{n, \alpha}^{2\alpha}$, for $i \leq d_{n, \alpha}$. This implies $\sum_{i=1}^{d_{n, \alpha}} \widehat{\epsilon}_{n, \alpha}^{4+2\alpha} \leq \widehat{\epsilon}_{n, \alpha}^4 \sum_{i=1}^{d_{n, \alpha}}\widehat{\mu}_i^{\alpha}$, and also that $\sum_{i = d_{n, \alpha} + 1}^n \widehat{\mu}_i \leq C\widehat{\epsilon}_{n, \alpha}^{2(1-\alpha)}\sum_{i=1}^{d_{n, \alpha}}\widehat{\mu}_i^{\alpha}$ since the kernel is regular.
Hence, 
\begin{align*}
    \widehat{\epsilon}_{n, \alpha}^{2\alpha} \sum_{i=d_{n, \alpha} + 1}^n \widehat{\mu}_i &\leq C\widehat{\epsilon}_{n, \alpha}^2 \sum_{i=1}^{d_{n, \alpha}}\widehat{\mu}_i^{\alpha},
    \end{align*}
which leads to the desired upper bound with $\widehat{\epsilon}_{n, \alpha}^2=(\eta \widehat{t}_{\epsilon, \alpha})^{-1}$:
    \begin{equation*}
    \frac{\sigma^2 \eta \widehat{t}_{\epsilon, \alpha}}{4R^2}\widehat{\mathcal{R}}_n^2 \left( \frac{1}{\sqrt{\eta \widehat{t}_{\epsilon, \alpha}}}, \mathcal{H} \right) \leq (1 + C)R^2 \widehat{\epsilon}_{n, \alpha}^2.
\end{equation*}
\end{proof}
\section{Proof of Lemma \ref{var_est}} \label{lemma_var_est}
Let us prove the lemma only for kernel ridge regression. W.l.o.g. assume that $\eta = R = \sigma = 1$ and notice that 
\begin{align}
    \begin{split}
    \mathbb{E}_{\varepsilon} \left[ \frac{R_{t}}{1/n \sum_{i=1}^n (1 - \gamma_i^{(t)})^2} \right] = \sigma^2 + \frac{B^2(t)}{1/n \sum_{i=1}^n (1 - \gamma_i^{(t)})^2}.
    \end{split}
\end{align}
From Lemma \ref{smooth_bias_upper_bound}, $B^2(t) \leq \frac{1}{t}$. As for the denominator,
\begin{equation*}
\frac{1}{n}\sum_{i=1}^n (1 - \gamma_i^{(t)})^2 = \frac{1}{n}\sum_{i=1}^n \frac{1}{(1 + t\widehat{\mu}_i)^2}.  
\end{equation*}
With the parameterization $t = \frac{1}{\epsilon^2}$ and $d_{n, 0} = \min \left\{ j \in [n]: \widehat{\mu}_j \leq \widehat{\epsilon}_n^2 \right\}$, since $\gamma_i^{(t)}, i = 1, \ldots, n$, is a non-decreasing function in $t$,
\begin{align*}
    \frac{B^2(t)}{1/n \sum_{i=1}^n (1 - \gamma_i^{(t)})^2} &\leq \frac{1}{\frac{1}{n \widehat{\epsilon}_n^2}\sum_{i=1}^n \left( \frac{\widehat{\epsilon}_n^2}{ \widehat{\mu}_i + \widehat{\epsilon}_n^2}\right)^2}\\
    &\leq \frac{1}{\frac{n-d_{n, 0}}{4n\widehat{\epsilon}_n^2}}.
\end{align*}
From \cite[Section 2.3]{yang2017randomized}, $d_{n, 0} = c n \widehat{\epsilon}_n^2$, which implies
\begin{equation*}
    \frac{B^2(t)}{1/n \sum_{i=1}^n (1 - \gamma_i^{(t)})^2} \leq \frac{4\widehat{\epsilon}_n^2}{1 - c\widehat{\epsilon}_n^2} \to 0.
\end{equation*}

\begin{acks}[Acknowledgments]
The authors would like to thank the anonymous referees, an Associate
Editor and the Editor for their constructive comments that improved the
quality of this paper.
\end{acks}

\bibliographystyle{imsart-number} 
\bibliography{vtex-soft-texsupport.ims_cosponsored-ejs-735f6cf/ejs-sample}       

%
%
%
%

\end{document}